\documentclass{article} 
\usepackage{iclr2026_conference,times}

\usepackage{amsmath,amsfonts,bm}

\def\eqref#1{equation~\ref{#1}}

\def\1{\bm{1}}

\DeclareMathAlphabet{\mathsfit}{\encodingdefault}{\sfdefault}{m}{sl}
\SetMathAlphabet{\mathsfit}{bold}{\encodingdefault}{\sfdefault}{bx}{n}

\DeclareMathOperator*{\argmax}{arg\,max}

\usepackage{hyperref}
\usepackage{url}
\usepackage{enumitem}
\usepackage{tabularx}
\usepackage{booktabs}
\usepackage{graphicx}
\usepackage{subcaption}
\usepackage{multirow} 
\usepackage{calc}
\usepackage{amsmath,amssymb,amsthm}
\usepackage{wrapfig,lipsum}
\usepackage{listings}
\usepackage{colortbl}
\usepackage{xcolor}

\definecolor{backcolour}{rgb}{0.95,0.95,0.92}
\lstdefinestyle{mystyle}{
    backgroundcolor=\color{backcolour},   
    basicstyle=\ttfamily\footnotesize\bfseries,
    breakatwhitespace=false,         
    breaklines=true,                 
    captionpos=t,                    
    keepspaces=true,                 
    numbers=left,                    
    numbersep=5pt,                  
    showspaces=false,                
    showstringspaces=false,
    showtabs=false,                  
    tabsize=2
}
\newlength{\PanelH}
\newlength{\CapH}
\newtheorem{definition}{Definition}

\newtheorem{theorem}{Theorem}
\newtheorem{lemma}{Lemma}
\newtheorem{remark}{Remark}
\newtheorem{assumption}{Assumption}
\newtheorem{corollary}{Corollary}
\newtheorem{proposition}{Proposition}

\title{Relatron: Automating Relational Machine Learning over relational databases}

\author{Zhikai Chen$^{1}$\thanks{Work done while interning at Amazon. Email: \texttt{chenzh85@msu.edu}}, Han Xie$^2$, Jian Zhang$^2$, Jiliang Tang$^1$, Xiang Song$^2$, \\ \textbf{Huzefa Rangwala$^{2}$\thanks{Corresponding author.}}  \\
$^1$Michigan State University  \quad $^2$Amazon \\
}

\newcommand{\lin}{\mathrm{lin}}   
\newcommand{\gate}{\mathrm{gate}} 

\iclrfinalcopy 
\begin{document}

\maketitle

\begin{abstract}

Predictive modeling over relational databases (RDBs) powers applications in various domains,
yet remains challenging due to the need to capture both cross-table dependencies and complex feature interactions. Recent Relational Deep Learning (RDL) methods automate feature engineering via message passing, while classical approaches like Deep Feature Synthesis (DFS) rely on predefined non-parametric aggregators.
Despite promising performance gains, the comparative advantages of RDL over DFS and the design principles for selecting effective architectures remain poorly understood.
We present a comprehensive study that unifies RDL and DFS in a shared design space and conducts large-scale architecture-centric searches across diverse RDB tasks. Our analysis yields three key findings: (1) RDL does not consistently outperform DFS, with performance being highly task-dependent; (2) no single architecture dominates across tasks, underscoring the need for task-aware model selection; and (3) validation accuracy is an unreliable guide for architecture choice.
This search yields a curated model performance bank that links model architecture configurations to their performance; leveraging this bank, we analyze the drivers of the RDL–DFS performance gap and introduce two task signals—RDB task homophily and an affinity embedding that captures size, path, feature, and temporal structure—whose correlation with the gap enables principled routing. Guided by these signals, we propose Relatron, a task embedding-based meta-selector that first chooses between RDL and DFS and then prunes the within-family search to deliver strong performance. Lightweight loss-landscape metrics further guard against brittle checkpoints by preferring flatter optima. In experiments, Relatron resolves the “\textit{more tuning, worse performance}” effect and, in joint hyperparameter–architecture optimization, achieves up to 18.5\% improvement over strong baselines with $10\times$ lower computational cost than Fisher information–based alternatives. Our code is available at \url{https://github.com/amazon-science/Automating-Relational-Machine-Learning}.

\end{abstract}

\section{Introduction}

Relational databases~\citep{codd2007relationalRDB, harrington2016relationalRDB} have served as the foundation of data management for decades, organizing interconnected information through tables, primary keys, and foreign keys~\citep{harrington2016relationalRDB}. Their support for data integrity, consistency, and complex SQL queries has made them essential across healthcare~\citep{white2020pubmed, johnson2016mimic}, academic research~\citep{melvin2025relational}, and business applications~\citep{stroe2011mysql}. However, as data volume and complexity grow, traditional analytics fall short, creating demand for machine learning to identify patterns, automate decisions, and generate scalable insights.
The conventional approach requires practitioners to manually export and flatten relational data into single tables through custom joins and feature engineering~\citep{lam2017one}  before applying tabular ML methods.

At the macro level, two lines of work aim to reduce manual flattening and feature engineering in RDBs: (i) deep feature synthesis (DFS)~\citep{kanter2015deep} and (ii) relational deep learning (RDL)~\citep{robinson2024relbench, pmlr-v235-fey24a}. Both operate on heterogeneous entity–relation graphs induced from the underlying database schema, where \emph{rows} are represented as nodes typed by their tables and foreign-key links are represented as typed edges. DFS programmatically composes relational primitives (e.g., aggregations along join paths) to produce a single feature table on which a standard tabular learner is trained. RDL trains graph neural networks (GNNs)~\citep{kipf2017semisupervised, hamilton2017inductive} end-to-end on this heterogeneous graph, learning task-specific aggregations via message passing.  Empirically, both families exploit the relational structure and can surpass relation-agnostic baselines on several RDB benchmarks~\citep{wang2024dbinfer, robinson2024relbench}.

However, no comprehensive comparison exists between these paradigms to clarify 
when each performs better or their relative advantages for different task types.
\footnote{The graph machine learning models studied in \citet{wang2025griffin} 
differ from RDL~\citep{robinson2024relbench} in implementation details, 
discussed further in Appendix~\ref{app: difference}.} Practitioners currently 
lack principled guidelines for choosing between DFS and RDL when tackling 
relational database prediction tasks. Additionally, methods for selecting 
specific design components—such as message-passing functions in RDL or tabular 
models in DFS—remain largely unexplored. These gaps make architecture selection 
for RDB tasks a labor-intensive process that relies heavily on expert 
knowledge.

\textbf{Design space and evaluation.} To bridge this gap, we first propose a representative design space for RDL and DFS: for the former, we decompose models into (1) feature encoding/augmentation, (2) message passing, and (3) task-specific readouts; for DFS, we use non-parametric feature engineering paired with a tabular model. We conduct an architecture-centric search—a grid over architecture choices with sampled hyperparameters—to build a performance bank. Key findings: (1) Brute-force search outperforms from-scratch RDL baselines, validating the proposed design space. (2) At the macro level (DFS vs RDL), RDL wins on more tasks (though both have distinct strengths); at the micro level (fine-grained model architectures), neither family has a single best design. (3) Validation performance can be unreliable for selection, leading to degraded test performance. \\
\textbf{Automatic architecture selection.} We identify factors that drive the performance gap and use them to design the \textbf{Relatron}, an architecture selector with strong test generalization. We introduce an RDB-task homophily metric that correlates strongly with the performance gap between DFS and RDL, further enriched with training-free affinity embeddings that capture task table size, structural affinity, and temporal dynamics. We further observe that the generalization behavior of configurations (validation-selected vs. test-selected) is reflected in the loss landscape geometry. Accordingly, we propose a landscape-derived metric for more reliable post-selection. Combined, our pipeline performs strongly on real-world RDB tasks, matching or exceeding prior methods~\citep{cao2023autotransfer, achille2019task2vec} in task-embedding quality and in predicting whether RDL or DFS is preferable; for joint hyperparameter and architecture search, it outperforms strong baselines, including search-based and task-embedding-based ones~\citep{cao2023autotransfer, bischl2023hyperparameterHPOSurvey}, while using up to 10x less compute resources than task-embedding-based methods.

Our contributions can be summarized as follows.
\begin{enumerate}[noitemsep,topsep=0pt,parsep=0pt,partopsep=0pt,leftmargin=2em]
\item We propose a representative model design space for RDB predictive tasks, featuring promising performance, and generate a model performance bank that links model architecture configurations to task performance for future research. 
\item Based on a comprehensive search on the model design space, we point out the limitations of RDL, and propose a routing method to select between RDL and DFS for RDB predictive tasks automatically. Furthermore, we analyze the factors that drive the performance gap between RDL and DFS, and these insights can inspire further research, such as the development of relational foundation models.
\item Through extensive experiments, we validate the effectiveness of our pipelines in tasks such as predicting proper architectures and joint hyperparameter-architecture search.
\end{enumerate}

\begin{figure*}[t]
  \centering
  \begin{subfigure}[t]{0.4\textwidth}
    \centering
    \includegraphics[width=\linewidth,keepaspectratio]{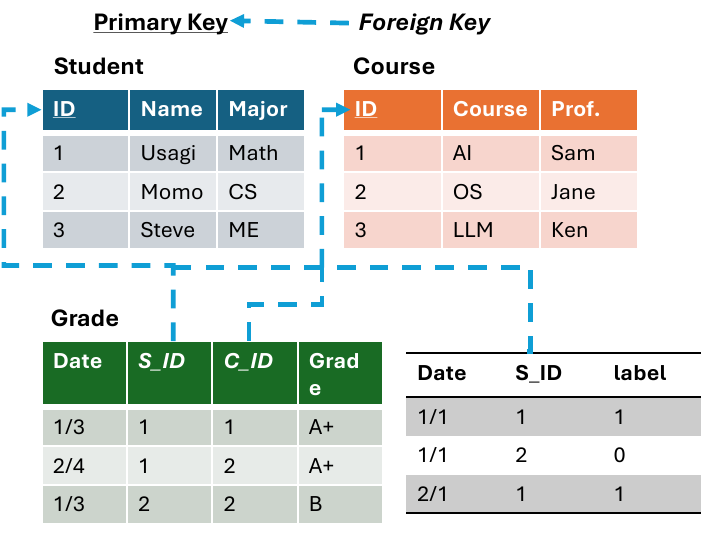}
    \caption{}
    \label{fig:figure1:a}
  \end{subfigure}
  \hfill
  \begin{subfigure}[t]{0.55\textwidth}
    \centering
    \includegraphics[width=\linewidth,keepaspectratio]{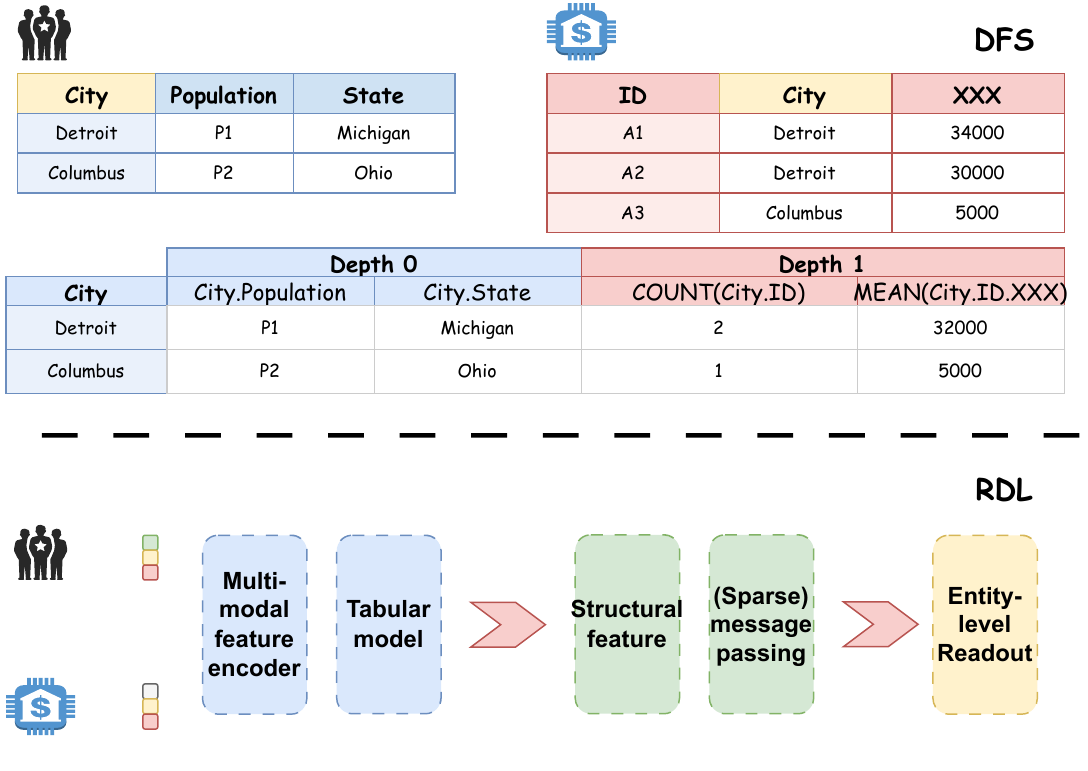}
    \caption{}
    \label{fig:figure1:b}
  \end{subfigure}
  \caption{(a) An example of generating the task table from an RDB. The label is based on whether a student has achieved an A+ in a course before a specific timestamp. (b) Another example demonstrating the working process of DFS and RDL. For DFS, a predefined set of aggregation functions, such as \textsc{MEAN} and \textsc{COUNT}, is used to aggregate information across multiple tables based on key relationships into a final data table. For comparison, RDL is claimed to replace the manual aggregation design with an automatic message-passing-based sparse attention.}
  \vspace{-1em}
  \label{fig:figure1}
\end{figure*}

\section{Related Work and Background}

In this section, we present related works necessary for understanding the following paper contents, and put other related works in Appendix~\ref{app: mrw}. 

\textbf{Relational Deep Learning (RDL).}\footnote{RDL denotes both the learning paradigm and the problem setting; we use RDL for the former and RDB for the latter.}~\citep{robinson2024relbench, pmlr-v235-fey24a} applies graph machine learning to relational databases. RDB prediction has three key traits (Figure~\ref{fig:figure1}): (1) time is first-class—labels are split and conditioned on time;
(2) labels are defined by time-constrained SQL over arbitrary column combinations; (3) heterogeneous column types make feature interactions richer than in text-attributed or non-attributed graphs. These choices mirror real industrial settings. \citet{wang2024dbinfer, wang2025griffin} offer a view closer to traditional heterogeneous GNNs; although \citet{wang2025griffin} reports results on Relbench~\citep{robinson2024relbench}, the modeling and evaluation setups differ. We therefore re-implement all methods in the unified framework for fair comparison. Some recent RDL works center on specialized models for RDL, including higher-order message passing~\citep{chen2025relgnn-64c} and recommendation~\citep{yuan2024contextgnn-b14}. Transformers and LLMs~\citep{dwivedi2025relational-e49, wu2025large-809, wydmuch2024tackling-539} have been tested, but are resource-heavy with modest gains. Beyond RDB-specific studies, broader graph foundation model and graph-LLM literature provides useful context on transferable graph pre-training and prompting paradigms~\citep{mao2024gfm-position, chen2023llm-graph-survey, wang2025gml-llm-era}. Foundation models include Griffin~\citep{wang2025griffin}, which uses cross-table attention yet often fails to beat GNNs, and KumoRFM~\citep{feykumorfm}, a graph transformer with strong performance and in-context learning, though details remain undisclosed. In this paper, we focus on efficient models training from scratch, leaving foundation models for future work.\\
\textbf{Deep Feature Synthesis (DFS).} Compared to RDL, DFS~\citep{kanter2015deep} is an often-overlooked approach that aggregates cross-table information into a single target table via automated feature engineering~\citep{DAFEE,lam2017one,lam2018neural}. It underpins commercial systems such as getml\footnote{https://getml.com/latest/}.
Given a target table and a schema graph, DFS traverses foreign-key–primary-key links and composes type-aware primitives into feature definitions. Transform primitives operate on single columns, while aggregation primitives (e.g., statistics such as MAX, MIN, MODE) summarize sets of related rows; compositions along schema paths yield higher-order features. For time-indexed tasks, DFS evaluates every recipe under a per-row cutoff time, ensuring that only information available in the past contributes to the feature value, thereby avoiding leakage.

\vspace{-1em}
\section{design space of model architectures over RDB}
\vspace{-.5em}
\label{sec:design}

Architecture selection begins by giving an architecture design space. This section introduces the task and design space, then presents evaluation results and observations from the exploration.

\subsection{Predictive Tasks on RDBs}
\label{sec:pre}

\textbf{Problem definition.}
A relational database (RDB) is a tuple $\mathcal{D} = (\mathcal{T}, \mathcal{L})$, where $\mathcal{T} = \{T_1, \dots, T_n\}$ is a set of tables and $\mathcal{L} \subseteq \mathcal{T} \times \mathcal{T}$ is a set of links between them. Each table $T_i \in \mathcal{T}$ consists of rows (entities) $\{v_1, \dots, v_{m_i}\}$. Links are related to primary keys (PKs) and foreign keys (FKs). A PK $p_v$ uniquely identifies a row, while a FK establishes a link to a row in another table by referencing its PK.
Each row also has a set of non-key attributes, $x_v$, and an optional timestamp, $t_v$.
A temporal predictive task $\Pi_{t_{pred}}$ with respect to time $t_{pred}$ can be defined over two granularities. \textit{Entity-level prediction} learns a function $f: \mathcal{D}_{t_{pred}} \times V_{target} \to \mathcal{Y}$ that maps entities from a target set $V_{target} \subseteq T_i$ to a label space $\mathcal{Y}$. \textit{Link-level prediction} determines the existence of a link between two entities, $v_i \in T_i$ and $v_j \in T_j$, at time $t_{pred}$ by learning a function $f: \mathcal{D}_{t_{pred}} \times T_i \times T_j \to \{0, 1\}$. \\
RDB tables can be categorized into fact tables and dimension tables. A fact table stores events or transactions (e.g., purchases, clicks, race results) with many rows, and each typically carries foreign keys to several entities. A dimension table stores descriptive attributes about these entities (e.g., customer, product, time, circuit/driver), typically with one row per entity/state, and a PK that is referenced by facts. \\
\textbf{Graph perspective of RDB~\citep{robinson2024relbench}.}  Each RDB and a corresponding predictive task can be viewed as a temporal graph $\mathcal{G}_{(-\infty,T]}=(V_{(-\infty,T]},\mathcal{E}_{(-\infty,T]},\phi,\psi,f_{V}, f_{E})$, paired with task labels $Y$. $V_{(-\infty,T]}$ and $\mathcal{E}_{(-\infty,T]}$ are entities and links at time $t \leq T$. $\phi$ maps each entity to its node type, $\psi$ maps each link to its link type. $f_{V}$ and $f_{E}$ are the mappings of features. 

\textbf{Datasets and tasks.} We consider a diverse set of datasets and tasks from recent works \citep{robinson2024relbench, wang2024dbinfer, chen2025autog-fe5}. Adopting the taxonomy from \citet{robinson2024relbench}, we categorize these tasks into four types: entity classification, entity regression, recommendation, and autocomplete. Entity-level tasks (classification and regression) involve predicting entity properties at a given time $t_{pred}$. Recommendation tasks focus on ranking the relevance between pairs of entities at $t_{pred}$. The autocomplete task involves predicting masked information in table columns. A comprehensive description of each dataset and task is available in Appendix~\ref{app: datasets}. We illustrate the generation of an entity-level task in Figure~\ref{fig:figure1}. 
\vspace{-.5em}
\subsection{Model architecture design space}
\vspace{-.5em}

\textbf{Architecture choice.} As shown in Table~\ref{tab:rdl-dfs-design-space}, to enable a fair comparison between DFS and RDL on RDB benchmarks, we construct a compact, factorized design space for each family. \textbf{RDL} models are built from three modules: (i) a structural-feature encoder (partial labeling trick, learnable embedding, or no augmentation)~\citep{yuan2024contextgnn-b14, zhu2021neural}, (ii) a message-passing network (PNA, HGT, SAGE, or RelGNN)~\citep{corso2020principal, hu2020heterogeneous, robinson2024relbench, chen2025relgnn-64c}, and (iii) a readout head (MLP, ContextGNN, or a shallow aggregator)~\citep{yuan2024contextgnn-b14}. Standard training hyperparameters such as learning rate, dropout, normalization, and neighbor fanout are also tuned. \textbf{DFS} methods are parameterized by three main knobs: the SQL-level aggregation function (e.g., max/min/mode), the number of aggregation layers (1--3 when supported), and the backbone model (TabPFN, LightGBM, or FT-Transformer), with batch size and hidden dimension included as additional hyperparameters. Architectural components are explored via grid search, while other hyperparameters are sampled from a smaller space (see Appendix~\ref{app: dds}).  

\textbf{Design motivation.} While it is not feasible to exhaustively cover all designs in graph machine learning (GML), our design space spans representative components. For message passing alone, this includes vanilla message passing, self-attention mechanisms, multi-aggregator schemes, and higher-order approaches. Importantly, classical GML architectures gain renewed significance in RDB tasks. For example, PNA, originally devised for molecular graphs, is well-suited for RDB tasks: its multi-aggregation mechanisms naturally capture diverse feature interaction patterns, echoing the strengths of DFS-based approaches. In Appendix~\ref{app: dds}, we provide a more detailed description of each component and explain why some GML designs are not suitable for RDB tasks.

\vspace{-0.5em}
\begin{table}[h]
\caption{Search space of RDL and DFS-based methods for RDB tasks. \underline{Underline} means these components will go over a grid search, while other components will be sampled.}
\label{tab:rdl-dfs-design-space}
\resizebox{\linewidth}{!}{%
\begin{tabular}{@{}l|cccc@{}}
\toprule
Name & \multicolumn{3}{c|}{\textbf{\underline{Architecture design space}}} & Hyper-parameters \\ \midrule
\multirow{2}{*}{\textbf{RDL}} 
    & \multicolumn{1}{c|}{\cellcolor{gray!15}Structural feature} 
    & \multicolumn{1}{c|}{\cellcolor{gray!15}Message passing} 
    & \multicolumn{1}{c|}{\cellcolor{gray!15}Readout} 
    & \multirow{2}{*}{\begin{tabular}[c]{@{}c@{}}Learning rate, dropout, \\ normalization, fanout...\end{tabular}} \\ \cmidrule(lr){2-4}
    & \multicolumn{1}{c|}{Labeling ID, Learnable embedding, None} 
    & \multicolumn{1}{c|}{PNA, HGT, Sage, RelGNN} 
    & \multicolumn{1}{c|}{MLP, ContextGNN, Shallow} 
    & \\ \midrule
\multirow{2}{*}{\textbf{DFS}} 
    & \multicolumn{1}{c|}{\cellcolor{gray!15}Aggregation function} 
    & \multicolumn{1}{c|}{\cellcolor{gray!15}Aggregation layers} 
    & \multicolumn{1}{c|}{\cellcolor{gray!15}Backbone} 
    & \multirow{2}{*}{\begin{tabular}[c]{@{}c@{}}Batch size, hidden dimension\\ ...\end{tabular}} \\ \cmidrule(lr){2-4}
    & \multicolumn{1}{c|}{Max, Min, Mode, ... (fixed)} 
    & \multicolumn{1}{c|}{1, 2, 3 (if possible)} 
    & \multicolumn{1}{c|}{TabPFN, LightGBM, FT-Transformer} 
    & \\ \bottomrule
\end{tabular}
}
\vspace{-1em}
\end{table}

\subsection{Empirical study of various architecture designs}
\label{sec: design-space}
\textbf{Evaluation setup.} For entity-level tasks, we sample $15$ configurations per architecture combination ($180$ per task). For recommendation, we sample $10$ configurations. For DFS, we use the \citet{robinson2024relbench} HPO utility with $20$ trials per design (TabPFN requires no tuning). Following \citet{robinson2024relbench}, we train up to $20$ epochs, capping each epoch at $1{,}000$ steps (recommendation) or $500$ (entity-level), using Adam~\citep{kingma2014adam} with an optional exponential LR scheduler. Efficiency is not our main focus, and the only efficiency constraint is that the model can fit a single L40S GPU (48GB). See Appendix~\ref{app: efficiency} for more discussions on extending our pipelines to efficiency-aware scenarios. In this section, we only report entity-level results. The recommendation results are presented in Appendix~\ref{app: sup-exp} since the architecture choice there is less important. Our evaluation reveals the following key insights:

\textbf{Observation 1. Efficacy and necessity of the design space.} We first validate our proposed design space by examining \textbf{the best possible test performance}. We compare our design space's performance with that of baseline models reported in the literature, including GraphSAGE~\citep{robinson2024relbench}, RelGNN~\citep{chen2025relgnn-64c}, RelGT~\citep{dwivedi2025relational-e49}, KumoRFM~\citep{feykumorfm}, and RelLLM~\citep{wu2025large-809} on $17$ Relbench~\citep{robinson2024relbench} tasks. These strong baselines serve to highlight that our design space achieves competitive results. 
\captionsetup{skip=2pt}
\begin{figure}[htbp]
    \centering
    \vspace{-1em}
    \includegraphics[width=\textwidth]{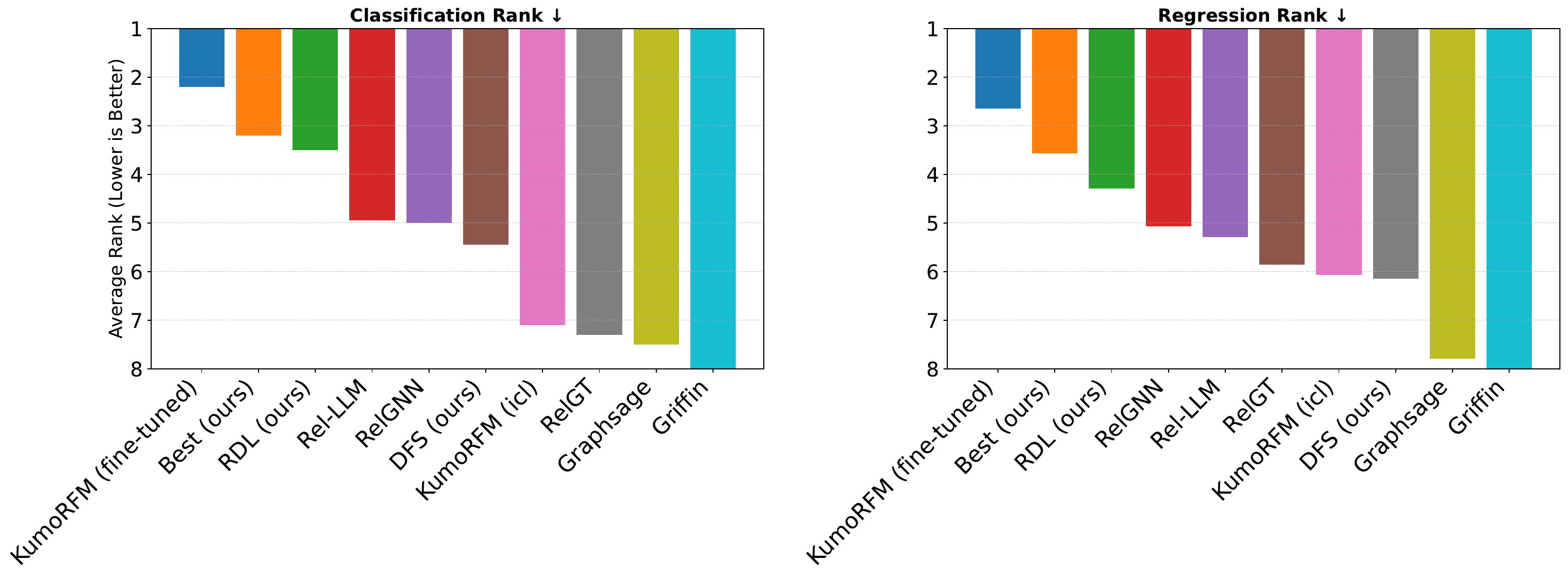}
    \caption{Performance comparison between the best configurations from our design space and baseline models on entity-level tasks. ``Best (ours)'' means the better value of RDL and DFS. Full numerical results can be seen in Table~\ref{tab: full-figure31} from Appendix~\ref{app: sup-exp}.}
    \vspace{-0.8em}
    \label{fig:figure31}
\end{figure}
As shown in Figure~\ref{fig:figure31}, the best configurations from our design space consistently outperform all scratch-trained baselines. Specifically, we find the following two designs improving the performance of RDL and DFS: {(1) \textbf{Breaking equivariance:} learnable embeddings (e.g., partial labeling \citet{zhu2021neural}) improve entity-level performance despite violating node-level permutation equivariance; (2) \textbf{Better DFS implementation:} compressing dense embeddings via incremental PCA and inserting them as numeric columns enables efficient cross-table aggregation in DFS~\citep{wang2025griffin}, markedly improving scalability and accuracy. These performance results justify architecture selection: \textbf{no single architecture dominates across tasks}. RDL vs. DFS rankings vary by task, and the same holds for micro-level choices (e.g., message passing; see Appendix~\ref{app: sup-exp}).}

\begin{wraptable}[9]{r}{0.6\linewidth}
\centering
\vspace{-1em}
\caption{\small{The performance gap between validation- and test-selected configurations.}}
\label{tab:val-test-gap}
\resizebox{\linewidth}{!}{
\begin{tabular}{@{}llccc@{}}
\toprule
Task & Model & reported perf & test-selected perf & val-selected perf \\
\midrule
\multirow{2}{*}{driver-top3 (auroc)}     & GraphSAGE & 75.54 & 82.81 & 81.56 \\
                                 & RelGNN    & 85.69 & 85.69 & 82.61 \\
\midrule
\multirow{2}{*}{driver-position (mae)} & GraphSAGE & 4.022 & 3.91  & 3.93 \\
                                 & RelGNN    & 3.798 & 3.80  & 4.35 \\
\midrule
\multirow{2}{*}{user-ignore (auroc)}     & GraphSAGE & 81.62 & 86.40 & 72.27 \\
                                 & RelGNN    & 86.18 & 86.18 & 78.94 \\
\bottomrule
\end{tabular}}
\end{wraptable}

\textbf{Observation 2. Validation metrics can be unreliable for architecture selection.} Oracle test selection shows the upper bound of our design space, but in practice, configurations are picked by validation (or training) scores, which often leads to a gap: validation-selected models underperform test-selected ones (Table~\ref{tab:val-test-gap}). In our reproduction of RelGNN~\citep{chen2025relgnn-64c}, the reported gains over GraphSAGE are clear only when choosing hyperparameters by test performance; with validation selection, the advantage becomes marginal. This reliability issue is largely overlooked in graph AutoML and only briefly noted in tabular ML~\citep{ye2024closer}. It is pronounced in RDB settings, which are inductive and time-aware: temporal splits induce distribution shift between validation and test periods. The problem affects both RDL and DFS; further evidence is in Appendix~\ref{app: sup-exp}.

\vspace{-0.5em}
\section{Principles and automation of architecture selection}
\vspace{-0.5em}
\label{sec: autogml}

Building on Section~\ref{sec:design}, where neither paradigm uniformly dominates, we seek a principled way to choose architectures for an RDB task. Two obstacles arise: (i) the design space is huge (180 trials per task cover only a small fraction), and (ii) validation performance—the usual selection proxy—can be unreliable, so more search can even degrade test performance. We address this by leveraging the \textit{model performance bank} (Section~\ref{sec: design-space}): for a new task, transfer information from similar tasks to reduce the search space. This requires a task embedding to capture the properties of tasks.

\subsection{From observations to task embeddings}
\vspace{-0.5em}

We begin with two observations from the performance bank: (1) RDL--DFS performance gaps vary across tasks; and (2) validation-selected vs. test-selected performance gaps differ across model types. The first implies data factors driving a task’s affinity for certain model classes; the second motivates analyzing model properties to explain generalization. 

\vspace{-0.5em}
\subsubsection{Data-centric perspective}
\vspace{-0.5em}
\label{sec: homo}
We begin with \textit{homophily} as the first axis of task characterization, since it is the lowest-order relational signal and reflects a task’s favored inductive bias~\citep{ma2022is}. Label-induced properties empirically outperform label-agnostic ones (e.g., degree) for performance prediction~\citep{li2023metadata, NEURIPS2024_7e810b2c, mao2023demystifying-74f}, and labels are directly available in RDB tasks via the materializing SQL query. Extending homophily to RDB tasks is non-trivial because: (1) labels evolve over time; (2) labels may be continuous; and (3) the PK–FK graph is schema-driven—labels usually come from a single fact table, so naively computing edge homophily on raw PK–FK links is ill-posed (always equal $1$). We therefore propose \textbf{RDB task homophily}. Starting from the PK–FK graph used for training, we temporally aggregate labels to per-entity means $\hat y_v \in \mathbb{R}^{C}$, then augment the graph as in Figure~\ref{fig:augmentation} to form self-looped metapaths; for scalability, we restrict to one-hop metapaths. 
\begin{definition}[RDB task homophily]
Given an augmented heterogeneous graph $\mathcal{G}=(V,\mathcal{E})$ induced from an RDB task, labeled entity type $\mathsf{F}$ and $V_{\mathsf{F}}$ its nodes, each with mean label $\hat y_v$. Let $\mathcal{M}$ be a finite set of self-looped metapaths $m$ starting and ending with $\mathsf{F}$, and let $\mathcal{E}_{m}$ be the set of edges induced by $m$. Given a label metric $\mathcal{K}$, the \emph{RDB task homophily} for metapath $m$ is
\[
H(\mathcal{G}; m)
\;=\;
\frac{1}{\lvert \mathcal{E}_{m}\rvert}
\sum_{\{u,v\}\in \mathcal{E}_{m}} \mathcal{K}\!\big(\hat y_u,\hat y_v\big).
\]
\end{definition}
\begin{wrapfigure}[15]{r}{0.35\linewidth}
\begin{center}
\vspace{-3em}
\includegraphics[width=.95\linewidth]{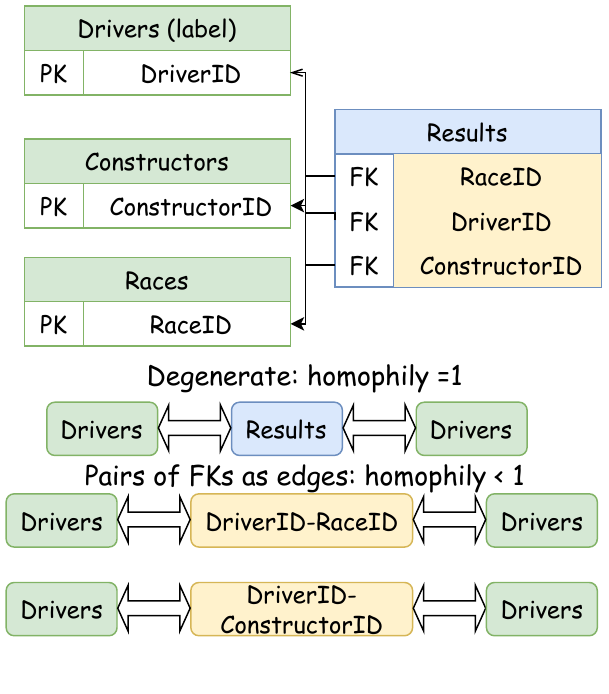}
\vspace{-1em}
\caption{\small{Augmenting \textsc{rel-f1} databases. We should treat the set of FKs as a hyperedge (each pair of FKs is appended to the original PK-FK graphs as a new edge type) rather than relying solely on PK-FK edges. }}
\label{fig:augmentation}
\end{center}
\end{wrapfigure}
\textbf{Label metric design.} For classification tasks, the label metric can be the dot product
$\mathcal{K}(\hat y_u,\hat y_v) = \hat y_u^\top \hat y_v$, which reduces to traditional edge homophily $\mathbf{1}\{\hat y_u = \hat y_v\}$ when there are no duplicate entities in the task table. We denote this measure by $H_{\text{edge}}(\mathcal{G}; m)$. For regression tasks, we instead use a correlation-based label metric: letting $\tilde y_u = (\hat y_u - \mu)/\sigma$ denote standardized predictions over labeled nodes (with empirical mean $\mu$ and variance $\sigma^2$), we define $\mathcal{K}(\hat y_u,\hat y_v) \;=\; \tilde y_u \, \tilde y_v \;=\; \frac{(\hat y_u - \mu)(\hat y_v - \mu)}{\sigma^2}$ to measure Pearson-style correlation of labels along edges of each metapath.
We may further extend the homophily definition to account for class imbalance. A notable extension is the adjusted homophily~\citep{NEURIPS2023_01b68102}. For a classification task, it can be defined as
$
H_{\text{adj}}(\mathcal{G}; m)
=\frac{ H_{\text{edge}}(\mathcal{G}; m) - \sum_{k=1}^{C}
\left(\frac{D_k^{(m)}}{2|\mathcal{E}_m|}\right)^2 }
{ 1 - \sum_{k=1}^{C}
\left(\frac{D_k^{(m)}}{2|\mathcal{E}_m|}\right)^2 },
$
where $D_k^{(m)}$ is the degree of class $k$. 
To obtain a global measure across the whole graph, we aggregate over metapaths using statistical functions such as \textsc{MEAN} and \textsc{STD}.

\textbf{Anchor-based affinity properties.} Homophily is most suited to node-level equivariant message-passing and overlooks path-based models with labeling tricks and non-structural factors (feature quality, temporal dynamics). We therefore add anchor-based affinities to estimate which model families a task favors. If random path-based aggregation already separates labels well, path-based models should excel. See Appendix~\ref{app: formula} for the relationship between random hashing and randomly initialized NBFNet. Compared to task embeddings requiring backpropagation~\citep{cao2023autotransfer}, these features only require a forward pass and closed-form fitting, which is much more efficient. 
\begin{enumerate}[noitemsep,topsep=0pt,parsep=0pt,partopsep=0pt,leftmargin=1em]
    \item \textbf{Path/neighborhood affinity.} Use randomly initialized GraphSAGE and NBFNet~\citep{zhu2021neural} as random hashers; after one forward pass, fit a closed-form linear head (ridge or LDA). A random NBFNet achieves $>82$ AUROC on \textsc{user-badge}, indicating strong path affinity.
    \item \textbf{Feature affinity.} Use TabPFN validation performance (no training) as a proxy for feature quality.
    \item \textbf{Temporal affinity.} Since mean-label homophily ignores time, add simple timeline statistics (e.g., majority label over time), which effectively capture dynamics~\citep{cornell2025power}.
\end{enumerate}

\textbf{Number of training rows.} Among basic RDB statistics, the number of training rows $N_{\text{train}}$ plays a uniquely important role. Unlike other dataset statistics (e.g., the number of test rows or table counts), $N_{\text{train}}$ directly determines the amount of context a model can access during learning. This is especially critical for foundation-model-based approaches such as DFS+TabPFN, where $N_{\text{train}}$ governs how much historical information the model can attend to in its context window. We therefore include $\log(N_{\text{train}})$ as a dedicated feature alongside the affinity properties.

\textbf{Correlating heuristics with the RDL-DFS performance gap.} We conduct a nonparametric correlation analysis relating RDB task characteristics to the performance differential between the best RDL and DFS models. We identify that $\log(N_{\text{train}})$, representing the logarithm of training row counts, and adjusted homophily are the most significant predictors. Notably, in classification tasks, adjusted homophily displays a strong negative correlation with the RDL-DFS gap, with Spearman's $\rho=-0.43$ ($p<0.05$). (See Appendix~\ref{app: formula} for complete heuristic definitions). This implies that RDL's nonlinear aggregation is particularly advantageous for low homophily tasks. Moreover, RDL requires substantial supervision signals to learn appropriate aggregation functions. We evaluate other heuristics, including the influence of labeling tricks and path affinity, in Appendix~\ref{app: sup-exp}.

\textbf{(Informal) Theoretical insights.} The correlation between the relative performance of RDL and DFS, homophily-induced features, and the train task table size (number of available labels) can be elucidated from a graph-theoretical perspective. If we formulate the RDB as a heterogeneous graph and expand it into a multi-relational graph via metapaths, both DFS and RDL can be viewed as mechanisms that aggregate neighbor information into a score for each task-table row. However, DFS relies on fixed linear averages of neighbor signals, whereas RDL learns relation-specific nonlinear transformations (e.g., amplification, clipping, or sign flips) prior to combination. We briefly discuss two regimes here and provide rigorous proofs in Appendix~\ref{app: theory}.

\begin{enumerate}[leftmargin=*,noitemsep,topsep=0pt,parsep=0pt,partopsep=0pt,]
    \item \textbf{Low-homophily classification.}
    When labels are strongly homophilous along most metapaths, simple averaging is close
    to optimal and DFS already captures most relational signal. When some metapaths are
    weakly homophilous, heterophilous, or nearly random, linear averaging mixes positive
    and negative evidence and tends to cancel signal. RDL can instead learn to down-weight
    uninformative relations and flip the contribution of systematically "opposite-label"
    metapaths, effectively increasing the signal-to-noise ratio exactly when adjusted
    homophily is small or negative. This explains why the RDL–DFS gap grows on
    low-homophily classification tasks.

    \item \textbf{Dependence on the number of training rows.}
    The same view clarifies why the number of training rows, $N_{\text{train}}$, is a key
    moderator. DFS has a relatively small hypothesis class: aggregators are fixed and only
    a tabular model is learned, so its estimation error shrinks quickly with sample size.
    RDL introduces many additional parameters (per-relation weights, gates, nonlinearities),
    which in principle allow it to recover the "right" aggregation but also make it easier
    to overfit when supervision is scarce. In low-data regimes, these learned gates are
    noisy and can hurt performance compared to the stable DFS averages; once
    $\log(N_{\text{train}})$ is large enough, the gates can be estimated reliably and RDL's
    extra flexibility turns into a consistent advantage.
\end{enumerate}

\subsubsection{Model-centric perspective}

To understand validation-test selection gaps, we analyze checkpoints that exhibit good and poor generalization.
After conducting intuitive visualization-based analysis (shown in Appendix~\ref{app: sup-exp}), we probe generalization via the local loss landscape \(L:\mathbb{R}^d\to\mathbb{R}\) around a checkpoint \(w_0\) \citep{chiang2022loss}. This is also consistent with recent evidence that mode connectivity in GNNs is informative for understanding optimization geometry and model robustness~\citep{li2025unveiling}. Fix an orthonormal 2D subspace \(\Pi=\mathrm{span}(e_1,e_2)\) and sample a grid \(\Gamma=\{(s_i,t_j)\}\subset[-\rho,\rho]^2\). Each grid point defines \(w_{ij}=w_0+s_i e_1+t_j e_2\) with \(L_{ij}=L(w_{ij})\). We summarize with three indicators spanning increasing smoothness scales \citep{garipov2018loss,Li2018Visualizing,Ghorbani2019HessianDensity}:
\begin{enumerate}[noitemsep,topsep=0pt,parsep=0pt,partopsep=0pt,leftmargin=1em]
    \item \textbf{First-order \(P_{1}\):}
    \(\max_{|i-k|+|j-l|=1}\tfrac{|L_{ij}-L_{kl}|}{\sqrt{(s_i-s_k)^2+(t_j-t_l)^2}}\)
    (worst finite-difference slope on \(\Pi\)), detecting brittle checkpoints near loss cliffs.
    \item \textbf{Second-order \(P_{2}\):}
    \(\lambda_{\max}(H_{\Pi}(w_0))\), where \(H_{\Pi}(w_0)=E^\top\nabla^2L(w_0)E,\;E=[\,e_1\;e_2\,]\)
    (sharpness along \(\Pi\)), measuring basin curvature; high $P_2$ signals narrow minima sensitive to distribution shift~\citep{Li2018Visualizing}.
    \item \textbf{Energy barrier \(P_{\mathrm{bar}}\):}
    \(\max_{(i,j)}\max_{t\in[0,1]}L(w_0+t(w_{ij}-w_0))-\max\{L(w_0),L_{ij}\}\)
    (barrier to departing \(w_0\) along rays within \(\Pi\)), probing global basin geometry rather than local curvature.
\end{enumerate}
These three metrics span local slope, curvature, and global basin shape, and can occasionally conflict (e.g., moderate $P_1$ but high $P_2$ when the surface curves steeply beyond the immediate neighborhood). On \textsc{driver-top3} (Table~\ref{tab:driver-top3-wrap}), when validation–test gaps are large all indicators consistently favor the test-optimal checkpoint; when gaps are small they may disagree, motivating their joint use. These metrics are comparable within a model family (RDL or DFS) but not across families due to scale differences, and are effective only for well-fitted models. Since these signals are post-hoc, we use them to refine the final checkpoint choice among top-validation candidates.

\begin{wraptable}{r}{0.6\textwidth}
\centering
\caption{\small Example landscape properties and model performance. Smaller values typically indicate a more benign landscape.}
\label{tab:driver-top3-wrap}
\resizebox{\linewidth}{!}{
\begin{tabular}{lccccccc}
\hline
Selection & Model type & Val\_auroc & Test\_auroc & $P_{bar
}$ & $P_{1}$ & $P_{2}$ \\
\hline
Val  & RDL & 89.48 & 82.41            & 2.77         & 1.49          & 4.23 \\
Test & RDL & 86.05 & \textbf{85.94}   & \textbf{0.41} & \textbf{1.22} & \textbf{0.22} \\
Val  & DFS & 83.44 & 84.69            & \textbf{0.384} & 0.041         & 1.32 \\
Test & DFS & 83.76 & \textbf{85.71}   & 0.495        & \textbf{0.03} & \textbf{0.50} \\
\hline
\end{tabular}}
\vspace{-0.5em}
\end{wraptable}

\subsubsection{Automatic architecture selection through Relatron}
\label{sec: relatron}

Based on these findings, we introduce \textbf{Relatron}, an architecture selector that maps \textit{task embeddings} to \textit{meta-predictions} about which model design to use. Given an RDB task, Relatron considers two types of architecture selection.

\textbf{Macro-level selection (RDL vs.\ DFS).} We train a meta-classifier on the performance bank to map task embeddings to the empirically winning family, using homophily-based task embeddings. At inference, we (1) compute the novel task’s embedding and (2) apply the meta-classifier to choose between RDL and DFS. \\
\textbf{Joint architecture selection and HPO with a query budget.} For standard HPO with a query-budget setting, the query budget is appended to the task embedding as an additional feature. (1) A macro-level meta-predictor first chooses between RDL and DFS. (2) An HPO routine (e.g., TPE~\citep{bergstra2011algorithms} or Autotransfer~\citep{cao2023autotransfer}) generates candidate checkpoints within the selected family. (3) Loss-landscape metrics are applied for post-selection among candidates with top validation performance.
An insight here is that the favored model type is related to the query budget. Although RDL often attains higher best-case performance on tasks such as \textsc{study-outcome}, under tight search budgets, its average performance can lag behind DFS because good RDL configurations are harder to find. Moreover, as shown in Section~\ref{sec: exp}, we surprisingly find that the macro-level meta-predictor (DFS or RDL) addresses most issues: after selecting the appropriate model branch, search efficiency improves and the validation–test gap narrows. \\
\textbf{Design space of task features.} In terms of task feature design, there are three categories, with computation budgets ranging from small to large. A detailed introduction can be found in Appendix~\ref{app: models}.
\begin{enumerate}[noitemsep,topsep=0pt,parsep=0pt,partopsep=0pt,leftmargin=1em]
    \item \textbf{Model-free embeddings}: Model-free embedding requires no training and is extremely fast to compute. This includes homophily-based features, the performance of simple heuristic baselines, basic database statistics, and temporal-related correlations.
    \item \textbf{Training-free Model-based embeddings}: These embeddings use training-free models' performance as embeddings. This includes the performance of DFS-TabPFN and that of randomly initialized GraphSAGE or NBFNet.
    \item \textbf{Anchor-based embeddings~\citep{cao2023autotransfer}}: This refers to task2vec~\citep{achille2019task2vec}-based anchor model-based embeddings, such as Auto-transfer. Though the original paper claims that these embeddings require little time to obtain, we find that computing the Fisher information matrix is actually very time-consuming for RDB tasks, given the number of required anchors.
\end{enumerate}

\vspace{-.5em}
\subsection{Experimental evaluations}
\label{sec: exp}
\vspace{-.5em}

We then evaluate the proposed Relatron
on three experiments. (1) \textit{A sanity check of task embeddings}: First, we evaluate the effectiveness of different task embeddings by comparing the task similarity calculated by task embedding and the ground truth Graphgym similarity~\citep{you2020designGraphgym}. This experiment is mainly used to verify the correctness of task embeddings. (2) \textit{Macro-level architecture search:} Second, we check whether task embeddings can help identify the proper architecture for an RDB task. (3) \textit{Joint selection of architectures and hyperparameters}: Third, we consider a more practical scenario, in which task embeddings and meta-predictor are used to enhance the effectiveness of joint hyperparameter and architecture search.

\textbf{Can task embeddings reflect ground-truth task similarity?} Using all trials in the performance bank, we: (1) derive ground truth GraphGym similarity~\citep{you2020designGraphgym} by intersecting model configurations across tasks, ranking them by per-task performance, and defining pairwise similarity as Kendall’s $\tau$ between the two rank signatures. As a sanity check, the top $3$ similar tasks are \textsc{user-badge}, \textsc{user-engagement}, and \textsc{user-churn}, which aligns with the similarity of their homophily metrics. (2) for each embedding, form a task–task matrix via cosine similarity on normalized features and report Kendall’s rank correlation with the GraphGym matrix; (3) optionally learn a projection $g$ (as in \citet{cao2023autotransfer}) using a margin-ranking meta-objective that pulls together tasks with similar performance profiles and pushes apart dissimilar ones. 
Results (Table~\ref{tab:task-embed-sim}): Our proposed homophily and affinity-based embedding achieves the best ranking correlation. On the other hand, we also need to point out that none of these task embeddings present significantly high correlation, which can partially explain why transfer-based HPO is not very effective in Table~\ref{tab:rdl-dfs-table}.

\begin{figure}[t]
\centering
\begin{minipage}[t]{0.17\textwidth}
    \vspace{0pt}
    \centering
    \includegraphics[width=\linewidth]{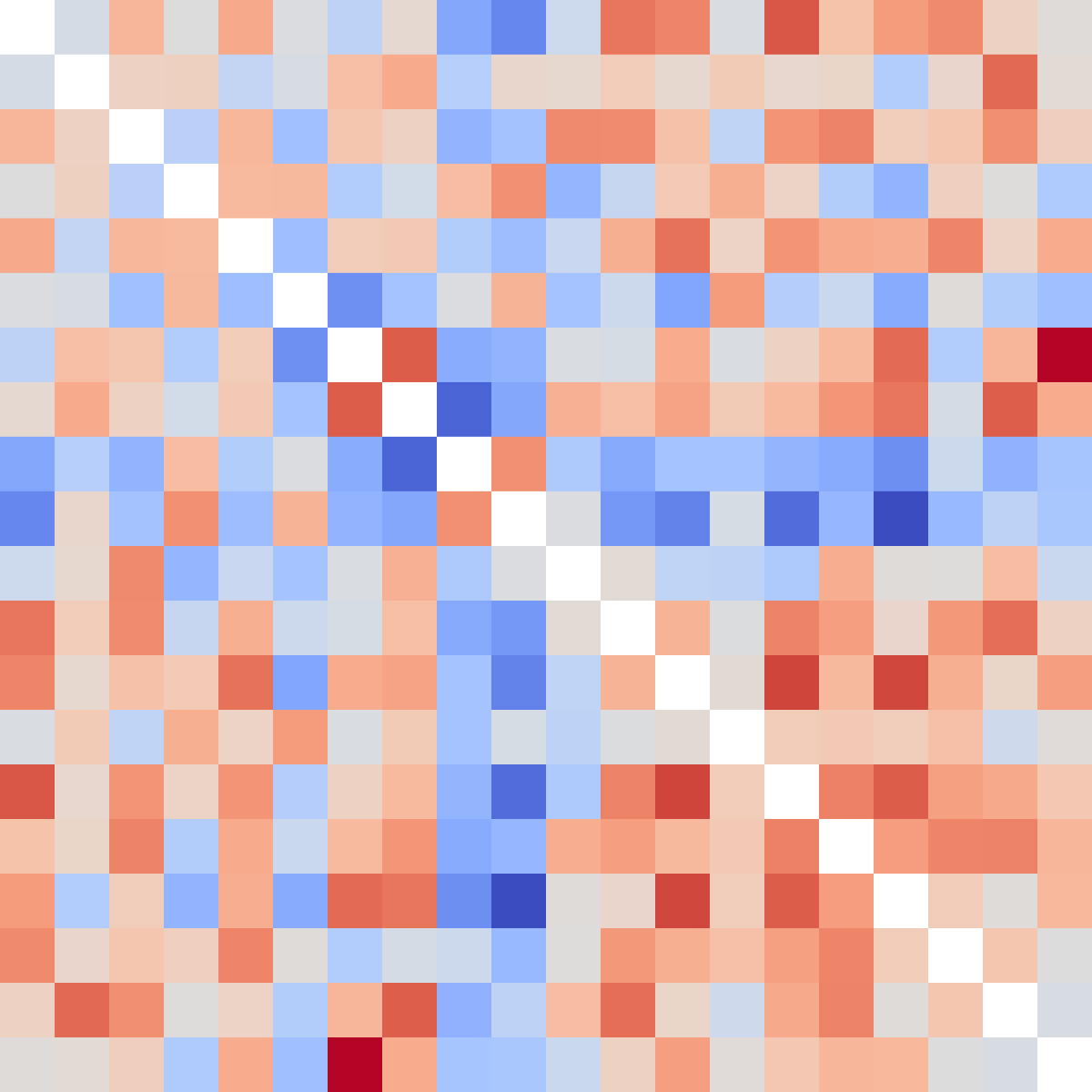}
    \captionsetup{font=small}
    \captionof{figure}{\small Ground truth GraphGym similarity}
    \captionsetup{font=normalsize}
    \label{fig:sim-scatter}
\end{minipage}\hfill
\begin{minipage}[t]{0.79\textwidth}
    \vspace{0pt}
    \centering
    \captionof{table}{\small Experimental results for task-embedding similarity, leave-one-out (LOO) accuracy, and task-embedding computation efficiency. ``AT'' stands for Autotransfer. For ``winner by val,'' we still report test performance, but the representative checkpoints are selected by validation performance.}
    \label{tab:task-embed-sim}
    \footnotesize
    \setlength{\tabcolsep}{4pt}%
    \renewcommand{\arraystretch}{1.2}%
    \resizebox{\textwidth}{!}{%
        \begin{tabular}{@{}lccccc@{}}
        \toprule
        \textbf{Task embedding design} 
        & \textbf{Mean Kendall's corr (no g)} 
        & \textbf{Mean Kendall's corr (g)} 
        & \textbf{Winner by val}  
        & \textbf{Winner by test}
        & \textbf{Average time (min)} \\ 
        \midrule
        Model-free (homophily + stats + temporal) 
            & \textbf{0.066} & \textbf{0.163} & \textbf{87.5\%} & \textbf{79.2\%}  & \textbf{0.48} \\
        Model-free (simple heuristic performance) 
            & 0.027 & 0.031 & 70.8\% & 75.0\%  & 0 \\
        Model-based (training-free) 
            & -0.038 & -0.030 & 66.7\% & 66.7\%  & 5 \\
        Anchor-based (Autotransfer) 
            & -0.049 & -0.011 & 66.7\% & 66.7\%  & 50 \\
        \bottomrule
        \end{tabular}%
    }
\end{minipage}
\vspace{-1.5em}
\end{figure}

\begin{wraptable}{r}{0.55\textwidth}
\centering
\vspace{-1.5em}
\caption{\small Ablation on performance-bank size for macro-level predictors (LOO accuracy).}
\label{tab:bank-ablation}
\resizebox{\linewidth}{!}{
\begin{tabular}{lccc}
\toprule
\textbf{Bank size} & \textbf{Model-free (ours)} & \textbf{Training-free model-based} & \textbf{Anchor-based} \\
\midrule
8   & 68.9\% & 56.9\% & 62.9\% \\
12  & 64.8\% & 50.4\% & 56.1\% \\
16  & 76.2\% & 53.1\% & 62.5\% \\
All & 87.5\% & 66.7\% & 66.7\% \\
\bottomrule
\end{tabular}}
\vspace{-0.5em}
\end{wraptable}

\textbf{Can task embeddings help predict RDL vs. DFS winner?} We then investigate whether task embeddings can predict which method—RDL or DFS—performs better on a novel task. We consider winners selected by validation performance and directly extracted using test performance. The former one is exactly the setting for architecture selection during the HPO. For each target task, we fit the model using other tasks in the model performance bank and evaluate the target task with leave-one-out cross-validation. As shown in Table~\ref{tab:rdl-dfs-table}, our proposed model-free task features are surprisingly most effective despite the low computation cost. If we further look at the incorrect samples, we find that most of them have a small performance gap between RDL and DFS (within $2.5\%$), indicating that these tasks are inherently hard to distinguish. Moreover, as shown in Table~\ref{tab:bank-ablation}, we also study the influence of model performance bank size. As expected, larger banks lead to better predictors since they cover more diverse tasks. At the same time, our proposed model-free task features consistently outperform other embeddings under different bank sizes.


\begin{table}[t]
\centering
\caption{Joint architecture and hyperparameter optimization
result. Best results are highlighted with an \underline{underline} and the second are \textbf{bold}. ``Only predictor'' means only using the meta-predictor. Relbench's default refers to the default architecture and hyperparameters in their pipelines. Best fixed refers to the configurations with the best mean rank of performance across tasks.}
\label{tab:rdl-dfs-table}
\small
\resizebox{0.95\linewidth}{!}{
\begin{tabular}{@{}llcccccc@{}}
\toprule
\textbf{Strategy} & \textbf{Budget}
& \multicolumn{2}{c}{\textbf{driver-top3 (ROC-AUC)} $\nearrow$}
& \multicolumn{2}{c}{\textbf{driver-position (MAE)} $\searrow$}
& \multicolumn{2}{c}{\textbf{user-churn (ROC-AUC)} $\nearrow$} \\
\midrule
\multirow{3}{*}{Random}
& 3  & \multicolumn{2}{c}{82.67 ± 2.19} & \multicolumn{2}{c}{3.6810 ± 0.4255} & \multicolumn{2}{c}{68.56 ± 1.23} \\
& 10 & \multicolumn{2}{c}{77.80 ± 4.79} & \multicolumn{2}{c}{3.8576 ± 0.5035} & \multicolumn{2}{c}{68.60 ± 1.20} \\
& 30 & \multicolumn{2}{c}{77.28 ± 2.42} & \multicolumn{2}{c}{4.2793 ± 0.1483} & \multicolumn{2}{c}{69.54 ± 0.43} \\
\addlinespace
\multirow{3}{*}{TPE}
& 3  & \multicolumn{2}{c}{82.67 ± 2.19} & \multicolumn{2}{c}{3.6810 ± 0.4255} & \multicolumn{2}{c}{68.56 ± 1.23} \\
& 10 & \multicolumn{2}{c}{81.45 ± 0.44} & \multicolumn{2}{c}{3.7897 ± 0.4271} & \multicolumn{2}{c}{68.60 ± 1.20} \\
& 30 & \multicolumn{2}{c}{77.92 ± 5.12} & \multicolumn{2}{c}{4.1724 ± 0.0519} & \multicolumn{2}{c}{69.54 ± 0.43} \\
\addlinespace
\multirow{3}{*}{Hyperband}
& 3  & \multicolumn{2}{c}{82.67 ± 2.19} & \multicolumn{2}{c}{3.6810 ± 0.4255} & \multicolumn{2}{c}{68.43 ± 1.33} \\
& 10 & \multicolumn{2}{c}{80.68 ± 0.74} & \multicolumn{2}{c}{3.7897 ± 0.4271} & \multicolumn{2}{c}{68.60 ± 1.20} \\
& 30 & \multicolumn{2}{c}{74.37 ± 9.59} & \multicolumn{2}{c}{4.0948 ± 0.1420} & \multicolumn{2}{c}{69.32 ± 0.34} \\
\addlinespace
\multirow{2}{*}{Autotransfer}
& 3  & \multicolumn{2}{c}{77.11 ± 4.43} & \multicolumn{2}{c}{4.2916 ± 0.0952} & \multicolumn{2}{c}{69.09 ± 0.86} \\
& 10 & \multicolumn{2}{c}{78.71 ± 2.88} & \multicolumn{2}{c}{4.3645 ± 0.2105} & \multicolumn{2}{c}{\underline{70.28 ± 0.27}} \\
\addlinespace
\midrule 
& & \textit{Only predictor} & \textit{Full} & \textit{Only predictor} & \textit{Full} & \textit{Only predictor} & \textit{Full} \\
\cmidrule(lr){3-4}\cmidrule(lr){5-6}\cmidrule(lr){7-8}
\multirow{3}{*}{\textbf{Ours}}
& 3  & 83.80 ± 0.34 & NA & 3.3986 ± 0.0877 & NA & 68.78 ± 1.31 & NA \\
& 10 & 83.28 ± 1.45 & 83.30 ± 1.17 & 3.3934 ± 0.1389 & \textbf{3.3553 ± 0.0862} & 69.66 ± 0.89 & 69.61 ± 0.86 \\
& 30 & \textbf{84.00 ± 0.34} & \underline{84.33 ± 0.06} & \underline{3.3339 ± 0.1563} & \underline{3.3339 ± 0.1563} & \textbf{70.05 ± 0.34} & 70.05 ± 0.34 \\
\addlinespace
\multicolumn{2}{l}{\textbf{DFS + TabPFN}}
& \multicolumn{2}{c}{82.24} & \multicolumn{2}{c}{3.43} & \multicolumn{2}{c}{67.79} \\

\multicolumn{2}{l}{\textbf{Relbench's default}}
& \multicolumn{2}{c}{73.19} & \multicolumn{2}{c}{5.02} & \multicolumn{2}{c}{68.12} \\

\multicolumn{2}{l}{\textbf{Best fixed}}
& \multicolumn{2}{c}{83.72} & \multicolumn{2}{c}{4.35} & \multicolumn{2}{c}{69.84} \\

\multicolumn{2}{l}{\textbf{Griffin}}
& \multicolumn{2}{c}{77.95} & \multicolumn{2}{c}{4.20} & \multicolumn{2}{c}{68.4} \\

\bottomrule
\end{tabular}}
\vspace{-1.5em}
\end{table}

\textbf{Joint hyperparameter and 
architecture search.} We evaluate our full pipeline in a joint HPO–architecture–search setting. As \emph{baselines} for trial generation, we use random search, TPE~\citep{bergstra2011algorithms}, Hyperband~\citep{li2018hyperband}, and Autotransfer~\citep{cao2023autotransfer}. We report results on three representative datasets—\textsc{driver-top3}, \textsc{driver-position}, and \textsc{user-churn}—and use the remaining tasks as the performance bank; the first two have small training tables and are prone to overfitting.
Our pipeline trains a TabPFN-based meta-predictor that selects between RDL and DFS from the query budget and homophily-based task embeddings. Conditional on this choice, we run TPE---the best-performing trial generator---within the selected family, then post-select among the top-3 validation models using landscape measures. \\
As shown in Table~\ref{tab:rdl-dfs-table}, we observe: (1) our pipeline generally achieves better performance. Using only the meta-predictor to choose the model family already yields strong results, suggesting that large validation–test gaps often arise from selecting an unsuitable architecture.
Unlike other methods that suffer from the undesirable “the more you train, the worse you get” phenomenon, our performance continues to improve as the number of trials increases.
Post-hoc selection offers only limited gains, likely because hard voting over numeric landscape metrics still introduces noise. Addressing checkpoint selection may require pre-trained priors similar to architecture search; we leave this to future work. (2) For search acceleration, unlike \citet{cao2023autotransfer}, embedding-based configuration retrieval is typically ineffective, with the sole exception of \textsc{user-churn}. Consequently, we rely on TPE for trial generation. The limited effectiveness of Autotransfer suggests that RDB data distributions are more complex than those of standard graph benchmarks, and that a larger, more diverse task bank would be needed—impractical given the scarcity of public data. Synthetic tasks, in the spirit of \citet{hollmann2022tabpfn-8c2}, may therefore be a promising direction to improve search quality. (3) After hyper-parameter tuning, we demonstrate that models trained from scratch can outperform foundation models like DFS with TabPFN. 

\vspace{-1em}
\section{Conclusion, Limitations, and Future Discussion}
\vspace{-.5em}

In this paper, we systematically study the design space of relational machine learning models for RDB tasks and collect a model performance bank. Based on this study, we show that the advantage of RDL over DFS is related to task properties, like the RDB task homophily. Then, we propose a meta-predictor based on the model performance bank and our proposed selector, Relatron, which demonstrates promising performance in both macro- and micro-level architecture search. 

\textbf{Limitations and future work.} Our study does not explore LLM-based methods, either as encoders~\citep{wang2025griffin} or predictors~\citep{wu2025large-809}. While these approaches excel on certain databases, they often perform similarly or worse than baselines on other databases, leaving their role an open question. Although we do not propose new architectures, our results highlight design insights: GNNs with labeling tricks can boost entity-level prediction, and DFS-based methods often outperform RDL, suggesting that current RDL designs may be suboptimal. Yet DFS remains a non-parametric, hand-crafted approach, contrasting with deep learning trends. Designing novel architectures inspired by DFS thus represents a promising direction.

\section{Ethics Statement}

 We acknowledge that we and all co-authors of this work have read and commit to adhering to the ICLR Code of Ethics. Our study relies solely on publicly available benchmark datasets such as Relbench~\citep{robinson2024relbench}. We believe this work raises no direct ethical risks beyond standard concerns associated with machine learning research.

\section{Reproducibility Statement}

We provide training settings in Section~\ref{sec: design-space}, Section~\ref{sec: autogml}, and Appendix~\ref{app: models}. To validate the observations in this paper, we include theoretical discussions in Appendix~\ref{app: formula}.

\bibliography{iclr2026_conference}
\bibliographystyle{iclr2026_conference}

\appendix

\section{Usage of Large language models}

We utilize large language models to refine our writing and also employ a large language model-based coding agent to assist with code writing.
We have reviewed the generated content provided by large language models and will be responsible for the correctness of the polished content.

\section{Supplementary information for datasets and tasks}
\label{app: datasets}

Our adopted datasets and tasks are summarized in Table~\ref{app: dataset-table}. It should be noted that we skip the \textbf{rel-amazon} datasets because the source is crawled from Amazon, which contains sensitive information and does not present a valid license. We then briefly introduce each dataset and task as follows:

\begin{itemize}[noitemsep,topsep=0pt,parsep=0pt,partopsep=0pt,leftmargin=2em]
  \item \texttt{rel-hm} (\citet{robinson2024relbench}): This database contains comprehensive customer and product data from online retail platforms, including detailed purchase histories and diverse metadata such as customer demographics and product attributes.
    \begin{itemize}
      \item \texttt{user-churn}: For each customer, predict whether they will churn—i.e., have no transactions—in the next 7 days.
      \item \texttt{user-item-purchase}: Predict the list of articles each customer will purchase in the next 7 days.
      \item \texttt{item-sales}: Predict the total sales (sum of prices of associated transactions) for an article in the next 7 days.
    \end{itemize}

  \item \texttt{rel-stack} (\citet{robinson2024relbench}): This database is about a Q\&A platform with a reputation system. Data is dumped from the stats-exchange site, and data from the 2023-09-12 dump.
    \begin{itemize}
      \item \texttt{post-votes}: For each post, predict how many votes it will receive in the next 3 months.
      \item \texttt{user-engagement}: For each user, predict whether they will make any votes, posts, or comments in the next 3 months.
      \item \texttt{user-badge}: For each user, predict whether they will receive a new badge in the next 3 months.
      \item \texttt{user-post-comment}: Predict a list of existing posts that a user will comment on in the next two years.
      \item \texttt{post-post-related}: Predict a list of existing posts that users will link a given post to in the next two years.
      \item \texttt{user-comment-count}: Predicts how many comments each user will post over the next 30 days.
    \end{itemize}

  \item \texttt{rel-event} (\citet{robinson2024relbench}): An anonymized event recommendation dataset from the Hangtime app, containing user actions, event metadata, demographics, and social relations.
    \begin{itemize}
      \item \texttt{user-repeat}: Predict whether a user will attend another event (respond “yes” or “maybe”) in the next 7 days, given they attended an event in the last 14 days.
      \item \texttt{user-ignore}: Predict whether a user will ignore more than two event invitations in the next 7 days.
      \item \texttt{user-attendance}: Predict how many events each user will respond “yes” or “maybe” to in the next 7 days.
    \end{itemize}

  \item \texttt{rel-avito} (\citet{robinson2024relbench}): A marketplace-style relational database.
    \begin{itemize}
      \item \texttt{user-visits}: Predict whether each customer will visit more than one ad in the next 4 days.
      \item \texttt{user-clicks}: Predict whether each customer will click on more than one ad in the next 4 days.
      \item \texttt{ad-ctr}: Assuming an ad will be clicked in the next 4 days, predict its click-through rate (CTR).
      \item \texttt{user-ad-visit}: Predict the list of ads a user will visit in the next 4 days.
    \end{itemize}

  \item \texttt{rel-trial} (\citet{robinson2024relbench}): A clinical-trial oriented relational database.
    \begin{itemize}
      \item \texttt{study-outcome}: Predict whether trials in the next 1 year will achieve their primary outcome.
      \item \texttt{study-adverse}: Predict the number of patients with severe adverse events/deaths for the trial in the next 1 year.
      \item \texttt{site-success}: Predict the success rate of a trial site in the next 1 year.
      \item \texttt{condition-sponsor-run}: For each condition, predict which sponsors will run trials.
      \item \texttt{site-sponsor-run}: For each (site, sponsor) pair, predict whether the sponsor will run a trial at the site.
        \end{itemize}

        \item \texttt{avs} (\citet{wang2024dbinfer,acquire-valued-shoppers-challenge}): A Kaggle e-commerce dataset of offers and customer interactions.
        \begin{itemize}
          \item \texttt{retention}: For each (offer, customer) pair, predict whether the customer will repeat the promoted purchase (become a "repeater") within a specified follow-up period.
        \end{itemize}

  \item \texttt{ieee-cis} (\citet{chen2025autog-fe5, ieee-fraud-detection}): Transactional fraud-style interactions.
    \begin{itemize}
      \item \texttt{fraud}: For each transaction, predict whether it is fraudulent at the time of authorization.
        \end{itemize}

  \item \texttt{rel-f1}: F1 competition results database.
  \begin{itemize}
    \item \texttt{driver-top3}: Predict whether each driver will qualify in the top 3 for a race within the next month.
    \item \texttt{driver-dnf}: Predict whether each driver will not finish (DNF) a race in the next month.
    \item \texttt{driver-position}: Predict the average finishing position of each driver in all races over the next two months.
    \item \texttt{driver-wins}: Predict the number of races each driver will win over the next year.
    \item \texttt{constructor-scores-points}: Predict whether each constructor team will score any championship points in the next three months.
    \item \texttt{driver-position-change}:  Predict the average change between a driver's starting grid position and final position over the next four months.
  \end{itemize}
      
  \item \texttt{rel-arxiv}: A database recording the publication relation across the arxiv
        \begin{itemize}
            \item \texttt{paper-citation}: Predict whether a paper will be cited by other papers in the next six months.
            \item \texttt{author-publication}: Predict how many papers an author will publish in the next six months.
        \end{itemize}
    
\end{itemize}

In terms of RDB predictive tasks, \citet{robinson2024relbench} provides a unified interface to define tasks and generate corresponding training, validation, and test tables through SQL. We then demonstrate an example SQL query for each example task below:

\textbf{Example entity-level task} (\texttt{user-churn} in \texttt{rel-hm}): 
\begin{lstlisting}[language=Python]
class UserChurnTask(EntityTask):
    r"""Predict the churn for a customer (no transactions) in the next week."""

    task_type = TaskType.BINARY_CLASSIFICATION
    entity_col = "customer_id"
    entity_table = "customer"
    time_col = "timestamp"
    target_col = "churn"
    timedelta = pd.Timedelta(days=7)
    metrics = [average_precision, accuracy, f1, roc_auc]

    def make_table(self, db: Database, timestamps: "pd.Series[pd.Timestamp]") -> Table:
        customer = db.table_dict["customer"].df
        transactions = db.table_dict["transactions"].df
        timestamp_df = pd.DataFrame({"timestamp": timestamps})

        df = duckdb.sql(
            f"""
            SELECT
                timestamp,
                customer_id,
                CAST(
                    NOT EXISTS (
                        SELECT 1
                        FROM transactions
                        WHERE
                            transactions.customer_id = customer.customer_id AND
                            t_dat > timestamp AND
                            t_dat <= timestamp + INTERVAL '{self.timedelta}'
                    ) AS INTEGER
                ) AS churn
            FROM
                timestamp_df,
                customer,
            WHERE
                EXISTS (
                    SELECT 1
                    FROM transactions
                    WHERE
                        transactions.customer_id = customer.customer_id AND
                        t_dat > timestamp - INTERVAL '{self.timedelta}' AND
                        t_dat <= timestamp
                )
            """
        ).df()

        return Table(
            df=df,
            fkey_col_to_pkey_table={self.entity_col: self.entity_table},
            pkey_col=None,
            time_col=self.time_col,
        )
\end{lstlisting}

As we can see, the time information is split into several time windows and given in the function parameter \texttt{timestamps}. Then, this timestamp will be used to create a time constraint, and the target label will be generated based on the SQL logic.

\textbf{Example recommendation task} (\texttt{user-item-purchase} in \texttt{rel-hm}):
\begin{lstlisting}[language=Python]
class UserItemPurchaseTask(RecommendationTask):
    r"""Predict the list of articles each customer will purchase in the next seven
    days."""

    task_type = TaskType.LINK_PREDICTION
    src_entity_col = "customer_id"
    src_entity_table = "customer"
    dst_entity_col = "article_id"
    dst_entity_table = "article"
    time_col = "timestamp"
    timedelta = pd.Timedelta(days=7)
    metrics = [link_prediction_precision, link_prediction_recall, link_prediction_map]
    eval_k = 12

    def make_table(self, db: Database, timestamps: "pd.Series[pd.Timestamp]") -> Table:
        customer = db.table_dict["customer"].df
        transactions = db.table_dict["transactions"].df
        timestamp_df = pd.DataFrame({"timestamp": timestamps})

        df = duckdb.sql(
            f"""
            SELECT
                t.timestamp,
                transactions.customer_id,
                LIST(DISTINCT transactions.article_id) AS article_id
            FROM
                timestamp_df t
            LEFT JOIN
                transactions
            ON
                transactions.t_dat > t.timestamp AND
                transactions.t_dat <= t.timestamp + INTERVAL '{self.timedelta} days'
            GROUP BY
                t.timestamp,
                transactions.customer_id
            """
        ).df()

        return Table(
            df=df,
            fkey_col_to_pkey_table={
                self.src_entity_col: self.src_entity_table,
                self.dst_entity_col: self.dst_entity_table,
            },
            pkey_col=None,
            time_col=self.time_col,
        )
\end{lstlisting}

Similarly, recommendation tasks are based on the joined table between the timestamp table and the target entity tables. A groupby operation is then applied to generate the list of target entities.

\textbf{Example autocomplete task}:
\begin{lstlisting}[language=Python]
def make_table(self, db: Database, timestamps: "pd.Series[pd.Timestamp]") -> Table:
        entity_table = db.table_dict[self.entity_table].df  # noqa: F841
        entity_table_removed_cols = db.table_dict[  # noqa: F841
            self.entity_table
        ].removed_cols

        time_col = db.table_dict[self.entity_table].time_col
        entity_col = db.table_dict[self.entity_table].pkey_col

        # Calculate minimum and maximum timestamps from timestamp_df
        timestamp_df = pd.DataFrame({"timestamp": timestamps})
        min_timestamp = timestamp_df["timestamp"].min()
        max_timestamp = timestamp_df["timestamp"].max()

        df = duckdb.sql(
            f"""
            SELECT
                entity_table.{time_col},
                entity_table.{entity_col},
                entity_table_removed_cols.{self.target_col}
            FROM
                entity_table
            LEFT JOIN
                entity_table_removed_cols
            ON
                entity_table.{entity_col} = entity_table_removed_cols.{entity_col}
            WHERE
                entity_table.{time_col} > '{min_timestamp}' AND
                entity_table.{time_col} <= '{max_timestamp}'
            """
        ).df()

        # remove rows where self.target_col is nan
        df = df.dropna(subset=[self.target_col])

        return Table(
            df=df,
            fkey_col_to_pkey_table={
                entity_col: self.entity_table,
            },
            pkey_col=None,
            time_col=time_col,
        )
\end{lstlisting}

Autocomplete tasks are based on the entity table itself, which is a setting closer to traditional RDB predictive tasks used in \citet{wang2024dbinfer}.

\begin{table}[htbp]
\centering
\caption{Summary of databases, tasks, task types, and evaluation metrics used in our experiments.}
\label{app: dataset-table}
\begin{tabular}{@{}llll@{}}
\toprule
\textbf{Database Name} & \textbf{Task Name} & \textbf{Task Type} & \textbf{Metric} \\
\midrule
\multirow{6}{*}{rel-f1}
  & driver-dnf                & classification & ROC-AUC \\
  & driver-top3               & classification & ROC-AUC \\
  & driver-position           & regression     & MAE \\
  & driver-wins              & regression & MAE \\
  & constructor-scores-points & classification & ROC-AUC \\
  & driver-position-change    & regression & MAE \\
\addlinespace
\multirow{5}{*}{rel-hm}
  & user-churn               & classification & ROC-AUC \\
  & user-item-purchase       & recommendation & MAP \\
  & item-sales               & regression     & MAE \\
\addlinespace
\multirow{6}{*}{rel-stack}
  & post-votes               & regression     & MAE \\
  & user-engagement          & classification & ROC-AUC \\
  & user-badge               & classification & ROC-AUC \\
  & user-post-comment        & recommendation & MAP \\
  & post-post-related        & recommendation & MAP \\
  & user-comment-count       & regression     & MAE \\
\addlinespace
\multirow{3}{*}{rel-event}
  & user-repeat              & classification & ROC-AUC \\
  & user-ignore              & classification & ROC-AUC \\
  & user-attendance          & regression     & MAE \\
\addlinespace
\multirow{4}{*}{rel-avito}
  & user-visits             & classification & ROC-AUC \\
  & user-clicks             & classification & ROC-AUC \\
  & ad-ctr                  & regression     & MAE \\
  & user-ad-visit           & recommendation & MAP \\
\addlinespace
\multirow{5}{*}{rel-trial}
  & study-outcome           & classification & ROC-AUC \\
  & study-adverse           & regression     & MAE \\
  & site-success            & regression     & MAE \\
  & condition-sponsor-run   & recommendation & MAP \\
  & site-sponsor-run        & recommendation & MAP \\
\addlinespace
\multirow{1}{*}{avs}
  & retention               & autocomplete   & ROC-AUC \\
\addlinespace
\multirow{1}{*}{ieee-cis}
  & fraud                   & autocomplete   & ROC-AUC \\
\addlinespace
\multirow{2}{*}{rel-arxiv}
  & paper-citation    & classification & ROC-AUC \\
  & author-publication& regression     & MAE \\
\bottomrule
\end{tabular}
\end{table}

\section{Supplementary theoretical discussion}
\label{app: formula}

\subsection{Definition of metrics}

In this section, we present the formal definition of missing homophily-related features adopted in this paper, as discussed in Section~\ref{sec: homo}. 

\begin{definition}[Class-insensitive homophily]
For a classification task with class prior
$\boldsymbol{\pi} \;:=\; \frac{1}{|L|}\sum_{u\in L} \hat{\mathbf{y}}_u \;\in\; \Delta^{C-1},
$
define the class-conditional edge similarity for metapath $m$ by
\[
h_k(G;m) \;:=\;
\frac{\displaystyle \sum_{(u,v)\in \mathcal{E}_{m}} K(\hat{\mathbf{y}}_u,\hat{\mathbf{y}}_v)\,\hat{y}_{v,k}}
     {\displaystyle \sum_{(u,v)\in \mathcal{E}_{m}} \hat{y}_{v,k}}
\qquad (k=1,\dots,C).
\]
The \emph{class-insensitive homophily} for $m$ is
\[
H_{\mathrm{ins}}(G;m)
\;:=\;
\frac{1}{C-1}\sum_{k=1}^{C} \bigl[h_k(G;m)-\pi_k\bigr]_+.
\]
For regression tasks, where class imbalance is irrelevant, we set
\[
H_{\mathrm{ins}}(G;m)\;:=\; H_{\mathrm{edge}}(G;m).
\]
\end{definition}

\begin{definition}[Aggregation homophily]
Let $\Gamma_m(u)=\{\,v\in L : (u,v)\in \mathcal{E}_m\,\}$ be the labeled $m$-neighbors of $u$
and $\deg_m(u)=|\Gamma_m(u)|$. Define the neighbor-aggregated label
\[
\bar{\mathbf{y}}^{(m)}_u \;:=\; \frac{1}{\deg_m(u)}\sum_{v\in \Gamma_m(u)} \hat{\mathbf{y}}_v
\quad\text{(classification)},\qquad
\bar{y}^{(m)}_u \;:=\; \frac{1}{\deg_m(u)}\sum_{v\in \Gamma_m(u)} \hat{y}_v
\quad\text{(regression)}.
\]
Let $U_m:=\{u\in L:\deg_m(u)>0\}$.
The \emph{aggregation homophily} for $m$ is
\[
H_{\mathrm{agg}}(G;m)
\;:=\;
\frac{1}{|U_m|}\sum_{u\in U_m}
\begin{cases}
K\!\left(\hat{\mathbf{y}}_u,\;\bar{\mathbf{y}}^{(m)}_u\right), & \text{classification},\\[4pt]
K\!\left(\hat{y}_u,\;\bar{y}^{(m)}_u\right), & \text{regression},
\end{cases}
\]
with $K$ dot product for classification; correlation-based metric
$K(a,b)=\tilde{a}\,\tilde{b}=\frac{(a-\mu)(b-\mu)}{\sigma^2}$ for regression, where $\mu$ and $\sigma^2$ are the empirical mean and variance of labels over labeled nodes.
\end{definition}

\subsection{Why RDL is better at the low-homophily region for the classification task?}
\label{app: theory}

In this section, we analyze why \emph{RDL} exhibits advantages in \emph{low-homophily} regimes. Our argument adapts the Bayes-optimal analysis of \citet{wei2022understanding}, which characterizes how the optimal one-hop classifier changes with label/feature homophily. We do not provide original proof here; instead, we specialize their framework to \emph{metapath-projected RDB graphs} and use it to explain the observed behavior of RDL.

Following \S\ref{sec: homo}, let $\mathcal G=(V,\mathcal E)$ be the heterogeneous graph induced by the RDB task with labeled entity type $\mathsf F$ and node set $V_{\mathsf F}$. Let $\mathcal M$ be a finite family of self-looped metapaths $m$ that start and end at $\mathsf F$; each $m$ induces edges $\mathcal E_m$ on $V_{\mathsf F}$ and neighbor sets $N^{(m)}_v$. For clarity, we work with a binary label space $Y_v\in\{-1,+1\}$ drawn i.i.d.\ with prior $\Pr(Y_v=+1)=\pi\in(0,1)$. We next specify the generative model that underpins our analysis. We make the RDB data fall under the framework of \citet{wei2022understanding} by considering the metapath-induced graphs. 

\begin{definition}[Metapath-wise contextual SBM (tCSBM)]
\label{def:tcsbm}
Let $V_{\mathsf F}$ be the labeled entity set and $\mathcal M$ a finite set of self-looped metapaths on $\mathsf F$.
The generative model for $(Y,X,\{\mathcal E_m\}_{m\in\mathcal M})$ is:

\textbf{(Labels).}
Each node $v\in V_{\mathsf F}$ has a class label $Y_v\in\{+1,-1\}$ drawn i.i.d. from a prior
$\Pr(Y_v=1)=\pi\in(0,1)$

\textbf{(Node features).}
Conditional on $Y_v$, the attribute $X_v$ is drawn i.i.d. from a class-conditional distribution
$P_{Y_v}$ with density $p(\cdot\mid Y_v)$ (e.g., a Gaussian mixture).  Features are conditionally
independent across nodes given $Y$.

\textbf{(Edges along each metapath).}
For every metapath $m\in\mathcal M$, conditional on labels $Y$ the edges in $\mathcal E_m$ are independent and
\[
\Pr(\{u,v\}\in\mathcal E_m \mid Y_u = Y_v)=p_m,\qquad
\Pr(\{u,v\}\in\mathcal E_m \mid Y_u \neq Y_v)=q_m .
\]
Moreover, edges are conditionally independent of features given labels:
$\{\mathcal E_m\}_{m}$ $\perp\!\!\!\perp$ $X$ $\mid$ $Y$.
\end{definition}

\begin{remark}[Metapath-induced graphs as the substrate for DFS and RDL]
Def.~\ref{def:tcsbm} specifies, for each self-looped metapath $m$, an induced edge set $\mathcal E_m$ on
$V_{\mathsf F}$. In practice, both \textsc{DFS} and \textsc{RDL} operate on this metapath-induced
$\mathsf F$–$\mathsf F$ graph: one first \emph{projects} the heterogeneous joins along $m$ back to
$\mathsf F$ (e.g., via path counts or normalized weights) to obtain an adjacency $A^{(m)}$, and then
aggregates information over $A^{(m)}$. Concretely, \textsc{DFS} produces non-parametric, \emph{linear}
aggregates (e.g., \textsc{SUM}/\textsc{MEAN}) of base features on $V_{\mathsf F}$ through $A^{(m)}$
(or its row-normalized form), which algebraically coincides with multiplying by $A^{(m)}$. In contrast,
\textsc{RDL} uses the same metapath-induced structure but applies \emph{relation-aware} (per-metapath)
transformations or gates to the propagated signals before or during aggregation. Thus, after the
metapath projection, both methods are defined on the same $\mathsf F$–$\mathsf F$ graph if DFS only utilizes the mean aggregator; they differ
only in whether the propagation is purely linear and fixed (\textsc{DFS}) or relation-conditioned and
learned (\textsc{RDL}).
\end{remark}

Then, following \citet{wei2022understanding}, we consider the MAP estimation of the classifier that can minimize the misclassification rate. The estimation of a node label depends on its own attributes and the attributes of its 1-hop metapath-induced neighbors. 

The one-hop \emph{maximum a posteriori} (MAP) rule at node $v$ selects the label $y\in\{-1,+1\}$ that maximizes the joint posterior of $y$ and the (latent) neighbor labels $\{y_u\}_{u\in\cup_m N_v^{(m)}}$ given the local observations $(X_v,\{X_u\},\{\mathbf 1\{(v,u)\in\mathcal E_m\}\}_m)$:
\[
\hat Y_v
=\arg\max_{y\in\{\pm1\}}\ \max_{\{y_u\}}\;
\pi_y\,p(X_v\!\mid y)\;
\prod_{m\in\mathcal M}\ \prod_{u\in N_v^{(m)}}
\Big[\pi_{y_u}\,p(X_u\!\mid y_u)\,p_m^{\mathbf 1\{y_u=y\}}\,q_m^{\mathbf 1\{y_u\neq y\}}\Big].
\]

Taking logs (monotone) and subtracting the two class scores yields a decision function whose sign gives $\hat Y_v$:
\[
\hat Y_v=\mathrm{sign}\!\Big(
\log\tfrac{\pi}{1-\pi}
+\underbrace{\log\tfrac{p(X_v\mid +1)}{p(X_v\mid -1)}}_{\psi(X_v)}
+\sum_{m\in\mathcal M}\ \sum_{u\in N_v^{(m)}} 
\underbrace{\log\frac{\max\{p_m\,p(X_u\mid +1),\ q_m\,p(X_u\mid -1)\}}
{\max\{q_m\,p(X_u\mid +1),\ p_m\,p(X_u\mid -1)\}}}_{\phi_{\max}(\psi(X_u);\ \gamma_m)}
\Big),
\]
where $\gamma_m:=\log\!\frac{p_m}{q_m}$ encodes metapath homophily and $\,\psi(X_u):=\log\frac{p(X_u\mid +1)}{p(X_u\mid -1)}\,$ is the feature log-likelihood ratio. The per-neighbor MAP message admits the closed form
\[
\phi_{\max}(s;\gamma)=\log\frac{\max\{e^{\gamma+s},\,1\}}{\max\{e^{s},\,e^{\gamma}\}}
=\mathrm{clip}(s,\,-\gamma,\,+\gamma),
\]

\paragraph{Basic properties of the MAP message on metapaths.}
For $m\in\mathcal M$ let $\gamma_m:=\log\!\frac{p_m}{q_m}$ and define the per-neighbor joint-MAP
message $\phi_{\max}(s;\gamma)=\mathrm{clip}(s,-\gamma,\gamma)$ applied to $s=\psi(X_u)$.

\begin{lemma}[Gate-off, flip, and linear region]\label{lem:properties}
For all $s\in\mathbb R$ and $m\in\mathcal M$,
\begin{enumerate}
\item[\emph{(i)}] (\emph{gate-off}) if $\gamma_m=0$ then $\phi_{\max}(s;\gamma_m)\equiv 0$;
\item[\emph{(ii)}] (\emph{flip}) if $\gamma_m<0$ then $\phi_{\max}(s;\gamma_m)=-\phi_{\max}(s;|\gamma_m|)$;
\item[\emph{(iii)}] (\emph{linear region}) if $|s|\le \gamma_m$ then $\phi_{\max}(s;\gamma_m)=s$.
\end{enumerate}
\end{lemma}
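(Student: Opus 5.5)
The plan is to work directly from the log--max representation rather than from the shorthand $\mathrm{clip}(s,-\gamma,\gamma)$. The clip notation only makes sense when $\gamma\ge 0$, whereas part (ii) concerns $\gamma<0$. Taking logarithms of the closed form defining $\phi_{\max}$, and using that $\log$ is monotone (so $\log\max\{e^a,e^b\}=\max\{a,b\}$), I first record the identity
\[
\phi_{\max}(s;\gamma)=\max\{\gamma+s,\,0\}-\max\{s,\,\gamma\},
\]
valid for all $s,\gamma\in\mathbb R$. Every claim of Lemma~\ref{lem:properties} then reduces to elementary manipulations of maxima. As a side check, for $\gamma\ge 0$ a three-case split ($s\le-\gamma$, $|s|\le\gamma$, $s\ge\gamma$) recovers $\mathrm{clip}(s,-\gamma,\gamma)$, which justifies the notation used in the text.

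For (i), substitute $\gamma_m=0$: both terms become $\max\{s,0\}$, so their difference vanishes identically.

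For (iii), assume $|s|\le\gamma_m$, which forces $\gamma_m\ge 0$. Then $\gamma_m+s\ge 0$, so the first max equals $\gamma_m+s$. Also $s\le\gamma_m$, so the second max equals $\gamma_m$. The difference is therefore $s$.

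For (ii), write $\gamma_m=-g$ with $g>0$ and use the two translation identities
\[
\max\{s-g,0\}=\max\{s,g\}-g,\qquad \max\{s,-g\}=\max\{s+g,0\}-g,
\]
both obtained by shifting both arguments of a maximum by $\pm g$. Substituting them gives
\[
\phi_{\max}(s;-g)=\bigl(\max\{s,g\}-g\bigr)-\bigl(\max\{s+g,0\}-g\bigr)=-\bigl(\max\{s+g,0\}-\max\{s,g\}\bigr)=-\phi_{\max}(s;g),
\]
which is the flip property.

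The only step needing care is (ii). The pitfall is applying the clip formula literally with a negative threshold. Rederiving everything from the max form avoids that, and once the translation identities are in hand the argument is routine.
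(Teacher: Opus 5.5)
Your proof is correct, but it starts from a different place than the paper's. The paper reads all three claims off the piecewise clip form ($-\gamma$ for $s\le-\gamma$, $s$ for $|s|<\gamma$, $+\gamma$ for $s\ge\gamma$), and handles (ii) with the one-line remark that replacing $\gamma$ by $-\gamma$ flips signs. You instead take logarithms in the closed form to get $\phi_{\max}(s;\gamma)=\max\{\gamma+s,0\}-\max\{s,\gamma\}$, which holds for every real $\gamma$, and then prove (ii) with the two translation identities.

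For (i) and (iii) the two routes are essentially the same reading-off. For (ii) yours is the rigorous one. The piecewise form only makes sense for $\gamma\ge 0$: when $\gamma<0$, its first and third cases overlap on $[\gamma,-\gamma]$ and assign conflicting values there. A literal clipping $\max\{-\gamma,\min\{s,\gamma\}\}$ also fails, collapsing to the constant $|\gamma|$. So the paper's argument for the flip tacitly relies on the log--max definition that you make explicit.

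The paper's version buys brevity and the intuitive clipping picture. Yours gives a self-contained argument that is valid for either sign of $\gamma$, and its three-case side check justifies the clip notation when $\gamma\ge 0$.
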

\begin{proof}
All three statements follow immediately from the piecewise form of $\phi_{\max}(s;\gamma)$:
$\phi_{\max}(s;\gamma)=-\gamma$ for $s\le-\gamma$, equals $s$ for $|s|<\gamma$, and equals $+\gamma$ for
$s\ge\gamma$. In particular, $\gamma=0$ gives the zero map; replacing $\gamma$ by $-\gamma$ flips signs; and
on $[-\gamma,\gamma]$ the function is the identity.
\end{proof}

\begin{remark}[Interpretation of Lemma~\ref{lem:properties}]
Lemma~\ref{lem:properties} summarizes the three key regimes of the MAP message
$\phi_{\max}(s;\gamma)$. These properties clarify how structure and features interact:
\emph{(i)} when $\gamma=0$ the metapath carries no information and should be shut off (gate-off);
\emph{(ii)} when $\gamma<0$, the neighbor evidence must be flipped to align with the center label
(heterophily flip); and \emph{(iii)} when $|s|\le\gamma$ the nonlinearity reduces to the identity,
So in highly homophilous settings, the MAP rule coincides with linear DFS-style aggregation.
These simple facts underpin the later SNR comparison: they explain why linear propagation is
adequate in strong homophily, but why relation-aware gating is necessary and beneficial in
low or negative homophily regimes.
\end{remark}

We then come up with the vectorized form of the MAP score vector by aggregating each element

\begin{proposition}[Vector form on the metapath-projected $\mathsf F$--$\mathsf F$ graph.]
Let $A^{(m)}$ be the (possibly normalized) metapath-induced adjacency on $V_{\mathsf F}$ obtained by
projecting $m$ back to $\mathsf F$; put $s\in\mathbb R^{|V_{\mathsf F}|}$ with $s_v=\psi(X_v)$.
Writing $\phi_{\max}$ elementwise, the one-hop joint-MAP score vector is
\begin{equation}\label{eq:map-vector}
z
=\log\tfrac{\pi}{1-\pi}\,\mathbf 1
+s\;+\;\sum_{m\in\mathcal M} A^{(m)}\,\phi_{\max}\!\big(s;\gamma_m\big),
\qquad \hat Y_v=\mathrm{sign}(z_v).
\end{equation}
\end{proposition}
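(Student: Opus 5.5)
The proof is a direct vectorization of the scalar decision rule derived just before Lemma~\ref{lem:properties}. Fix a node $v\in V_{\mathsf F}$. The one-hop joint-MAP rule gives
\[
\hat Y_v=\mathrm{sign}\Big(\log\tfrac{\pi}{1-\pi}+\psi(X_v)+\sum_{m\in\mathcal M}\sum_{u\in N_v^{(m)}}\phi_{\max}(\psi(X_u);\gamma_m)\Big).
\]
The argument has three steps: justify this scalar form, identify each piece with the corresponding coordinate of \eqref{eq:map-vector}, and note that the sign is taken coordinatewise.

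\textbf{Step 1: the scalar form.} Take logs of the joint posterior. Because the product over $m$ and $u$ factorizes, the inner maximization over the latent labels $\{y_u\}$ decouples into independent maximizations, one per neighbor $u$ and per metapath. For a given $(u,m)$, the contribution when $y=+1$ is $\max\{\log p_m+\log p(X_u|+1),\ \log q_m+\log p(X_u|-1)\}$, plus prior terms. Subtracting the $y=-1$ score, the terms $\log p(X_u|-1)$ factor out of both maxima, leaving
\[
\log\frac{\max\{e^{\gamma_m+s},1\}}{\max\{e^{s},e^{\gamma_m}\}},\qquad s=\psi(X_u),
\]
which is $\mathrm{clip}(s,-\gamma_m,\gamma_m)$.

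There are two delicate points here.

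\emph{Neighbor priors.} The neighbor prior factors $\pi_{y_u}$ appear inside the max, so strictly they give $\max\{p_m\pi p(X_u|+1),\ q_m(1-\pi)p(X_u|-1)\}$. Following the displayed form and \citet{wei2022understanding}, I would absorb them into $\psi$ or assume $\pi=1/2$ for the neighbor factors. This is the step where the bookkeeping must match the paper's convention.

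\emph{Multiple metapaths.} A node $u$ may lie in $N_v^{(m)}$ for several $m$, which would couple the maximizations across metapaths. I would treat each $(m,u)$ incidence as a separate factor, exactly as the product $\prod_m\prod_u$ in the MAP objective already does, so the decoupling holds as written.

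\textbf{Step 2: coordinatewise identification.} Set $s_v=\psi(X_v)$ and let $\phi_{\max}(s;\gamma_m)$ act elementwise. With $A^{(m)}$ the unnormalized 0/1 metapath adjacency, we have
\[
\big(A^{(m)}\phi_{\max}(s;\gamma_m)\big)_v=\sum_{u}A^{(m)}_{vu}\,\phi_{\max}(s_u;\gamma_m)=\sum_{u\in N_v^{(m)}}\phi_{\max}(s_u;\gamma_m).
\]
Summing over $m$ and adding $\log\tfrac{\pi}{1-\pi}\mathbf 1+s$ reproduces the scalar score at $v$, so $z_v$ equals that score and $\hat Y_v=\mathrm{sign}(z_v)$.

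\textbf{Step 3: normalized adjacency.} For the ``possibly normalized'' case, I would remark that a row-normalized $A^{(m)}=D_m^{-1}\tilde A^{(m)}$ corresponds to rescaling the per-neighbor messages. Equivalently, the factor-graph weights are tempered per node, and the same identity holds with weighted neighbor sums. The exact MAP statement is therefore for unnormalized path-count adjacency (path counts count repeated incidences, consistent with Step 1), and the normalized version is the standard mean-field analogue used by DFS \textsc{MEAN}.

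The main obstacle is Step 1: making the decoupling of the latent maximization rigorous and keeping track of the neighbor prior terms so that the message is exactly $\mathrm{clip}(s,-\gamma_m,\gamma_m)$. The vectorization itself is routine.
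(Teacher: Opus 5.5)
Your Step 2 is the paper's proof: it takes the scalar one-hop MAP score as already derived, identifies $[A^{(m)}\phi_{\max}(s;\gamma_m)]_v=\sum_u A^{(m)}_{vu}\phi_{\max}(s_u;\gamma_m)$ with the inner sum over $N_v^{(m)}$, and sums over $m$, so your core argument is correct and follows the same route. Your Steps 1 and 3 go further than the paper, and the caveats you raise are real points the paper glosses over: it silently uses unit adjacency weights despite saying ``possibly normalized'', drops the neighbor prior factors, and ignores neighbors shared across metapaths. One correction to your prior fix: only $\pi=1/2$ (equivalently, dropping the neighbor priors as the paper's displayed scalar rule does) reproduces the stated identity exactly, whereas absorbing $\log\tfrac{\pi}{1-\pi}$ into $\psi$ gives the same structure with $s$ replaced by $s+\log\tfrac{\pi}{1-\pi}\mathbf 1$ and no separate constant, not the displayed formula with $s_v=\psi(X_v)$.
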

\begin{proof}
For each $v\in V_{\mathsf F}$, the one–hop joint–MAP score derived earlier is
$\mathrm{score}(v)=\log\frac{\pi}{1-\pi}+\psi(X_v)+\sum_{m}\sum_{u\in N^{(m)}_v}\phi_{\max}(\psi(X_u);\gamma_m)$.
Let $s\in\mathbb R^{|V_{\mathsf F}|}$ with $s_v=\psi(X_v)$ and define $\phi_m(s)$ elementwise by
$[\phi_m(s)]_u=\phi_{\max}(s_u;\gamma_m)$. By the definition of the metapath-projected adjacency
$A^{(m)}$ (with entries $A^{(m)}_{vu}$ supported on $u\in N^{(m)}_v$), we have
$[A^{(m)}\phi_m(s)]_v=\sum_{u}A^{(m)}_{vu}[\phi_m(s)]_u=\sum_{u\in N^{(m)}_v}\phi_{\max}(s_u;\gamma_m)$,
which reproduces the inner sum for metapath $m$; summing over $m$ gives the full neighbor contribution.
Hence the $v$-th coordinate of $z:=\log\frac{\pi}{1-\pi}\mathbf 1+s+\sum_m A^{(m)}\phi_m(s)$ equals
$\mathrm{score}(v)$, and the MAP decision is $\hat Y_v=\mathrm{sign}(z_v)$, proving the vector form.
\end{proof}

We then introduce a key concept of \emph{signal-to-noise ratio} (SNR) to compare the linear DFS-style aggregation and the gated RDL-style aggregation. The SNR is a standard metric in statistical signal processing and communication theory
that quantifies the strength of the signal relative to the noise. In our context, it measures how well the aggregated neighbor information can distinguish between different classes, taking into account the variability introduced by the features and the graph structure.

\begin{definition}[SNR bookkeeping on metapaths]
Let $\mu_{+}:=\mathbb E[\psi(X)\mid Y=+1]$, $\mu_{-}:=\mathbb E[\psi(X)\mid Y=-1]$,
$\delta:=\mu_{+}-\mu_{-}$, and let
\(
\sigma^2:=\max\{\operatorname{Var}(\psi(X)\mid Y=+1),\operatorname{Var}(\psi(X)\mid Y=-1)\}+\delta^2/4
\)
(to upper bound the class-mixture variance). Denote the expected degree $d_m:=\mathbb E[|N^{(m)}_v|]$ and
\begin{equation}\label{eq:alpha-def}
\alpha_m:=\Pr(Y_u{=}Y_v\mid \{u,v\}\in\mathcal E_m)-\Pr(Y_u{\neq}Y_v\mid \{u,v\}\in\mathcal E_m)
=\frac{p_m-q_m}{p_m+q_m}
=\tanh\!\Big(\frac{\gamma_m}{2}\Big),
\end{equation}
so that $\Pr(Y_u{=}Y_v\mid \{u,v\}\in\mathcal E_m)=(1+\alpha_m)/2$.
\end{definition}

We then introduce an assumption that controls the gap between the variance of the sum of metapath-wise neighbor contributions and the sum of their variances. It should be noted that the proof here assumes the feature is in the informative regime. 

\begin{assumption}[Cross-metapath covariance control]\label{asmp:cov}
There exists $\Lambda\ge 1$ such that for any choice of (centered) neighbor functions
$Z_m(v)=\sum_{u\in N_v^{(m)}}g_m(X_u)$,
\(
\operatorname{Var}(\sum_{m} Z_m(v)\mid Y_v)\le \Lambda \sum_m \operatorname{Var}(Z_m(v)\mid Y_v).
\)
This holds with $\Lambda=1$ under conditional independence across metapaths.
\end{assumption}

Here $g_m:\mathcal X\to\mathbb R$ denotes the \emph{per–metapath neighbor contribution} for $m$—e.g., in the linear/DFS case $g_m(x)=\psi(x)-\mathbb E[\psi(X)\mid Y_v]$, while in the gated/RDL case $g_m(x)=\phi_{\max}(\psi(x);\gamma_m)-\mathbb E[\phi_{\max}(\psi(X);\gamma_m)\mid Y_v]$ (both centered given $Y_v$).

\begin{lemma}[Mean--variance ledgers for linear vs.\ gated aggregation]\label{lem:mv}
Consider the \emph{linear} (DFS-style) neighbor sum
$S_{\lin}(v)=\sum_m\sum_{u\in N_v^{(m)}}\psi(X_u)$ and the \emph{gated} sum
$S_{\gate}(v)=\sum_m\sum_{u\in N_v^{(m)}}\phi_{\max}(\psi(X_u);\gamma_m)$. $d_{m}$ denotes the degree of metapaths. Then
\[
\mathbb E[S_{\lin}\mid Y_v=+1]-\mathbb E[S_{\lin}\mid Y_v=-1]=\sum_{m} d_m\,\alpha_m\,\delta,
\qquad
\operatorname{Var}(S_{\lin}\mid Y_v)\le \Lambda \sum_m d_m\,\sigma^2.
\]
Moreover, in the informative-feature regime of \citet[Lemma.~C1]{wei2022understanding},
\[
\mathbb E[S_{\gate}\mid Y_v=+1]-\mathbb E[S_{\gate}\mid Y_v=-1]\approx \sum_m 2 d_m\,\alpha_m\,\gamma_m,
\qquad
\operatorname{Var}(S_{\gate}\mid Y_v)\le \Lambda \sum_m d_m\,\tilde\sigma_m^2,
\]
with $\tilde\sigma_m^2\lesssim \gamma_m^2\,e^{-c\,\Delta^2}$ for some universal $c>0$ (here $\Delta$ denotes
a class-separation measure, e.g., the Gaussian separation or a calibrated logit separation).
\end{lemma}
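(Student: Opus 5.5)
The plan is to condition on the center label $Y_v$ and treat each neighbor contribution through the law of total expectation over the neighbor's label. Under Definition~\ref{def:tcsbm}, given $\{u,v\}\in\mathcal E_m$ and $Y_v=y$, the neighbor label satisfies $Y_u=y$ with probability $(1+\alpha_m)/2$ and $Y_u=-y$ with probability $(1-\alpha_m)/2$, by \eqref{eq:alpha-def}. Edges are conditionally independent of features given labels, so $X_u\mid Y_u$ has the class-conditional law. For any per-neighbor function $f$ this gives $\mathbb E[f(X_u)\mid Y_v=y]=\tfrac{1+\alpha_m}{2}\mathbb E[f\mid y]+\tfrac{1-\alpha_m}{2}\mathbb E[f\mid -y]$. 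Taking $y=+1$ minus $y=-1$ yields $\alpha_m(\mathbb E[f\mid +1]-\mathbb E[f\mid -1])$. I then multiply by the neighbor count and take expectations using Wald's identity, which requires that $|N^{(m)}_v|$ is independent of the neighbor features given $Y_v$. This produces the factor $d_m$. Summing over $m$ gives the mean gap $\sum_m d_m\alpha_m\,\Delta f_m$.

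For the linear ledger, take $f=\psi$, so $\Delta f=\delta$. This immediately gives $\sum_m d_m\alpha_m\delta$.

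For the variance, apply Assumption~\ref{asmp:cov} with the centered $g_m$ to reduce to $\Lambda\sum_m\operatorname{Var}(Z_m(v)\mid Y_v)$. Each $\operatorname{Var}(Z_m\mid Y_v)$ is a random sum of conditionally i.i.d. terms. Its per-term variance is the variance of a two-component mixture, which is at most $\max_y\operatorname{Var}(\psi\mid y)+\delta^2/4\le\sigma^2$. I would treat the degree as concentrated, taking the $\operatorname{Var}(|N|)$ term as lower order or the degree as fixed at $d_m$, so that $\operatorname{Var}(Z_m\mid Y_v)\le d_m\sigma^2$. The variance of the degree is the point to flag: without such an assumption an extra $\operatorname{Var}(|N^{(m)}_v|)\,(\text{mean})^2$ term appears.

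For the gated ledger, take $f=\phi_{\max}(\psi;\gamma_m)=\mathrm{clip}(\psi,-\gamma_m,\gamma_m)$, using Lemma~\ref{lem:properties}. The needed input is the informative-feature regime of \citet[Lemma C1]{wei2022understanding}. In that regime $\psi(X)\mid Y=+1$ exceeds $|\gamma_m|$ except with probability $\le e^{-c\Delta^2}$, and symmetrically for $-1$. Hence $\mathbb E[\phi_{\max}\mid +1]\approx\gamma_m$ and $\mathbb E[\phi_{\max}\mid -1]\approx-\gamma_m$, which gives $\Delta f\approx 2\gamma_m$ and the stated mean gap $\sum_m 2d_m\alpha_m\gamma_m$. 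The flip property handles $\gamma_m<0$ consistently, since $\alpha_m$ and $\gamma_m$ then change sign together.

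For the gated variance, the clipped variable is bounded by $|\gamma_m|$. Conditional on the neighbor's class, it equals the saturated value except on an event of probability $O(e^{-c\Delta^2})$. Its within-class variance is therefore $\le 4\gamma_m^2 e^{-c\Delta^2}$, which gives $\tilde\sigma_m^2$. The step I expect to be the main obstacle is the mixture component: the between-class spread of the neighbor-label mixture contributes roughly $(1-\alpha_m^2)\gamma_m^2$, which is not exponentially small. I would try to absorb it by interpreting $\tilde\sigma_m^2$ as the variance conditional on the neighbor labels, consistent with the ``$\approx$/$\lesssim$'' informality inherited from Wei et al. Failing that, I would state the bound as $\tilde\sigma_m^2\lesssim\gamma_m^2\big(e^{-c\Delta^2}+(1-\alpha_m^2)\big)$, making clear that the exponential form holds in the high-$|\alpha_m|$ regime. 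Finally, Assumption~\ref{asmp:cov} again aggregates the per-metapath variances into the factor $\Lambda$.
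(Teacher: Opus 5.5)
Your plan follows the paper's proof step for step. You condition on the neighbor's label with weights $(1\pm\alpha_m)/2$, pick up a factor $d_m$ per metapath, and combine metapaths through Assumption~\ref{asmp:cov}. You then use the informative regime of \citet{wei2022understanding} to get $\mathbb E[\phi_{\max}(\psi(X_u);\gamma_m)\mid Y_u=\pm1]\approx\pm\gamma_m$.

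The obstacle you flag in the gated variance is genuine, and the paper does not resolve it. Its proof quotes the variance of $\phi_{\max}$ conditional on the neighbor's own class and multiplies by $d_m$, which silently adopts your first option. Conditioned only on $Y_v$, this drops the between-class term $(1-\alpha_m^2)\gamma_m^2$, the analogue of the $\delta^2/4$ that the paper does build into $\sigma^2$ for the linear ledger. So even with fixed degrees, the per-metapath gated variance is of order $d_m(1-\alpha_m^2)\gamma_m^2$ rather than exponentially small. Since this bound is what produces the $e^{c'\Delta^2}$ factor in Theorem~\ref{thm:multi-adv}, your weakened bound is the one that is actually justified.

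On your degree caveat, you can sharpen both directions.
\begin{itemize}
\item Under the tCSBM, the summands $\mathbf 1\{\{u,v\}\in\mathcal E_m\}\,\psi(X_u)$ are independent across $u$ given $Y_v$. Hence the per-metapath variance is at most $d_m\,\mathbb E[\psi(X_u)^2\mid Y_v,\{u,v\}\in\mathcal E_m]$.
\item When $\mu_+=-\mu_-$ (e.g.\ equal-covariance Gaussians), each class has $\mathbb E[\psi^2\mid Y_u]=\operatorname{Var}(\psi\mid Y_u)+\delta^2/4\le\sigma^2$. So the linear ledger holds with random (binomial) degrees and needs no concentration hedge. This is presumably what the paper's ``binomial-variance bound'' means.
\item The same identity applied to the gated sum gives a variance of about $d_m\gamma_m^2$ in sparse graphs, because the degree fluctuation adds the missing $\alpha_m^2\gamma_m^2$. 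So your fallback $\gamma_m^2\bigl(e^{-c\Delta^2}+1-\alpha_m^2\bigr)$ requires the degrees to be held fixed.
\end{itemize}

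One further assumption is shared by you and the paper. Using $\Pr(Y_u=y\mid Y_v=y,\{u,v\}\in\mathcal E_m)=(1+\alpha_m)/2$ for both values of $y$, with $\alpha_m=(p_m-q_m)/(p_m+q_m)$, requires the balanced prior $\pi=1/2$.
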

\begin{proof}
\emph{Linear mean.} By exchangeability along metapath $m$ and linearity of expectation,
\[
\mathbb E\!\Big[\sum_{u\in N_v^{(m)}}\psi(X_u)\,\Big|\,Y_v=y\Big]
=d_m\,\mathbb E[\psi(X_u)\mid Y_v{=}y,\{u,v\}\!\in\!\mathcal E_m].
\]
Conditioning on $Y_u$ and using $\Pr(Y_u=y\mid \text{edge})=\tfrac{1+\alpha_m}{2}$,
\[
\mathbb E[\psi(X_u)\mid Y_v{=}y,\text{edge}]
=\tfrac{1+\alpha_m}{2}\mu_y+\tfrac{1-\alpha_m}{2}\mu_{-y}.
\]
Subtracting the two classes yields $d_m\alpha_m(\mu_{+}-\mu_{-})=d_m\alpha_m\delta$. Summing $m$ gives the
first display. \emph{Linear variance.} For each $m$,
$\operatorname{Var}(\sum_{u\in N_v^{(m)}}\psi(X_u)\mid Y_v)\le d_m\,\sigma^2$ by a binomial-variance
bound and the definition of $\sigma^2$. Assumption~\ref{asmp:cov} gives the sum across $m$.

\emph{Gated mean.} Using the same conditioning, and \citet[Lemma.~C1]{wei2022understanding} together with their
regime analysis, we have $\mathbb E[\phi_{\max}(\psi(X_u);\gamma_m)\mid Y_u=\pm 1]\approx \pm \gamma_m$ in the
informative regime. Therefore
\[
\mathbb E[\phi_{\max}(\psi(X_u);\gamma_m)\mid Y_v=y,\text{edge}]
\approx \tfrac{1+\alpha_m}{2}\gamma_m+\tfrac{1-\alpha_m}{2}(-\gamma_m)=\alpha_m\gamma_m,
\]
So the class difference is $\approx 2d_m\alpha_m\gamma_m$. \emph{Gated variance.} By \citet[Thm.~2]{wei2022understanding},
the class-conditional variance of $\phi_{\max}(\psi;\gamma_m)$ is at most $C\gamma_m^2 e^{-c\Delta^2}$ for constants
$C,c>0$. Summing over neighbors contributes a factor $d_m$, and Assumption~\ref{asmp:cov} handles the sum over $m$.
\end{proof}

\begin{definition}[Metapath-level SNR proxies]\label{def:snr}
Define
\[
\rho_{\lin}
:=\frac{\big(\sum_{m} d_m\,\alpha_m\,\delta\big)^2}{\Lambda\sum_m d_m\,\sigma^2},
\qquad
\rho_{\gate}
:=\frac{\big(\sum_{m} d_m\,\alpha_m\,\gamma_m\big)^2}{\Lambda\sum_m d_m\,\tilde\sigma_m^2}.
\]
By \citet[Thm.~2]{wei2022understanding} (single-relation), larger SNR implies a strictly smaller
misclassification error up to universal constants; we use \(\rho_{\lin},\rho_{\gate}\) as proxies
for multi-metapath graphs under Assumption~\ref{asmp:cov}.
\end{definition}

We then discuss the high-homophily case.
\begin{proposition}[Multi-metapath DFS equivalence under strong homophily]\label{prop:dfs-eq}
If for every active $m$ one has $\Pr(|\psi(X)|\le \gamma_m)\ge 1-\varepsilon_m$, then
\[
z
=\log\tfrac{\pi}{1-\pi}\,\mathbf 1
+s+\sum_m A^{(m)}\,\phi_{\max}(s;\gamma_m)
\;=\;
\log\tfrac{\pi}{1-\pi}\,\mathbf 1
+s+\sum_m A^{(m)}\,s
\;+\;r,
\]
where $\|r\|_1\le \sum_m \varepsilon_m\cdot \|A^{(m)}\mathbf 1\|_1$. In particular,
$\rho_{\gate}=\rho_{\lin}(1+o(1))$ as $\max_m\varepsilon_m\to 0$.
\end{proposition}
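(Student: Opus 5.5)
The plan is to compare $z$ with the linear score coordinatewise, isolating the discrepancy as
\[
r:=\sum_{m\in\mathcal M} A^{(m)}\big(\phi_{\max}(s;\gamma_m)-s\big).
\]
With this definition the displayed identity holds trivially by the vector form~\eqref{eq:map-vector}. The work is therefore (a) the $\ell_1$ bound on $r$, and (b) transferring that bound to the SNR proxies of Definition~\ref{def:snr}.

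\textbf{Step 1: pointwise structure of the error.} By Lemma~\ref{lem:properties}(iii), $\phi_{\max}(s_u;\gamma_m)-s_u=0$ whenever $|s_u|\le\gamma_m$. Outside that set, the piecewise form gives $|\phi_{\max}(s_u;\gamma_m)-s_u|=(|s_u|-\gamma_m)_+$. Hence
\[
|r_v|\le \sum_m\sum_u A^{(m)}_{vu}\,(|s_u|-\gamma_m)_+\,\mathbf 1\{|s_u|>\gamma_m\}.
\]
Summing over $v$ and exchanging the sums gives
\[
\|r\|_1\le\sum_m\sum_u\Big(\sum_v A^{(m)}_{vu}\Big)\,(|s_u|-\gamma_m)_+ .
\]

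\textbf{Step 2: from pointwise to the $\varepsilon_m$ bound.} Here I take expectations over the features, which are i.i.d.\ given labels and independent of edges given $Y$ by Definition~\ref{def:tcsbm}. I then need $\mathbb E[(|\psi(X)|-\gamma_m)_+]\le\varepsilon_m$. As literally stated, the hypothesis only controls the probability $\Pr(|\psi|>\gamma_m)\le\varepsilon_m$, not the size of the overshoot. So the statement has to be read in one of two ways: either the excess $(|\psi|-\gamma_m)_+$ is normalized (bounded by $1$, e.g.\ for calibrated or clipped logits), or $\varepsilon_m$ is taken to control $\mathbb E[(|\psi|-\gamma_m)_+]$. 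Under either reading, linearity yields
\[
\mathbb E\|r\|_1\le\sum_m\varepsilon_m\,\mathbf 1^\top A^{(m)}\mathbf 1=\sum_m\varepsilon_m\|A^{(m)}\mathbf 1\|_1 .
\]
This is the claimed bound, in expectation or with high probability via Markov. Making this interpretation of the hypothesis precise is the main obstacle I expect. If only the tail probability is available, I would instead assume $\psi$ has a finite second moment and use Cauchy--Schwarz, which gives a bound of order $\sqrt{\varepsilon_m}$ that still vanishes as $\varepsilon_m\to0$.

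\textbf{Step 3: SNR equivalence.} I would not use the informative-regime approximation $2d_m\alpha_m\gamma_m$ from Lemma~\ref{lem:mv}, since that approximation describes the opposite, saturated regime. Instead I would recompute the gated ledger exactly, using the same conditioning argument as in the linear-mean part of Lemma~\ref{lem:mv}. The per-neighbor class-difference of means for the gated message is
\[
\alpha_m\big(\mathbb E[\phi_{\max}\mid +1]-\mathbb E[\phi_{\max}\mid -1]\big)=\alpha_m\delta+O(\varepsilon_m),
\]
because the gated and linear messages agree except on an event of mass $\varepsilon_m$, with the overshoot controlled as in Step 2. Likewise the conditional variances differ by $O(\varepsilon_m)$, using the second-moment control. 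Summing over $m$ with weights $d_m$, and applying Assumption~\ref{asmp:cov} identically to both sums, the numerators and denominators of the two SNR ratios agree up to a multiplicative $(1+O(\max_m\varepsilon_m))$. This holds provided $\sum_m d_m\alpha_m\delta\neq0$ and $\sigma^2>0$, i.e.\ provided the linear SNR is nondegenerate. Under that proviso, $\rho_{\gate}=\rho_{\lin}(1+o(1))$ as $\max_m\varepsilon_m\to0$.
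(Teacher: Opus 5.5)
Your proposal follows essentially the same route as the paper's proof. Both use the residual $r=\sum_m A^{(m)}\big(\phi_{\max}(s;\gamma_m)-s\big)$, supported on $\{u:|s_u|>\gamma_m\}$ by Lemma~\ref{lem:properties}(iii), and both need a bound on the overshoot magnitude, which the paper supplies only implicitly (``after rescaling $s$'', at the price of an unstated universal constant $C$). The SNR claim is likewise argued in both from the fact that the gated and linear messages differ only on that rare set.

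If anything, your version is more careful. You make explicit that the tail-probability hypothesis alone does not bound the overshoot. You also recompute the gated mean--variance ledger directly instead of using the saturated-regime approximation of Lemma~\ref{lem:mv}, which would not yield $\rho_{\gate}\approx\rho_{\lin}$ in this regime.
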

\begin{proof}
By Lemma~\ref{lem:properties}(iii), $\phi_{\max}(s_v;\gamma_m)=s_v$ whenever $|s_v|\le \gamma_m$.
Write the error vector $e^{(m)}:=\phi_{\max}(s;\gamma_m)-s$, which has support contained in
$E_m:=\{v:|s_v|>\gamma_m\}$, and satisfies $\|e^{(m)}\|_{\infty}\le \max\{|s_v|-\gamma_m,\gamma_m\}\le 2|s_v|$.
Then
\[
\sum_m A^{(m)}\phi_{\max}(s;\gamma_m)=\sum_m A^{(m)}s+\sum_m A^{(m)}e^{(m)}.
\]
By Markov and the assumption, $\Pr(v\in E_m)\le \varepsilon_m$, so
$\|A^{(m)}e^{(m)}\|_1\le \|A^{(m)}\mathbf 1\|_1\cdot \|e^{(m)}\|_{\infty}\cdot \Pr(E_m)
\le C\,\varepsilon_m\|A^{(m)}\mathbf 1\|_1$ for a universal $C$ (after rescaling $s$), giving the stated bound
with $r=\sum_m A^{(m)}e^{(m)}$. The SNR statement follows because replacing $\phi_{\max}$ by $s$ changes the
mean and variance only on the rare set $E_m$.
\end{proof}

\paragraph{When nonlinearity is necessary in the multi-metapath setting.}
The next theorem upgrades \citet[Thm.~2]{wei2022understanding} from a single relation to multiple
metapaths by summing contributions under Assumption~\ref{asmp:cov}.

\begin{theorem}[Multi-metapath gating advantage in low/negative homophily]\label{thm:multi-adv}
Assume the feature separation is in the informative regime of \citet[Thm.~2]{wei2022understanding},
so that each active metapath $m$ admits $\tilde\sigma_m^2\lesssim \gamma_m^2 e^{-c\,\Delta^2}$.
If either (i) there exists at least one $m_-\in\mathcal M$ with $\gamma_{m_-}<0$
(\emph{heterophilous} metapath), or (ii) a non-negligible subset $\mathcal M_0$ satisfies
$|\gamma_m|\le \epsilon$ (\emph{near-zero homophily}), then there exist constants $C,c'>0$ such that
\[
\rho_{\gate}
\;\ge\; C\,e^{\,c'\Delta^2}\cdot
\frac{\big(\sum_m d_m\,|\alpha_m\gamma_m|\big)^2}{\sum_m d_m\,\gamma_m^2}
\quad\text{while}\quad
\rho_{\lin}\;\le\;\frac{\delta^2}{\sigma^2}\cdot
\frac{\big(\sum_m d_m\,\alpha_m\big)^2}{\sum_m d_m}.
\]
Consequently, whenever the signed sum $\sum_m d_m\,\alpha_m$ is small due to sign-mixing or
near-zero homophily, one has $\rho_{\gate}\gg \rho_{\lin}$, and the gated aggregation strictly
dominates linear aggregation.
\end{theorem}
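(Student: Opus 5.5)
The plan is to bound the two SNR proxies of Definition~\ref{def:snr} separately, using the mean--variance ledgers of Lemma~\ref{lem:mv} and Assumption~\ref{asmp:cov}. Then compare them in the two regimes (i) and (ii).

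\textbf{Linear bound.} This is essentially immediate. By Lemma~\ref{lem:mv}, the class-mean gap of $S_{\lin}$ is $\delta\sum_m d_m\alpha_m$ and the variance is at most $\Lambda\sigma^2\sum_m d_m$. Using $\Lambda\ge 1$, we get
\[
\rho_{\lin}\le \frac{\delta^2}{\sigma^2}\cdot\frac{\big(\sum_m d_m\alpha_m\big)^2}{\sum_m d_m}.
\]
The key point is that the signal enters only through the \emph{signed} sum $\sum_m d_m\alpha_m$. Since $\alpha_m=\tanh(\gamma_m/2)$ inherits the sign of $\gamma_m$, heterophilous metapaths contribute negatively and cancel homophilous ones. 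Near-zero metapaths contribute $O(\epsilon)$ to the numerator but their full $d_m$ to the denominator.

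\textbf{Gated bound.} The first step is to remove the cancellation. By Lemma~\ref{lem:properties}(ii), a metapath with $\gamma_m<0$ yields the message $-\phi_{\max}(s;|\gamma_m|)$. Rerunning the gated-mean computation of Lemma~\ref{lem:mv} for such a metapath, both $\alpha_m$ and the effective $\gamma_m$ flip sign. The per-metapath gap $2d_m\alpha_m\gamma_m$ is therefore always nonnegative and equals $2d_m|\alpha_m\gamma_m|$; this is the sign-invariance we need. Summing gives a mean gap $\approx 2\sum_m d_m|\alpha_m\gamma_m|$.

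For the variance, insert $\tilde\sigma_m^2\le C_0\gamma_m^2e^{-c\Delta^2}$ into Lemma~\ref{lem:mv}. This gives $\operatorname{Var}(S_{\gate}\mid Y_v)\le \Lambda C_0 e^{-c\Delta^2}\sum_m d_m\gamma_m^2$. Dividing, we obtain
\[
\rho_{\gate}\ge \frac{4}{\Lambda C_0}\,e^{c\Delta^2}\,\frac{\big(\sum_m d_m|\alpha_m\gamma_m|\big)^2}{\sum_m d_m\gamma_m^2},
\]
which is the claimed bound with $C=4/(\Lambda C_0)$ and $c'=c$. The ``$\approx$'' in the gated mean must be made one-sided. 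I would use the informative-regime statement of \citet[Lemma~C1]{wei2022understanding} to guarantee $\mathbb E[\phi_{\max}\mid Y_u=\pm1]=\pm\gamma_m(1-\eta)$ with $\eta\le 1/2$, say. That factor is then absorbed into $C$.

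\textbf{Comparison and main obstacle.} In case (i), take the $d_m$-weighted positive and negative parts of $\sum_m d_m\alpha_m$. These can nearly cancel, so the linear numerator is small while $\sum_m d_m|\alpha_m\gamma_m|$ is not. In case (ii), the near-zero set $\mathcal M_0$ adds $\sum_{\mathcal M_0}d_m$ to the linear denominator. Its contribution to the gated denominator is only $\epsilon^2\sum_{\mathcal M_0}d_m$, so gating effectively switches these metapaths off, in line with Lemma~\ref{lem:properties}(i). Combining the two cases with the factor $e^{c'\Delta^2}$ yields $\rho_{\gate}\gg\rho_{\lin}$.

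I expect the main obstacle to be making the gated mean and variance estimates rigorous and uniform across metapaths. The bound $\tilde\sigma_m^2\lesssim\gamma_m^2e^{-c\Delta^2}$ is quoted from a single-relation theorem, and its constants must not depend on $m$. For tiny $|\gamma_m|$, the clipped message is nearly constant, and the ``informative regime'' approximation degenerates. I would first try to handle this by treating $\mathcal M_0$ separately: bound its gated variance crudely by $\gamma_m^2\le\epsilon^2$ and its mean contribution by zero, and apply the \citet{wei2022understanding} estimates only to metapaths with $|\gamma_m|>\epsilon$. The resulting statement is qualitative (``$\gg$''), so a residual $O(\epsilon^2)$ loss is acceptable.
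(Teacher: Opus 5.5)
Your proposal is correct and follows the paper's proof. You bound $\rho_{\lin}$ from Lemma~\ref{lem:mv} using $\Lambda\ge 1$; you bound $\rho_{\gate}$ by inserting $\tilde\sigma_m^2\lesssim\gamma_m^2e^{-c\Delta^2}$ and using that $\alpha_m=\tanh(\gamma_m/2)$ shares the sign of $\gamma_m$ (so $\alpha_m\gamma_m=|\alpha_m\gamma_m|$, which your flip argument recovers); then you compare the two qualitatively under sign-mixing or near-zero homophily.

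The extra care you plan for the ``$\approx$'' in the gated mean and for $\mathcal M_0$ is not needed. The paper takes $\rho_{\gate}$ to be the definitional proxy of Definition~\ref{def:snr}, and the theorem's hypothesis already grants the variance bound for every active metapath.
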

\begin{proof}
By Lemma~\ref{lem:mv},
\(
\rho_{\gate}=\frac{(\sum_m d_m\alpha_m\gamma_m)^2}{\Lambda\sum_m d_m\tilde\sigma_m^2}.
\)
Using $\tilde\sigma_m^2\le C_1\gamma_m^2 e^{-c\Delta^2}$ from \citet[Thm.~2]{wei2022understanding},
\[
\rho_{\gate}\ \ge\ \frac{(\sum_m d_m|\alpha_m\gamma_m|)^2}{\Lambda C_1 e^{-c\Delta^2}\sum_m d_m\gamma_m^2}
\ \ge\ C\,e^{\,c'\Delta^2}\cdot
\frac{(\sum_m d_m|\alpha_m\gamma_m|)^2}{\sum_m d_m\gamma_m^2},
\]
absorbing constants into $C,c'$. For the linear SNR, Lemma~\ref{lem:mv} gives
\(
\rho_{\lin}=\frac{(\sum_m d_m\alpha_m\delta)^2}{\Lambda\sum_m d_m\sigma^2}
\le \frac{\delta^2}{\sigma^2}\cdot\frac{(\sum_m d_m\alpha_m)^2}{\sum_m d_m}
\)
(after rescaling $\Lambda$ into the constant). Under condition (i) or (ii), $\sum_m d_m\alpha_m$ can be
made small (sign-mixing and near-zero homophily, respectively), whereas $\sum_m d_m|\alpha_m\gamma_m|$
remains of the order $\sum_m d_m\gamma_m^2$ because $\alpha_m=\tanh(\gamma_m/2)$ has the same sign as
$\gamma_m$ and $|\alpha_m|\gtrsim |\gamma_m|$ for small $|\gamma_m|$. Combined with the exponential
factor $e^{c'\Delta^2}$, this yields $\rho_{\gate}\gg \rho_{\lin}$.
\end{proof}

\begin{corollary}[Sign-mixing amplifies the gain of gating]\label{cor:sign-mix}
If there exist $m_+$ and $m_-$ with $\gamma_{m_+}>0$ and $\gamma_{m_-}<0$, then
\[
\frac{\rho_{\gate}}{\rho_{\lin}}
\;\gtrsim\;
e^{\,c'\Delta^2}\cdot
\frac{\big(\sum_m d_m|\alpha_m\gamma_m|\big)^2}{\big(\sum_m d_m|\alpha_m|\big)^2}
\cdot
\frac{\sigma^2}{\overline{\tilde\sigma^2}},
\quad
\overline{\tilde\sigma^2}:=\frac{\sum_m d_m \tilde\sigma_m^2}{\sum_m d_m},
\]
So the advantage grows with the degree-weighted \emph{sign diversity} across metapaths.
\end{corollary}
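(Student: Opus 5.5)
The plan is to form the ratio of the two SNR proxies of Definition~\ref{def:snr} directly and bound numerator and denominator separately, reusing the ledgers of Lemma~\ref{lem:mv} and the bounds already established in Theorem~\ref{thm:multi-adv}. Write
\[
\frac{\rho_{\gate}}{\rho_{\lin}}
=\frac{\big(\sum_m d_m\alpha_m\gamma_m\big)^2}{\big(\sum_m d_m\alpha_m\big)^2\delta^2}\cdot
\frac{\sum_m d_m\,\sigma^2}{\sum_m d_m\,\tilde\sigma_m^2},
\]
where the factors $\Lambda$ cancel. The second factor equals exactly $\sigma^2/\overline{\tilde\sigma^2}$ by the definition of the degree-weighted average $\overline{\tilde\sigma^2}$. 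So the whole task reduces to comparing the two squared signed sums with their absolute-value counterparts.

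\textbf{Gated numerator.} Since $\alpha_m=\tanh(\gamma_m/2)$ has the same sign as $\gamma_m$, each product $\alpha_m\gamma_m\ge 0$. Hence $\sum_m d_m\alpha_m\gamma_m=\sum_m d_m|\alpha_m\gamma_m|$ holds with no loss, regardless of sign mixing. This is the key structural point: the gated aggregation turns every heterophilous metapath into a positive contribution, via the flip property of Lemma~\ref{lem:properties}(ii).

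\textbf{Linear denominator.} By the triangle inequality, $|\sum_m d_m\alpha_m|\le\sum_m d_m|\alpha_m|$. The existence of $m_+$ and $m_-$ with opposite signs makes this inequality strict, and the slack grows with the degree-weighted mass on the minority sign. Substituting this upper bound into the denominator yields a lower bound on the ratio of the form
\[
\frac{\big(\sum_m d_m|\alpha_m\gamma_m|\big)^2}{\big(\sum_m d_m|\alpha_m|\big)^2}\cdot\frac{\sigma^2}{\overline{\tilde\sigma^2}}\cdot\frac{1}{\delta^2}.
\]

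\textbf{Exponential factor.} This step is the main obstacle. The stated bound shows both $\sigma^2/\overline{\tilde\sigma^2}$ and a separate $e^{c'\Delta^2}$, yet $\tilde\sigma_m^2\lesssim\gamma_m^2e^{-c\Delta^2}$ already places the exponential inside the variance ratio. I would read the corollary as up to constants, using $\gtrsim$ to absorb $1/\delta^2$. In the informative regime $\delta$ is tied to the separation $\Delta$; for the Gaussian case $\delta\asymp\Delta^2$. A polynomial factor like this can be absorbed by slightly decreasing $c'$, so $e^{c'\Delta^2}/\delta^2\gtrsim e^{c''\Delta^2}$. The $e^{c'\Delta^2}$ in the statement is then either this slack or a harmless redundant restatement.

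I would state explicitly that the inequality is meant in this constant-absorbing sense. Finally, I would note that the ratio $\big(\sum d_m|\alpha_m|\big)/\big|\sum d_m\alpha_m\big|$ diverges as the positive and negative contributions balance. This justifies the interpretation that the gain grows with sign diversity.
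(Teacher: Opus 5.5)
Your first two steps are correct, and they are essentially the whole argument. The paper gives no separate proof; the corollary is meant to follow by dividing the two bounds of Theorem~\ref{thm:multi-adv}. Your identity
\[
\frac{\rho_{\gate}}{\rho_{\lin}}=\frac{\big(\sum_m d_m|\alpha_m\gamma_m|\big)^2}{\big(\sum_m d_m\alpha_m\big)^2}\cdot\frac{\sigma^2}{\delta^2\,\overline{\tilde\sigma^2}}
\]
is exact, and the triangle inequality then gives a valid bound with $\big(\sum_m d_m|\alpha_m|\big)^2$ in the denominator.

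The gap is the ``exponential factor'' step, and it cannot be closed, because the printed inequality does not follow. Comparing with the identity, the claim is equivalent to
\[
\frac{\big(\sum_m d_m|\alpha_m|\big)^2}{\big(\sum_m d_m\alpha_m\big)^2}\;\gtrsim\;\delta^2 e^{c'\Delta^2}.
\]
This relates purely structural quantities to the feature separation, and it is not implied by the mere existence of $m_\pm$. Take two metapaths with $d_{m_\pm}=1$, $\alpha_{m_+}=1/2$, $\alpha_{m_-}=-1/10$: the left side is $9/4$, while the right side diverges as $\Delta$ grows.

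Your reconciliation fails on two counts:
\begin{itemize}
\item $1/\delta^2$ is not a constant that $\gtrsim$ may absorb; it decays like $\Delta^{-4}$ in the Gaussian case you cite.
\item Your bound has no free $e^{c'\Delta^2}$ to pair with $1/\delta^2$. The only exponential is the one already inside $\sigma^2/\overline{\tilde\sigma^2}$, via $\tilde\sigma_m^2\lesssim\gamma_m^2e^{-c\Delta^2}$, and it cannot be spent twice.
\end{itemize}

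What you can prove is a corrected form. The missing ingredient for disposing of $1/\delta^2$ is that $\sigma^2\ge\delta^2/4$ holds by definition. Combined with $\tilde\sigma_m^2\lesssim\gamma_m^2e^{-c\Delta^2}$, this gives
\[
\frac{\rho_{\gate}}{\rho_{\lin}}\;\ge\;\frac14\cdot\frac{\big(\sum_m d_m|\alpha_m\gamma_m|\big)^2}{\big(\sum_m d_m|\alpha_m|\big)^2}\cdot\frac{1}{\overline{\tilde\sigma^2}}\;\gtrsim\;e^{c\Delta^2}\cdot\frac{\big(\sum_m d_m|\alpha_m\gamma_m|\big)^2}{\big(\sum_m d_m|\alpha_m|\big)^2}\cdot\frac{\sum_m d_m}{\sum_m d_m\gamma_m^2}.
\]
This is also what dividing the two displays of Theorem~\ref{thm:multi-adv} produces. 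There the exponential comes paired with $\sum_m d_m\gamma_m^2$, not with $\sum_m d_m\tilde\sigma_m^2$, so the printed statement double-counts it.

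Finally, your closing remark about sign diversity is true of the exact ratio. However, the factor $\big(\sum_m d_m|\alpha_m|\big)^2/\big(\sum_m d_m\alpha_m\big)^2$ that carries it is precisely what the triangle inequality throws away, so the resulting bound is sign-blind. To make the ``grows with sign diversity'' conclusion a theorem, keep the signed sum $\big(\sum_m d_m\alpha_m\big)^2$ in the denominator.
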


\begin{corollary}[Zero-information robustness]\label{cor:zero-info}
If $|\gamma_m|=0$ for a subset $\mathcal M_0$ (no homophily), then these metapaths contribute nothing to
$\rho_{\gate}$ (by Lemma~\ref{lem:properties}(i)) but still inflate the denominator of $\rho_{\lin}$,
decreasing the linear SNR. Thus, gating is robust to uninformative relations, whereas linear averaging is not.
\end{corollary}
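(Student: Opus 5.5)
The statement to prove is Corollary~\ref{cor:zero-info}. I would argue directly from the definitions of $\rho_{\gate}$ and $\rho_{\lin}$ in Definition~\ref{def:snr}, splitting $\mathcal M=\mathcal M_0\cup\mathcal M_1$ with $\mathcal M_1:=\mathcal M\setminus\mathcal M_0$.

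\textbf{Gated part.} For $m\in\mathcal M_0$ we have $\gamma_m=0$, so Lemma~\ref{lem:properties}(i) gives $\phi_{\max}(\psi(X_u);0)\equiv 0$. Every neighbor message along such a metapath is therefore deterministically zero. Consequently:
\begin{itemize}
\item the metapath's contribution to the mean gap of $S_{\gate}$ in Lemma~\ref{lem:mv} vanishes;
\item its centered neighbor function $g_m$ is identically zero, so its conditional variance vanishes.
\end{itemize}
This holds exactly, not only in the informative-regime approximation, because the summand is the zero random variable. Under Assumption~\ref{asmp:cov} with $g_m\equiv 0$ for $m\in\mathcal M_0$,
\[
\rho_{\gate}(\mathcal M)=\frac{\big(\sum_{m\in\mathcal M_1}d_m\alpha_m\gamma_m\big)^2}{\Lambda\sum_{m\in\mathcal M_1}d_m\tilde\sigma_m^2}=\rho_{\gate}(\mathcal M_1).
\]
The numerator terms vanish because $\alpha_m\gamma_m=0$ when $\gamma_m=0$. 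One point needs care: $\Lambda$ should be read as the constant for the family of nonzero $g_m$. Adding zero functions can only leave the variance of the sum unchanged, so the same $\Lambda$ remains valid.

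\textbf{Linear part.} For $m\in\mathcal M_0$, equation~\eqref{eq:alpha-def} gives $\alpha_m=\tanh(0)=0$, so the numerator $\big(\sum_m d_m\alpha_m\delta\big)^2$ receives nothing from $\mathcal M_0$. The denominator, however, is $\Lambda\sigma^2\sum_{m\in\mathcal M}d_m$, in which each $m\in\mathcal M_0$ adds $d_m\sigma^2>0$; the raw features $\psi(X_u)$ of those neighbors still carry variance $\sigma^2$. Hence
\[
\rho_{\lin}(\mathcal M)=\rho_{\lin}(\mathcal M_1)\cdot\frac{\sum_{m\in\mathcal M_1}d_m}{\sum_{m\in\mathcal M_1}d_m+\sum_{m\in\mathcal M_0}d_m}<\rho_{\lin}(\mathcal M_1)
\]
whenever $\sum_{m\in\mathcal M_0}d_m>0$. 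This inequality is strict and decreases monotonically in the total degree of the uninformative metapaths.

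\textbf{Conclusion and main obstacle.} Combining the two displays, $\rho_{\gate}$ is invariant to appending $\mathcal M_0$ while $\rho_{\lin}$ strictly decreases. By the monotone SNR–error link invoked in Definition~\ref{def:snr}, gating is therefore robust to uninformative relations and linear averaging is not. The main obstacle is the linear variance step. Lemma~\ref{lem:mv} only provides an upper bound on the variance, so I would argue on the proxy $\rho_{\lin}$ as defined, whose denominator is exactly $\Lambda\sum_m d_m\sigma^2$. If a statement about the true variance is wanted, I would add that, under conditional independence of neighbor features given labels, each $m\in\mathcal M_0$ contributes at least $d_m\min_y\operatorname{Var}(\psi(X)\mid Y=y)>0$. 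This also requires a lower covariance bound (e.g.\ $\Lambda=1$ with independence), which I would state as a mild additional assumption.
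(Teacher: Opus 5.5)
Your proposal is correct and takes the same route as the paper. The paper gives no separate proof and justifies the corollary inline: Lemma~\ref{lem:properties}(i) makes the gated messages on $\mathcal M_0$ identically zero, so those metapaths drop out of $\rho_{\gate}$, while $\alpha_m=\tanh(\gamma_m/2)=0$ removes them from the numerator of $\rho_{\lin}$ even though $d_m\sigma^2$ still enters its denominator. Your factor $\sum_{m\in\mathcal M_1}d_m/\sum_{m\in\mathcal M}d_m$ makes this quantitative. One small correction: the decrease is strict only when $\rho_{\lin}(\mathcal M_1)>0$, i.e.\ $\delta\neq 0$ and $\sum_{m\in\mathcal M_1}d_m\alpha_m\neq 0$.
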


\begin{corollary}[Average-homophily can be misleading]\label{cor:avg-vs-abs}
Let
\(
\Gamma_{\mathrm{avg}}:=\tfrac{\sum_m d_m \gamma_m}{\sum_m d_m}
\)
and
\(
\Gamma_{\mathrm{abs}}:=\tfrac{\sum_m d_m |\gamma_m|}{\sum_m d_m}.
\)
Even if $\Gamma_{\mathrm{avg}}>0$ (net assortative), when the \emph{disagreement} 
$\Gamma_{\mathrm{abs}}-|\Gamma_{\mathrm{avg}}|$ is large (sign-mixing/heterogeneity across metapaths)
and features are informative, one still has $\rho_{\gate}\gg \rho_{\lin}$; hence average homophily
alone does not decide in favor of linear aggregation.
\end{corollary}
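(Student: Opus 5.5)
The plan is to reduce the statement to the two bounds of Theorem~\ref{thm:multi-adv} and Corollary~\ref{cor:sign-mix}, with the disagreement $\Gamma_{\mathrm{abs}}-|\Gamma_{\mathrm{avg}}|$ replacing explicit sign conditions. I first use the fact that $\alpha_m=\tanh(\gamma_m/2)$ is odd and increasing, so $\mathrm{sign}(\alpha_m)=\mathrm{sign}(\gamma_m)$ and hence $\alpha_m\gamma_m=|\alpha_m||\gamma_m|\ge 0$ for every $m$. Consequently the numerator of $\rho_{\gate}$ in Definition~\ref{def:snr} equals $(\sum_m d_m|\alpha_m\gamma_m|)^2$, which has no cancellation. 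The gated SNR therefore depends only on the absolute homophily levels, not on their signs. Combining this with Lemma~\ref{lem:mv} and $\tilde\sigma_m^2\lesssim\gamma_m^2e^{-c\Delta^2}$ gives the lower bound of Theorem~\ref{thm:multi-adv}. That lower bound is of order $e^{c'\Delta^2}\,\Gamma_{\mathrm{abs}}$-type quantities, and this holds whether or not $\Gamma_{\mathrm{avg}}>0$.

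Next I control the signed sum that governs $\rho_{\lin}$. Assume all $|\gamma_m|\le\bar\gamma$, a bounded range. Then $\tanh(x/2)$ is bi-Lipschitz on $[-\bar\gamma,\bar\gamma]$, so there are constants $c_1\le c_2$ with $c_1|\gamma_m|\le|\alpha_m|\le c_2|\gamma_m|$. Split $\mathcal M$ into $\mathcal M_+$ and $\mathcal M_-$ by the sign of $\gamma_m$, and let $D=\sum_m d_m$. Then $D\Gamma_{\mathrm{abs}}=G_++G_-$ and $D\Gamma_{\mathrm{avg}}=G_+-G_-$, where $G_\pm=\sum_{m\in\mathcal M_\pm}d_m|\gamma_m|$. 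The disagreement satisfies $\Gamma_{\mathrm{abs}}-|\Gamma_{\mathrm{avg}}|=2\min(G_+,G_-)/D$. So large disagreement means both signed groups carry mass. The key step is to bound
\[
\Big|\sum_m d_m\alpha_m\Big|\le c_2\max(G_+,G_-)-c_1\min(G_+,G_-).
\]
This shows the linear numerator shrinks relative to the absolute mass as the disagreement grows. I then insert this into the upper bound on $\rho_{\lin}$ from Theorem~\ref{thm:multi-adv}.

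Finally I form the ratio as in Corollary~\ref{cor:sign-mix}:
\[
\frac{\rho_{\gate}}{\rho_{\lin}}\gtrsim e^{c'\Delta^2}\,\frac{\sigma^2}{\overline{\tilde\sigma^2}}\cdot\frac{(\sum_m d_m|\alpha_m\gamma_m|)^2}{(\sum_m d_m\alpha_m)^2}\cdot\frac{\sum_m d_m}{\sum_m d_m\gamma_m^2}.
\]
The informative-feature factor $e^{c'\Delta^2}$ is already large. The disagreement term only enlarges the middle factor. Hence $\rho_{\gate}\gg\rho_{\lin}$ holds even with $\Gamma_{\mathrm{avg}}>0$, which shows that average homophily alone cannot certify linear aggregation.

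The main obstacle is the middle factor. If $c_1\ne c_2$, meaning the $|\gamma_m|$ are heterogeneous, the cancellation in $\sum_m d_m\alpha_m$ is not exactly governed by $\min(G_+,G_-)$. The bound then only gives "large disagreement relative to $|\Gamma_{\mathrm{avg}}|$". I would first try to state the result for near-uniform magnitudes, where $c_1\approx c_2$. Otherwise I would rely on the exponential factor alone and treat the sign-mixing as amplifying the gain rather than as a requirement for it. This matches the informal "$\gg$" in the statement.
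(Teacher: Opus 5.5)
Your route is the paper's. The paper gives no separate proof of this corollary; it is read off Theorem~\ref{thm:multi-adv}. A positive disagreement $\Gamma_{\mathrm{abs}}-|\Gamma_{\mathrm{avg}}|=2\min(G_+,G_-)/D$ means metapaths of both signs carry weight, which is hypothesis (i). Nothing in the theorem depends on the sign of $\Gamma_{\mathrm{avg}}$, and the theorem's proof closes with the factor $e^{c'\Delta^2}$. Your bookkeeping is correct and more explicit than the paper: $\alpha_m\gamma_m=|\alpha_m\gamma_m|$, the $G_\pm$ split, and the bound on $|\sum_m d_m\alpha_m|$.

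The step that fails as written is your final display, which counts the gated-variance gain twice. The factor $e^{c'\Delta^2}\sum_m d_m/\sum_m d_m\gamma_m^2$ is exactly the bound on $1/\overline{\tilde\sigma^2}$ that comes from $\tilde\sigma_m^2\lesssim\gamma_m^2e^{-c\Delta^2}$. So it cannot also be multiplied by $\sigma^2/\overline{\tilde\sigma^2}$. From Definition~\ref{def:snr} the exact ratio is
\[
\frac{\rho_{\gate}}{\rho_{\lin}}=\frac{\bigl(\sum_m d_m|\alpha_m\gamma_m|\bigr)^2}{\bigl(\sum_m d_m\alpha_m\bigr)^2}\cdot\frac{\sigma^2}{\delta^2\,\overline{\tilde\sigma^2}}.
\]
Dividing the two bounds of Theorem~\ref{thm:multi-adv} gives your display with $\sigma^2/\overline{\tilde\sigma^2}$ replaced by $\sigma^2/\delta^2\ge 1/4$. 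For the same reason, do not use Corollary~\ref{cor:sign-mix} as a black box. It is unproved and has the same double count. Its denominator $\sum_m d_m|\alpha_m|$ is also blind to signs, so sign-mixing cannot enter through it.

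The obstacle you flag is genuine, not an artifact of your bound. Take unit degrees, one metapath with $\gamma=K+1$, and $K$ metapaths with $\gamma=-1$. Then $\Gamma_{\mathrm{avg}}=1/(K+1)>0$ and the disagreement $2K/(K+1)$ is nearly all of $\Gamma_{\mathrm{abs}}=(2K+1)/(K+1)$. Yet $\sum_m d_m\alpha_m\approx 1-0.46K$ is comparable in size to $\sum_m d_m|\alpha_m|\approx 1+0.46K$, and even has the opposite sign to $\Gamma_{\mathrm{avg}}$. Because $\tanh$ saturates, balance in $\gamma$ is not balance in $\alpha$.

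Even in your near-uniform case $\alpha_m\approx c\gamma_m$, you get $\sum_m d_m\alpha_m\approx cD\Gamma_{\mathrm{avg}}$. So cancellation is controlled by the ratio $\Gamma_{\mathrm{abs}}/|\Gamma_{\mathrm{avg}}|$, not by the absolute disagreement. In that case Cauchy--Schwarz ($\sum_m d_m\gamma_m^2\ge D\Gamma_{\mathrm{abs}}^2$) makes the last two factors of the corrected display at least $(\Gamma_{\mathrm{abs}}/\Gamma_{\mathrm{avg}})^2$.

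So the exponential factor has to carry the proof, and the line you omit is a sign-independent floor. Under $|\gamma_m|\le\bar\gamma$ you have $\sum_m d_m|\alpha_m\gamma_m|\ge c_1\sum_m d_m\gamma_m^2$ and $(\sum_m d_m\alpha_m)^2\le D\sum_m d_m\alpha_m^2\le c_2^2D\sum_m d_m\gamma_m^2$. Hence $\rho_{\gate}/\rho_{\lin}\gtrsim e^{c'\Delta^2}(c_1/c_2)^2$ for every sign pattern, in particular when $\Gamma_{\mathrm{avg}}>0$. That is all the corollary asserts; sign-mixing only adds the amplification factor above.
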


\subsection{Sample-size dependence of RDL vs.\ DFS}
\label{sec:sample_complexity}
We complement the homophily analysis by explaining why the number of
training rows $N_{\text{train}}$ systematically modulates the RDL--DFS
gap. The key observation is that RDL realizes a strictly richer
hypothesis class than DFS (due to learnable, relation-specific
nonlinear aggregation), so it enjoys smaller approximation error but
larger estimation error. Standard generalization bounds then imply a
sample-size threshold: DFS tends to dominate in low-data regimes,
while RDL becomes superior once $N_{\text{train}}$ is large enough.
This argument applies to both classification (cross-entropy loss) and
regression (squared loss), and we do not distinguish them below.

\paragraph{Setup and error decomposition.}
Let $\mathcal{Z}$ denote the space of task-table rows (after
feature/aggregation), and let $\ell(\cdot,\cdot)$ be a bounded,
$L$-Lipschitz loss (e.g.\ cross-entropy or squared error). For a
pipeline $\pi\in\{\mathrm{DFS},\mathrm{RDL}\}$, denote by
$\mathcal{F}_\pi \subset \{f:\mathcal{Z}\to\mathbb{R}\}$ the induced
prediction class, and by
\[
  \mathcal{R}(f)
  \;=\;
  \mathbb{E}\big[\ell(f(Z),Y)\big]
\]
its population risk under the task's data-generating distribution.
Let $\widehat{\mathcal{R}}_N(f)$ be the empirical risk on $N$ training
rows, and let
\(
  \hat f_\pi \in \arg\min_{f\in\mathcal{F}_\pi}
  \widehat{\mathcal{R}}_N(f)
\)
be the empirical risk minimizer (or an approximate one).

\begin{lemma}[Approximation--estimation decomposition]
For each pipeline $\pi$, define the Bayes risk
$\mathcal{R}^\star := \inf_{f} \mathcal{R}(f)$, the approximation
error
\(
  A_\pi := \inf_{f\in\mathcal{F}_\pi}
  \big(\mathcal{R}(f) - \mathcal{R}^\star\big)
\),
and the estimation error
\(
  E_\pi(N) := 
  \mathbb{E}\big[\mathcal{R}(\hat f_\pi)\big]
  - \inf_{f\in\mathcal{F}_\pi}\mathcal{R}(f).
\)
Then
\[
  \mathbb{E}\big[\mathcal{R}(\hat f_\pi)\big] - \mathcal{R}^\star
  \;=\;
  A_\pi + E_\pi(N).
\]
Moreover, if $\mathfrak{R}_N(\mathcal{F}_\pi)$ denotes the empirical
Rademacher complexity of $\mathcal{F}_\pi$ on $N$ samples, then there
exists a universal constant $C>0$ such that
\[
  E_\pi(N) \;\le\;
  C \,\mathfrak{R}_N(\mathcal{F}_\pi).
\]
\end{lemma}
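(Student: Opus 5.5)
The decomposition is an exact telescoping identity, so I would begin there. For each pipeline $\pi$, add and subtract $\inf_{f\in\mathcal F_\pi}\mathcal R(f)$:
\[
\mathbb E[\mathcal R(\hat f_\pi)]-\mathcal R^\star
=\Big(\mathbb E[\mathcal R(\hat f_\pi)]-\inf_{f\in\mathcal F_\pi}\mathcal R(f)\Big)+\Big(\inf_{f\in\mathcal F_\pi}\mathcal R(f)-\mathcal R^\star\Big).
\]
Because $\mathcal R^\star$ does not depend on $f$, the second bracket equals $A_\pi$, and the first bracket is $E_\pi(N)$ by definition. All quantities are finite since $\ell$ is bounded, so this needs no further justification.

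For the estimation bound I would use the standard ERM argument. Fix $\varepsilon>0$ and choose $f^\varepsilon_\pi\in\mathcal F_\pi$ with $\mathcal R(f^\varepsilon_\pi)\le\inf_{\mathcal F_\pi}\mathcal R+\varepsilon$. Then decompose
\[
\mathcal R(\hat f_\pi)-\mathcal R(f^\varepsilon_\pi)=[\mathcal R(\hat f_\pi)-\widehat{\mathcal R}_N(\hat f_\pi)]+[\widehat{\mathcal R}_N(\hat f_\pi)-\widehat{\mathcal R}_N(f^\varepsilon_\pi)]+[\widehat{\mathcal R}_N(f^\varepsilon_\pi)-\mathcal R(f^\varepsilon_\pi)].
\]
The middle term is $\le 0$ for an exact ERM; for an approximate ERM it is at most the optimization slack, which gets absorbed. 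The third term has expectation zero because $f^\varepsilon_\pi$ is fixed. The first term is at most $\sup_{f\in\mathcal F_\pi}(\mathcal R(f)-\widehat{\mathcal R}_N(f))$. Symmetrization with a ghost sample and Rademacher signs bounds the expectation of this supremum by $2\,\mathbb E\,\mathfrak R_N(\ell\circ\mathcal F_\pi)$. Talagrand's contraction lemma then gives $\mathfrak R_N(\ell\circ\mathcal F_\pi)\le L\,\mathfrak R_N(\mathcal F_\pi)$ for an $L$-Lipschitz loss in its first argument. Finally, let $\varepsilon\to0$.

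The main obstacle is the gap between this chain and the statement as written. The chain yields $E_\pi(N)\le 2L\,\mathbb E[\mathfrak R_N(\mathcal F_\pi)]$. That is, the constant depends on $L$, and the bound involves the expected empirical Rademacher complexity, not the empirical one on the realized sample. I would handle this in two ways.

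\begin{enumerate}
\item \emph{The constant.} Either read ``universal'' as ``depending only on the loss,'' taking $C=2L$, or normalize the loss to be $1$-Lipschitz.
\item \emph{The empirical complexity.} Interpret $\mathfrak R_N$ in expectation, or pass to the empirical quantity with McDiarmid's inequality, using that $\ell$ is bounded by some $B$. This adds a term of order $B\sqrt{\log(1/\delta)/N}$, which holds with high probability rather than in expectation.
\end{enumerate}

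I would state explicitly that the claimed inequality holds with $C=2L$ when $\mathfrak R_N$ denotes the expected Rademacher complexity. This is the form needed for the subsequent threshold argument comparing $A_{\mathrm{RDL}}\le A_{\mathrm{DFS}}$ against $\mathfrak R_N(\mathcal F_{\mathrm{RDL}})\ge\mathfrak R_N(\mathcal F_{\mathrm{DFS}})$.

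Care is also needed about i.i.d. sampling. Task-table rows in RDBs are temporally split and share neighbors, so strict independence fails. I would adopt i.i.d. rows as a modeling assumption, consistent with the generic setup in the statement, and flag it rather than attempt a mixing-process version.
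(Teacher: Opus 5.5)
Your proposal is correct and is the standard argument the paper has in mind. The paper gives no proof beyond remarking that the lemma is standard from statistical learning theory, and your telescoping identity followed by the ERM, symmetrization and contraction chain is that argument written out. Your caveats are also warranted: with a sample-dependent $\mathfrak{R}_N(\mathcal{F}_\pi)$ and a loss-independent constant, the inequality as literally stated does not hold, since rescaling $\ell$ rescales $E_\pi(N)$ but not $\mathfrak{R}_N(\mathcal{F}_\pi)$, so reading it as $E_\pi(N)\le 2L\,\mathbb{E}\,\mathfrak{R}_N(\mathcal{F}_\pi)$ is the right repair.
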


The lemma is standard from statistical learning theory: $A_\pi$ is a
purely \emph{bias} (approximation) term, while $E_\pi(N)$ is a
\emph{variance} (estimation) term controlled by the complexity of the
function class.

\begin{lemma}[Capacity ordering of DFS and RDL]
\label{lem:capacity}
Let $d_{\mathrm{DFS}}$ be the dimension of $\phi_{\mathrm{DFS}}(x)$, and let
$d_{\mathrm{RDL}} = d_{\mathrm{DFS}} + d_{\mathrm{gate}}$ be an effective
representation dimension at the input of the last linear layer of
$r_\theta$, where $d_{\mathrm{gate}}>0$ accounts for the extra channels
created by the encoder–GNN stack.

Then:
\begin{enumerate}[leftmargin=1.5em,itemsep=0pt,topsep=2pt]
\item (Expressivity)
$\mathcal{F}_{\mathrm{DFS}} \subsetneq \mathcal{F}_{\mathrm{RDL}}$.
In particular, the approximation errors satisfy
$A_{\mathrm{RDL}} \le A_{\mathrm{DFS}}$, and are strictly ordered on
nontrivial tasks.

\item (Complexity) There exist constants $c_{\mathrm{DFS}},
  c_{\mathrm{RDL}}>0$ such that for all sample sizes $N$,
  \[
    \mathfrak{R}_N(\mathcal{F}_{\mathrm{DFS}})
    \;\le\;
    \frac{c_{\mathrm{DFS}}}{\sqrt{N}},
    \qquad
    \mathfrak{R}_N(\mathcal{F}_{\mathrm{RDL}})
    \;\le\;
    \frac{c_{\mathrm{RDL}}}{\sqrt{N}},
  \]
  with $c_\pi \propto \sqrt{d_\pi}$ and thus
  $c_{\mathrm{RDL}} > c_{\mathrm{DFS}}$ whenever $d_{\mathrm{gate}}>0$.
\end{enumerate}
\end{lemma}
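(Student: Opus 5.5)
The plan is to make the two hypothesis classes concrete enough that both claims reduce to standard facts, then prove expressivity by embedding and complexity by a linear-in-features Rademacher bound.

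\emph{Modelling assumptions.} Write $\mathcal{F}_{\mathrm{DFS}}=\{x\mapsto w^\top\phi_{\mathrm{DFS}}(x): \|w\|_2\le B\}$, where the tabular head is viewed through its last linear layer. Write $\mathcal{F}_{\mathrm{RDL}}=\{x\mapsto w^\top\Phi_\theta(x): \|w\|_2\le B,\ \theta\in\Theta\}$, where $\Phi_\theta(x)\in\mathbb{R}^{d_{\mathrm{RDL}}}$ is the encoder--GNN representation feeding the last layer of $r_\theta$. I will also assume features are bounded, $\|\phi_{\mathrm{DFS}}(x)\|_\infty,\|\Phi_\theta(x)\|_\infty\le R$.

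\emph{Step 1: expressivity.} I would show that $\Phi_\theta$ can reproduce $\phi_{\mathrm{DFS}}$ on its first $d_{\mathrm{DFS}}$ coordinates. The argument rests on the Remark on metapath-induced graphs: after metapath projection, DFS mean/sum features equal $A^{(m)}x$. A relation-aware message-passing layer with identity per-relation transforms and gates set to one (the linear region of Lemma~\ref{lem:properties}(iii), or equivalently a gate value $\gamma_m$ large enough) computes exactly these aggregates. Stacking layers composes them along longer schema paths. Given this, any DFS predictor $w^\top\phi_{\mathrm{DFS}}$ is realized in $\mathcal{F}_{\mathrm{RDL}}$ by $(w,0)^\top\Phi_\theta$, which gives the inclusion.

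\emph{Step 2: strictness.} I would exhibit a target outside $\mathcal{F}_{\mathrm{DFS}}$ but inside $\mathcal{F}_{\mathrm{RDL}}$. A natural witness is the clipped or flipped aggregate $\sum_m A^{(m)}\phi_{\max}(s;\gamma_m)$ with $\gamma_m<0$ on some metapath, taken from the Proposition on the vector form. Any fixed linear aggregate of $s$ is affine in the neighbour values, whereas clipping is nonaffine on an interval of positive measure. So on nontrivial tasks, namely those whose Bayes score has this form, we get $A_{\mathrm{RDL}}<A_{\mathrm{DFS}}$. The weak ordering $A_{\mathrm{RDL}}\le A_{\mathrm{DFS}}$ follows directly from the inclusion, since an infimum over a larger set is smaller.

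\emph{Step 3: complexity.} For a fixed feature map, the linear class with $\|w\|\le B$ satisfies
\[
\mathfrak{R}_N\le \frac{B\sqrt{\tfrac1N\sum_i\|\phi(x_i)\|_2^2}}{\sqrt N}\le \frac{BR\sqrt{d}}{\sqrt N}.
\]
Applying this to DFS gives $c_{\mathrm{DFS}}=BR\sqrt{d_{\mathrm{DFS}}}$.

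For RDL the feature map depends on $\theta$, so a supremum over $\Theta$ is needed. Here I would use the ``effective dimension'' reading of the statement: bound $\sup_\theta\|\Phi_\theta(x)\|_2\le R\sqrt{d_{\mathrm{RDL}}}$ coordinatewise. The Cauchy--Schwarz step is
\[
\mathbb{E}\sup_{w,\theta}\tfrac1N\sum_i\sigma_i w^\top\Phi_\theta(x_i)\le \tfrac{B}{N}\,\mathbb{E}\sup_\theta\Big\|\sum_i\sigma_i\Phi_\theta(x_i)\Big\|_2 .
\]
To control the right-hand side I would either absorb the $\theta$-dependence into the constant through a covering-number (Dudley) argument over a compact $\Theta$, or treat $\Theta$ as a finite grid. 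Either route yields $c_{\mathrm{RDL}}\propto\sqrt{d_{\mathrm{RDL}}}$ times a $\Theta$-dependent factor that is at least one. Since $d_{\mathrm{RDL}}=d_{\mathrm{DFS}}+d_{\mathrm{gate}}$, the inequality $c_{\mathrm{RDL}}>c_{\mathrm{DFS}}$ follows whenever $d_{\mathrm{gate}}>0$.

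\emph{Main obstacle.} The hard part is the supremum over $\theta$ in Step 3: the claimed $c_\pi\propto\sqrt{d_\pi}$ is honest only if $\Theta$ is frozen or compact with a controlled covering. I would state this explicitly as an assumption, namely effective dimension at the last layer and bounded parameters, rather than attempt a full deep-network bound. The strictness claim in Step 2 also needs a precise definition of ``nontrivial task''; I would define it as one whose Bayes score from the vector-form Proposition is nonaffine in the neighbour log-likelihood ratios.
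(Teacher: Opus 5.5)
Your proposal is correct and follows essentially the same route as the paper. The inclusion comes from letting the encoder--GNN stack reproduce the DFS aggregates, with the extra $d_{\mathrm{gate}}$ channels zeroed in the head; strictness comes from the relation-aware nonlinearity; and the complexity ordering comes from a norm-constrained last-layer Rademacher bound scaling as $\sqrt{d_\pi/N}$.

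The paper only posits that $\phi_{\mathrm{DFS}}$ can be simulated and cites ``classical results'' for the RDL bound. Your explicit clipping witness and your finite-grid treatment of the supremum over $\theta$ are therefore more careful than the paper. Two caveats: it is the clipping, not the sign flip, that a learned linear head over per-metapath aggregates cannot reproduce. And your embedding covers only mean/sum aggregates, so the MAX/MIN/MODE primitives still have to be assumed, as the paper does.
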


\begin{proof}[Proof]
\emph{Expressivity.}
By assumption, $e_\theta$ and $G_\theta$ can be set to implement the
same deterministic DFS-style aggregates as $\phi_{\mathrm{DFS}}$ (or to
just pass those features through), and $r_\theta$ can simulate any
$g \in \mathcal{G}_{\mathrm{tab}}$ up to approximation error. Hence
every $g \circ \phi_{\mathrm{DFS}}$ is realized (or approximated
arbitrarily well) by some $f_\theta$, giving
$\mathcal{F}_{\mathrm{DFS}} \subseteq \mathcal{F}_{\mathrm{RDL}}$. The
extra relation-aware nonlinearity in $G_\theta$ yields hypotheses that
cannot be written as $g \circ \phi_{\mathrm{DFS}}$, so the inclusion is
strict on generic tasks.

\emph{Complexity.}
DFS has no learnable parameters in $\phi_{\mathrm{DFS}}$; only the
tabular model contributes to estimation error. RDL, in contrast, learns
the encoder, GNN and MLP. Under standard norm constraints on these
modules, classical results give
$\mathfrak{R}_N(\mathcal{F}_\pi) \lesssim \sqrt{d_\pi}/\sqrt{N}$ for
$\pi \in \{\mathrm{DFS},\mathrm{RDL}\}$; the encoder–GNN stack strictly
increases the effective dimension and parameter count, so
$c_{\mathrm{RDL}}>c_{\mathrm{DFS}}$.
\end{proof}

\begin{theorem}
\label{thm:crossover}
Let $\hat f_{\mathrm{DFS}}$ and $\hat f_{\mathrm{RDL}}$ be empirical risk
minimizers in $\mathcal{F}_{\mathrm{DFS}}$ and $\mathcal{F}_{\mathrm{RDL}}$
trained on $N$ i.i.d.\ labeled examples (classification or regression).
Decompose the expected risk as
\(
  \mathbb{E}[\mathcal{R}(\hat f_\pi)]
  = A_\pi + E_\pi(N),
\)
where $A_\pi$ is the approximation error and $E_\pi(N)$ the estimation
error of pipeline $\pi$.

Assume that RDL has strictly smaller approximation error:
\[
  A_{\mathrm{DFS}} - A_{\mathrm{RDL}} = \Delta_A > 0,
\]
and let $c_{\mathrm{DFS}},c_{\mathrm{RDL}}$ be as in
Lemma~\ref{lem:capacity}, with $c_{\mathrm{RDL}}>c_{\mathrm{DFS}}$.
Then there exists a universal constant $C>0$ such that for all $N$,
\[
  \mathbb{E}\big[\mathcal{R}(\hat f_{\mathrm{DFS}})\big]
  - \mathbb{E}\big[\mathcal{R}(\hat f_{\mathrm{RDL}})\big]
  \;\ge\;
  \Delta_A
  \;-\;
  \frac{C\,(c_{\mathrm{RDL}}-c_{\mathrm{DFS}})}{\sqrt{N}}.
\]
Define the crossover scale
\[
  N_0
  :=
  \left(
    \frac{C\,(c_{\mathrm{RDL}}-c_{\mathrm{DFS}})}{\Delta_A}
  \right)^2.
\]
Then:
\begin{align*}
  N < N_0
  &\implies
  \mathbb{E}\big[\mathcal{R}(\hat f_{\mathrm{DFS}})\big]
  \;<\;
  \mathbb{E}\big[\mathcal{R}(\hat f_{\mathrm{RDL}})\big],
  \\
  N > N_0
  &\implies
  \mathbb{E}\big[\mathcal{R}(\hat f_{\mathrm{RDL}})\big]
  \;<\;
  \mathbb{E}\big[\mathcal{R}(\hat f_{\mathrm{DFS}})\big].
\end{align*}
\end{theorem}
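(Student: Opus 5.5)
The plan is to work directly from the approximation--estimation decomposition lemma. Because both pipelines are measured against the same Bayes risk $\mathcal{R}^\star$, that term cancels in the difference:
\[
\mathbb{E}[\mathcal{R}(\hat f_{\mathrm{DFS}})]-\mathbb{E}[\mathcal{R}(\hat f_{\mathrm{RDL}})]
=(A_{\mathrm{DFS}}-A_{\mathrm{RDL}})+\big(E_{\mathrm{DFS}}(N)-E_{\mathrm{RDL}}(N)\big)
=\Delta_A+\big(E_{\mathrm{DFS}}(N)-E_{\mathrm{RDL}}(N)\big).
\]
Everything then reduces to controlling the estimation-error difference $E_{\mathrm{DFS}}(N)-E_{\mathrm{RDL}}(N)$ in terms of the complexity constants of Lemma~\ref{lem:capacity}. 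The relevant ingredients are the Rademacher bound $E_\pi(N)\le C\,\mathfrak{R}_N(\mathcal{F}_\pi)$ and the rate $\mathfrak{R}_N(\mathcal{F}_\pi)\le c_\pi/\sqrt N$.

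The key step is to read Lemma~\ref{lem:capacity}(2) as a \emph{rate} rather than only an upper bound, that is, $E_\pi(N)=C\,c_\pi/\sqrt N$ up to the common universal constant. This is the standard regime in which the $\sqrt{d_\pi/N}$ complexity bound is attained, for example by norm-constrained linear heads on $d_\pi$-dimensional representations. I would first try to justify this with a matching Rademacher lower bound of the form $\mathfrak{R}_N(\mathcal{F}_\pi)\gtrsim\sqrt{d_\pi/N}$ for such classes, with $c_\pi\propto\sqrt{d_\pi}$ on both sides. If that lower bound is not available, I would state it explicitly as a tightness assumption on the same constant $C$ for both pipelines. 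Under this reading, $E_{\mathrm{DFS}}(N)-E_{\mathrm{RDL}}(N)=-C(c_{\mathrm{RDL}}-c_{\mathrm{DFS}})/\sqrt N$, which gives the displayed inequality, in fact with equality.

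The crossover statement then follows from monotonicity in $N$. The function $g(N):=\Delta_A-C(c_{\mathrm{RDL}}-c_{\mathrm{DFS}})/\sqrt N$ is strictly increasing, because $c_{\mathrm{RDL}}>c_{\mathrm{DFS}}$ whenever $d_{\mathrm{gate}}>0$. It vanishes exactly at $\sqrt{N_0}=C(c_{\mathrm{RDL}}-c_{\mathrm{DFS}})/\Delta_A$, i.e.\ at $N=N_0$. Hence for $N>N_0$ the risk difference is positive, so RDL has lower expected risk. For $N<N_0$ the difference equals $g(N)<0$, so DFS is better; note that this second case uses the equality (the upper bound on the difference), not just the lower bound.

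I expect the main obstacle to be this tightness step. The decomposition lemma only gives one-sided upper bounds $E_\pi\le C\mathfrak{R}_N$, and from upper bounds alone one can conclude only
\[
\mathbb{E}[\mathcal{R}(\hat f_{\mathrm{DFS}})]-\mathbb{E}[\mathcal{R}(\hat f_{\mathrm{RDL}})]\ \ge\ \Delta_A-\frac{C\,c_{\mathrm{RDL}}}{\sqrt N},
\]
which involves $c_{\mathrm{RDL}}$ rather than the difference $c_{\mathrm{RDL}}-c_{\mathrm{DFS}}$, and says nothing about the regime $N<N_0$. I would therefore make the two-sided rate $E_\pi(N)\asymp c_\pi/\sqrt N$, with a shared constant, an explicit hypothesis of the argument, and present the theorem as holding under it.
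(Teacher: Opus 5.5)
Your proposal follows the same route as the paper's proof. You cancel $\mathcal{R}^\star$, write the risk difference as $\Delta_A+\bigl(E_{\mathrm{DFS}}(N)-E_{\mathrm{RDL}}(N)\bigr)$, control the estimation terms through $c_\pi/\sqrt{N}$, and locate the zero of $\Delta_A-C(c_{\mathrm{RDL}}-c_{\mathrm{DFS}})/\sqrt{N}$.

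The paper's proof, however, goes directly from the one-sided bounds $E_\pi(N)\le C\,c_\pi/\sqrt{N}$ to $E_{\mathrm{DFS}}(N)-E_{\mathrm{RDL}}(N)\ge -C(c_{\mathrm{RDL}}-c_{\mathrm{DFS}})/\sqrt{N}$. It then reads the $N<N_0$ clause off that lower bound. You are right that neither step is valid. Upper bounds alone only give $\Delta_A-C\,c_{\mathrm{RDL}}/\sqrt{N}$, and no lower bound on the risk difference can certify that DFS wins. Indeed, nothing in the stated hypotheses rules out $E_{\mathrm{DFS}}(N)=E_{\mathrm{RDL}}(N)=0$, i.e.\ both ERMs attaining their best-in-class risk. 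In that case the risk difference equals $\Delta_A>0$ for every $N$, and the $N<N_0$ clause fails.

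So the exact-rate hypothesis you make explicit is precisely what the paper uses tacitly: $E_{\mathrm{RDL}}(N)-E_{\mathrm{DFS}}(N)=C(c_{\mathrm{RDL}}-c_{\mathrm{DFS}})/\sqrt{N}$, with one shared $C$. The theorem as stated is not provable without something of this kind. Your conditional version is the correct way to present it.

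One caution about your first-choice justification. A matching lower bound $\mathfrak{R}_N(\mathcal{F}_\pi)\gtrsim\sqrt{d_\pi/N}$ does not lower-bound the ERM excess risk $E_\pi(N)$ for a fixed task distribution; realizable or low-noise problems have much faster rates. Even genuine two-sided rates $E_\pi(N)\asymp c_\pi/\sqrt{N}$ with unequal lower and upper constants would not suffice. At best they separate a DFS-favoured regime from an RDL-favoured one by an undetermined band of sample sizes, and they do not give the single crossover at $N_0$. So skip that attempt and go straight to your fallback: assume the equality with a shared constant as an explicit hypothesis.
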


\begin{proof}[Proof]
By definition,
\[
  \mathbb{E}\big[\mathcal{R}(\hat f_{\mathrm{DFS}})\big]
  - \mathbb{E}\big[\mathcal{R}(\hat f_{\mathrm{RDL}})\big]
  =
  (A_{\mathrm{DFS}} - A_{\mathrm{RDL}})
  + (E_{\mathrm{DFS}}(N) - E_{\mathrm{RDL}}(N)).
\]
Using $\Delta_A>0$ and the Rademacher bounds
$E_\pi(N) \le C\,\mathfrak{R}_N(\mathcal{F}_\pi)
 \lesssim C\,c_\pi/\sqrt{N}$, we obtain
\[
  E_{\mathrm{DFS}}(N) - E_{\mathrm{RDL}}(N)
  \;\ge\;
  -\,\frac{C\,(c_{\mathrm{RDL}}-c_{\mathrm{DFS}})}{\sqrt{N}}.
\]
Combining with the approximation term yields the stated lower bound.
The threshold $N_0$ is exactly the sample size at which this lower bound
becomes nonnegative, giving the crossover conditions.
\end{proof}

\paragraph{Interpretation for $N_{\text{train}}$.}
Identifying $N$ with the number of training rows
$N_{\text{train}}$, Theorem~\ref{thm:crossover} formalizes the
empirical trend that the DFS–RDL gap is controlled by a bias–variance
trade-off: When $N_{\text{train}}$ is small, the variance penalty
$\propto (c_{\mathrm{RDL}}-c_{\mathrm{DFS}})/\sqrt{N_{\text{train}}}$
dominates and the simpler DFS pipeline is safer. Once
$N_{\text{train}} \gg N_0$, the complexity term vanishes and the
approximation advantage $\Delta_A$ of RDL dominates, yielding a
systematic performance gain. The argument only uses generic generalizations
and bounds and therefore applies uniformly to both classification and
regression losses.

\subsection{Relationship between randomly initialized model and hashing}
\label{app: hash}

Following the notation in Section~\ref{sec: design-space}, consider the (atemporal) attributed, typed graph
\[
\mathcal G=(V,E,\phi,\psi,f_V,f_E),
\]
where $\phi:V\!\to\!\text{NodeTypes}$ maps entities to node types, $\psi:E\!\to\!\text{LinkTypes}$ maps links to relation types, and $f_V:V\!\to\!\mathbb R^{d_V}$, $f_E:E\!\to\!\mathbb R^{d_E}$ provide node and link attributes.

\paragraph{NBFNet message passing.}
NBFNet can be viewed as dynamic programming (DP) over the schema graph, replacing Bellman--Ford’s \emph{sum} and \emph{product} by learnable operators.
Fix a state width $d\in\mathbb N$ and horizon $T\in\mathbb N$.
Each node $v\in V$ maintains a $d$-dimensional state $H^{(\ell)}(v)\in\mathbb R^d$ at DP layer $\ell=0,\dots,T$:
\begin{align}
\textbf{(Indicator)}\quad
&H^{(0)}(v)\;=\;\mathsf{I}_\theta\big(v;\,s\big)\in\mathbb R^d, 
\label{eq:nbf-ind}\\[2pt]
\textbf{(Message)}\quad
&\mathsf{M}^{(\ell)}_\theta\!\big(H^{(\ell-1)}(u),\,\xi(u{\to}v)\big)\;=\;H^{(\ell-1)}(u)\ \otimes_\theta\ \Gamma^{(\ell)}_\theta\!\big(\xi(u{\to}v)\big)\in\mathbb R^d,
\label{eq:nbf-msg}\\[2pt]
\textbf{(Aggregate)}\quad
&H^{(\ell)}(v)\;=\;\bigoplus_{(u\to v)\in E}\ \mathsf{M}^{(\ell)}_\theta\!\big(H^{(\ell-1)}(u),\,\xi(u{\to}v)\big),
\qquad \ell=1,\dots,T.
\label{eq:nbf-agg}
\end{align}
Here $\xi(u{\to}v)\in\mathcal X$ bundles the schema edge context (e.g., $\tau(u{\to}v)$ and endpoint types via $\phi$). All functions are chosen permutation-invariant over incoming edges. There are three key design choices of NBFNet. 

\paragraph{Indicator.}
A simple indicator is $\mathsf{I}_\theta(v;s)=\mathbf 1\{v=s\}\,\phi_{emb}(v)$, where $\phi_{emb}(v)$ is a learned (or fixed) embedding of the source node type $s\in V_{\textsf{src}}$ to distinguish sources from non-sources.

\paragraph{Readout.}
Given a source node $s\in V_{\textsf{src}}$, we pool across layers and endpoints to obtain
\begin{equation}
Z_\theta(s)\;=\;\sum_{\ell=1}^{T} a_\ell\ \sum_{v\in V}\ \beta(v)\,\Pi_\theta^{(\ell)}\!\big(H^{(\ell)}(v)\big)\ \in \mathbb R^{d_r},
\label{eq:nbf-readout}
\end{equation}
where $\Pi_\theta^{(\ell)}:\mathbb R^d\!\to\!\mathbb R^{d_r}$ is an optional projection (identity if not needed), $(a_\ell)$ are length weights, and $\beta:V\!\to\!\mathbb R_{\ge0}$ selects/emphasizes endpoint types.

\paragraph{Edge representation.}
For graphs induced from RDBs, edges come from PK--FK relations. We abstract each edge $e$ as a discrete \emph{edge token} $\tau(e)\in\Sigma$ with
\[
\tau(e)=(\phi(\mathrm{tail}(e)),\,\psi(e),\,\phi(\mathrm{head}(e)))\in\Sigma,
\]
so that any path $p=(e_1,\dots,e_L)$ maps to the token sequence $\tau(p)=(\tau(e_1),\dots,\tau(e_L))\in\Sigma^L$.

\medskip

The Bellman-Ford–style recursion multiplies edge-local ``messages'' along a path and
sums over all paths and endpoints. Unrolling the recursion, therefore, yields a
path-wise expansion in which each coordinate collects contributions from every
typed path reachable from the source.

\noindent\textbf{Frozen NBFNet as random features.}
We first replace the learned message operators by \emph{random} scalar maps and show that the resulting DP computes random features that aggregate typed paths.

Fix width $d$ and horizon $T$. For each coordinate $k\in[d]$ and layer $\ell\in[T]$, independently sample
\[
g_k^{(\ell)}:\mathcal X\to\mathbb R,
\qquad 
\mathbb E[g_k^{(\ell)}(x)]=0,
\qquad 
\mathbb E\!\big[g_k^{(\ell)}(x)\,g_k^{(\ell)}(x')\big]=\kappa(x,x'),
\]
for a positive semidefinite (PSD) kernel $\kappa:\mathcal X\times\mathcal X\to\mathbb R$. Intuitively, $g_k^{(\ell)}$ is the $\ell$-th \emph{message coordinate} of an untrained NBFNet at random initialization.

\begin{proposition}[Path-wise expansion of frozen NBFNet features]\label{prop:frozen-path-expansion}
For $s\in V_{\textsf{src}}$ and $k\in[d]$, the $k$-th feature computed by a frozen
NBFNet admits the path-wise form
\begin{equation}\label{eq:node-z}\tag{7}
\begin{aligned}
z_k(s)
&=\sum_{L=1}^{T} a_L \!\!\sum_{p\in\mathcal P_L(s\!\to\!*)}
\bigg(\,\prod_{\ell=1}^{L} g_k^{(\ell)}\!\big(\xi(e_\ell)\big)\bigg)\,
\beta(\mathrm{head}(p)),\\[-2pt]
z(s) &:=(z_1(s),\dots,z_d(s))\in\mathbb R^d .
\end{aligned}
\end{equation}
\end{proposition}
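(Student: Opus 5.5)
I will prove the expansion by induction on the DP layer $\ell$, after fixing what "frozen NBFNet" means concretely. In the frozen model the operators of \eqref{eq:nbf-msg}--\eqref{eq:nbf-agg} are coordinatewise. The aggregation $\bigoplus$ is plain summation over incoming edges. The product $\otimes_\theta$ is the Hadamard product, with $\Gamma^{(\ell)}_\theta(\xi)=(g_1^{(\ell)}(\xi),\dots,g_d^{(\ell)}(\xi))$. The indicator \eqref{eq:nbf-ind} is taken with $[H^{(0)}(v)]_k=\mathbf 1\{v=s\}$, absorbing the constant type embedding into $\beta$ or $a_L$ by rescaling. The readout \eqref{eq:nbf-readout} uses the identity projection $\Pi^{(\ell)}$ and length weights $a_\ell$. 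Under these conventions the $d$ coordinates evolve completely independently, so I can fix $k$ and work with scalar recursions $h^{(\ell)}(v):=[H^{(\ell)}(v)]_k$.

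\textbf{Step 1 (inductive claim).} For each $\ell\ge 0$ and $v\in V$,
\[
h^{(\ell)}(v)=\sum_{p\in\mathcal P_\ell(s\to v)}\prod_{j=1}^{\ell} g_k^{(j)}\big(\xi(e_j)\big),
\]
where $\mathcal P_\ell(s\to v)$ is the set of walks $p=(e_1,\dots,e_\ell)$ of length $\ell$ from $s$ to $v$. For $\ell=0$ the only walk is the empty walk at $s$, and the empty product equals $1$, which matches $\mathbf 1\{v=s\}$.

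For the inductive step, I plug the hypothesis into $h^{(\ell)}(v)=\sum_{(u\to v)\in E} h^{(\ell-1)}(u)\,g_k^{(\ell)}(\xi(u\to v))$ and distribute the sum. Every walk of length $\ell$ ending at $v$ decomposes uniquely as a walk of length $\ell-1$ ending at some $u$, followed by the edge $u\to v$. This bijection turns the double sum into the single sum over $\mathcal P_\ell(s\to v)$. The new factor carries the layer index $\ell$, which matches its position $\ell$ in the walk.

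\textbf{Step 2 (readout).} Substituting Step 1 into \eqref{eq:nbf-readout} gives $z_k(s)=\sum_{L=1}^T a_L\sum_{v}\beta(v)\,h^{(L)}(v)$. Merging the sum over endpoints $v$ with the sums over walks ending at $v$ gives the sum over $\mathcal P_L(s\to *)$, with $\beta$ evaluated at $\mathrm{head}(p)=v$. This is \eqref{eq:node-z}. Stacking the coordinates then gives $z(s)$.

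\textbf{Main obstacle.} The difficulty is bookkeeping rather than analysis. First, the statement is only true for sum-aggregation and multiplicative messages. If $\bigoplus$ were $\max$ or PNA-style, or if a nonlinearity were applied between layers, the expansion would fail. I will therefore state these conventions explicitly as the meaning of "frozen". Second, $\mathcal P_L$ must be interpreted as walks, with repeated vertices allowed, not simple paths. Third, the layer-indexed $g^{(\ell)}$ must be aligned with position along the walk. I also note that the randomness assumptions (zero mean, kernel $\kappa$) are not needed for the identity itself. They matter only later, when computing $\mathbb E[z_k(s)z_k(s')]$ via independence across layers, which yields products of $\kappa$ along pairs of walks of equal length.
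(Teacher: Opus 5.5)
Your argument is correct and follows the same route as the paper: you unroll the Bellman--Ford recursion $h_k^{(\ell)}(v)=\sum_{(u\to v)\in E} g_k^{(\ell)}(\xi(u\to v))\,h_k^{(\ell-1)}(u)$, $h_k^{(0)}(v)=\mathbf 1\{v=s\}$, by induction on depth, so that depth $L$ enumerates all length-$L$ walks from $s$ with the product of layer-indexed messages, and then pool with $a_L$ and $\beta(\mathrm{head}(p))$. Your write-up spells out what the paper's sketch leaves implicit: summation as $\bigoplus$, coordinatewise multiplicative messages, walks rather than simple paths, and no use of randomness. One small nit: the source-embedding factor $[\phi_{emb}(s)]_k$ depends on $k$, so it cannot be absorbed into $\beta$ or $a_L$ as you propose; instead adopt the paper's normalization $h_k^{(0)}(v)=\mathbf 1\{v=s\}$ directly.
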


\noindent\emph{Proof.}
Start from the layer recursion; each step distributes over incoming edges and
multiplies by the layer-$\ell$ message $g_k^{(\ell)}(\cdot)$. Inductively
expanding to depth $L$ enumerates all length-$L$ paths $p=(e_1,\dots,e_L)$ from
$s$, producing the product of edge messages along $p$. Pooling with weights
$a_L$ and endpoint weights $\beta$ yields \eqref{eq:node-z}.

\paragraph{Dynamic-programming realization.}
The expansion in \eqref{eq:node-z} can be evaluated in $O(T\,|E|)$ per feature
by the following Bellman--Ford–style recurrences:
\begin{equation}\label{eq:node-dp}\tag{8}
\begin{aligned}
h_k^{(0)}(v) \;&=\; \mathbf 1\{v=s\},\\[2pt]
h_k^{(\ell)}(v) \;&=\; 
\sum_{(u\!\to\! v)\in E} 
g_k^{(\ell)}\!\big(\xi(u\!\to\!v)\big)\, h_k^{(\ell-1)}(u),
\qquad \ell=1,\dots,T,\\[2pt]
z_k(s) \;&=\; 
\sum_{\ell=1}^{T} a_\ell \sum_{v\in V} \beta(v)\, h_k^{(\ell)}(v).
\end{aligned}
\end{equation}
This view makes clear that a \emph{frozen} NBFNet is a DP that sums over paths while multiplying edge-wise random features.

\medskip

\noindent\textbf{Step 2: The induced kernel.}
We now identify the kernel implicitly computed by these random features.

Define the finite-width kernel
\[
K_d(s,s')\;:=\;\frac{1}{d}\, z(s)^\top z(s') \qquad (s,s'\in V_{\textsf{src}}).
\]

\begin{theorem}[Anchored typed-path kernel]\label{thm:node-kernel}
With the construction above,
\begin{equation}
\mathbb E\!\big[K_d(s,s')\big]
\;=\;
\sum_{L=1}^{T} a_L^2
\!\!\!\sum_{\substack{p\in\mathcal P_L(s\!\to\!*)\\ q\in\mathcal P_L(s'\!\to\!*)}}
\ \Bigg(\prod_{\ell=1}^{L}\kappa\!\big(\xi(e_\ell),\xi(f_\ell)\big)\Bigg)\ 
\beta(\mathrm{head}(p))\,\beta(\mathrm{head}(q)),
\label{eq:node-kernel}
\end{equation}
where $p=(e_1,\dots,e_L)$ and $q=(f_1,\dots,f_L)$. 
The right-hand side defines a PSD kernel $K$ on $V_{\textsf{src}}$.
\end{theorem}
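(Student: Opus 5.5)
The plan is to start from the path-wise expansion of Proposition~\ref{prop:frozen-path-expansion} and compute the expectation coordinate by coordinate. Since $K_d(s,s')=\frac1d\sum_{k=1}^d z_k(s)z_k(s')$ and the random maps $\{g_k^{(\ell)}\}_{\ell}$ have the same law for every $k$, linearity gives $\mathbb E[K_d(s,s')]=\mathbb E[z_1(s)z_1(s')]$. The problem therefore reduces to a single coordinate, and I will drop the index $k$.

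\textbf{Step 1: expand the product.} Multiplying the two expansions from \eqref{eq:node-z} gives a double sum over lengths $L,L'$ and over paths $p\in\mathcal P_L(s\to *)$, $q\in\mathcal P_{L'}(s'\to *)$. Each summand is
\[
a_La_{L'}\,\beta(\mathrm{head}(p))\,\beta(\mathrm{head}(q))\prod_{\ell=1}^{L}g^{(\ell)}(\xi(e_\ell))\prod_{\ell=1}^{L'}g^{(\ell)}(\xi(f_\ell)).
\]
All sums are finite, because the graph is finite and $T<\infty$. Expectation can therefore be exchanged with the sums.

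\textbf{Step 2: use independence across layers.} The maps $g^{(\ell)}$ are independent for distinct $\ell$, so the expectation factors over layers.

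Suppose $L\ne L'$, say $L>L'$. Then the layer $\ell=L$ appears exactly once, as $g^{(L)}(\xi(e_L))$. Its expectation is zero, so the whole summand vanishes. This kills every cross term and leaves the diagonal $L=L'$, which carries the weight $a_L^2$.

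For $L=L'$, each layer contributes $\mathbb E[g^{(\ell)}(\xi(e_\ell))\,g^{(\ell)}(\xi(f_\ell))]=\kappa(\xi(e_\ell),\xi(f_\ell))$. The layers multiply together to give exactly \eqref{eq:node-kernel}.

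\textbf{Step 3: show the kernel is PSD.} The right-hand side is the limit (indeed the exact expectation) of the Gram-type matrices $\frac1d z(s)^\top z(s')$. For any finite set $s_1,\dots,s_n$ and coefficients $c$,
\[
\sum_{i,j}c_ic_j\,\mathbb E[K_d(s_i,s_j)]=\tfrac1d\,\mathbb E\Big\|\sum_i c_i z(s_i)\Big\|^2\ge 0.
\]
Since expectations of PSD forms are PSD, this suffices.

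As a structural check, the same conclusion also follows directly. A product of PSD kernels $\kappa$ along aligned edges is PSD by the Schur product theorem, applied on the space of length-$L$ token sequences. Multiplying by $\beta\otimes\beta$ and summing over paths is a pushforward that preserves PSD-ness, and a nonnegative combination over $L$ with weights $a_L^2$ remains PSD.

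\textbf{Main obstacle.} The delicate point is the independence structure in Step 2. The statement needs the coordinate-$k$ maps at different layers to be independent, and also needs $g^{(\ell)}$ to be a single random function evaluated at possibly different arguments $\xi(e_\ell)$, $\xi(f_\ell)$. This is what makes the second moment equal to $\kappa$. A further subtlety arises if a path reuses the same layer's function at two different edges. That cannot happen within one path, since layer indices are distinct along it, so the only products of the same $g^{(\ell)}$ are the cross pairs between $p$ and $q$. I will make this explicit, noting that the pairing is forced to be $e_\ell\leftrightarrow f_\ell$. I also need the readout weight indexed by layer, $a_\ell$ in \eqref{eq:node-dp}, to coincide with the length weight $a_L$; this holds because a length-$L$ path terminates at layer $L$.
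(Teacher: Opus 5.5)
Your proposal is correct and follows essentially the same route as the paper's proof. You expand $z_k(s)z_k(s')$, kill the mixed-length terms with a layer that occurs only once and has zero mean, factor the equal-length terms across layers into $\prod_{\ell}\kappa(\xi(e_\ell),\xi(f_\ell))$, and obtain PSD-ness as an expectation of Gram matrices. Your reduction to a single coordinate needs only that all $g_k^{(\ell)}$ have the same first and second moments, which the construction states, rather than the same law; your Schur-product check is a valid independent confirmation of PSD-ness.
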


Theorem~\ref{thm:node-kernel} follows by expanding $z(s)^\top z(s')$, observing that mixed lengths cancel due to zero mean, and using layer-wise independence to factor expectations across edges. Thus, a randomly initialized NBFNet computes random features for the typed-path kernel $K$.

\begin{proof}[Proof]
Expanding $z_k(s)z_k(s')$ and taking expectations over $\{g_k^{(\ell)}\}$, mixed-length terms vanish by zero mean; for equal lengths, independence across layers yields the product of second moments $\prod_{\ell=1}^L\kappa(\xi(e_\ell),\xi(f_\ell))$. Summing over paths and averaging over $k$ gives \eqref{eq:node-kernel}. PSD follows since $K$ is an expectation of Gram matrices.
\end{proof}

\medskip

\noindent\textbf{Step 3: Concentration at finite width.}
Having identified the limiting kernel, we quantify how fast $K_d$ concentrates around $K$.

\begin{lemma}[Concentration]\label{lem:concentration}
Assume each $g_k^{(\ell)}(x)$ is subgaussian uniformly in $x$ with proxy $\sigma^2$, 
and set $\nu=\sup_{x}\kappa(x,x)<\infty$.
Let $N_L(s\!\to\!*)=|\mathcal P_L(s\!\to\!*)|$. 
If $\sum_{L=1}^{T} a_L^2\, \nu^L\, N_L(s\!\to\!*)<\infty$ for every $s$, then
there exist constants $c,C>0$ such that for all $\varepsilon>0$,
\[
\Pr\!\left(\big|K_d(s,s')-\mathbb E K_d(s,s')\big|\ge \varepsilon\right)
\le 2\exp\!\left(-c\,d\,\varepsilon^2/C^2\right).
\]
Hence $K_d\to K$ in probability at rate $O_{\mathbb P}(1/\sqrt d)$.
\end{lemma}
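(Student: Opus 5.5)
The plan is to exploit the fact that $K_d(s,s')=\frac1d\sum_{k=1}^d z_k(s)z_k(s')$ is an empirical average of $d$ i.i.d.\ summands $W_k:=z_k(s)z_k(s')$. This holds because the random maps $\{g_k^{(\ell)}\}_{\ell\le T}$ are sampled independently for each coordinate $k$, and $z_k(\cdot)$ depends only on the $k$-th family through the path-wise expansion of Proposition~\ref{prop:frozen-path-expansion}. By Theorem~\ref{thm:node-kernel}, $\mathbb E W_k=\mathbb E K_d(s,s')$, so the claim reduces to a standard concentration inequality for a sum of i.i.d.\ centered variables $W_k-\mathbb E W_k$. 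The whole task is therefore to control the tail of a single summand uniformly in $k$, and then to invoke Hoeffding or Bernstein.

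\textbf{Step 1: moment control of $z_k(s)$.} Using \eqref{eq:node-z}, I would write $z_k(s)=\sum_{L}a_L\sum_{p}\prod_{\ell}g_k^{(\ell)}(\xi(e_\ell))\,\beta(\mathrm{head}(p))$ and compute $\mathbb E z_k(s)^2$ exactly as in the proof of Theorem~\ref{thm:node-kernel}. Mixed lengths vanish, and layer independence together with $\kappa(x,x')\le\nu$ (Cauchy--Schwarz for the PSD kernel $\kappa$) gives
\[
\mathbb E z_k(s)^2\le \|\beta\|_\infty^2\sum_{L} a_L^2\,\nu^L\,N_L(s\to *)^2 .
\]
Up to how the path count enters, this is the summability hypothesis. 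The point of this step is that $z_k(s)$ has finite second moment, bounded by a constant $C_s$ independent of $d$ and of $k$.

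\textbf{Step 2: tail of $W_k$ and the concentration inequality.} Each $W_k$ is a finite sum of products of at most $2T$ independent-across-layers subgaussian factors. Products of subgaussians are only sub-Weibull, so the main obstacle is to upgrade the moment bound of Step 1 to an Orlicz-norm bound on $W_k$ of the right order. My first attempt would be to show that each length-$L$ chaos term $a_L\prod_\ell g_k^{(\ell)}$ has $\psi_{2/L}$-norm at most $a_L\sigma^L$. Summing with the weights $N_L$ and using the summability condition would then give a finite sub-Weibull norm $C$ for $z_k(s)$, and hence $2C^2$ for $W_k$.

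If that route is too weak, I would impose (or obtain by truncation at a level that costs only $o(\varepsilon)$ in bias) the bound $|g_k^{(\ell)}|\le B$. This makes $|W_k|\le C^2$ almost surely, with $C$ the weighted path sum, and Hoeffding then yields $\Pr(|K_d-\mathbb E K_d|\ge\varepsilon)\le 2\exp(-c\,d\,\varepsilon^2/C^2)$ directly. In the sub-exponential case, Bernstein gives the same form in the small-$\varepsilon$ regime, which is the regime relevant to the rate statement.

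\textbf{Step 3: rate.} Setting $\varepsilon=t/\sqrt d$ in the resulting tail bound shows that $|K_d-\mathbb E K_d|=O_{\mathbb P}(1/\sqrt d)$. Combined with $\mathbb E K_d=K$ from Theorem~\ref{thm:node-kernel}, this gives $K_d\to K$ in probability at rate $O_{\mathbb P}(1/\sqrt d)$.

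I expect Step 2, the tail control for a multiplicative chaos of depth $T$, to be the only genuinely delicate part. The constants $c$ and $C$ will depend on $T$, $\sigma$, $\nu$, and the path counts.
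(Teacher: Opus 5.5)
The paper states this lemma without proof. Your Step 2 cannot be completed, because under the stated hypotheses the displayed inequality is false.

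\textbf{Counterexample for $T=1$.} Take $a_1=1$, $\beta\equiv1$, $s=s'$ with one outgoing edge, and $g_k^{(1)}(x)=G_k$ with $G_k$ i.i.d.\ standard Gaussian. Then $\kappa\equiv1$, $\nu=1$, and everything is sub-Gaussian with proxy $1$. We get $K_d(s,s)=\frac1d\sum_kG_k^2$ and $\mathbb E K_d=1$. Since all summands are nonnegative, $\Pr(K_d-1\ge\varepsilon)\ge\Pr(G_1^2\ge d(1+\varepsilon))$. This decays only like $e^{-d\varepsilon/2}$ as $\varepsilon\to\infty$, so it eventually exceeds $2e^{-cd\varepsilon^2/C^2}$ for any fixed $c,C$.

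\textbf{For $T\ge2$ it is worse.} With a single length-$2$ path and $z_k(s)=G_k^{(1)}G_k^{(2)}$, one has $\Pr(W_k>t)\approx e^{-\sqrt t}$. The same one-big-jump bound $\Pr(W_1\ge d(1+\varepsilon))$ then beats $e^{-cd\varepsilon^2}$ even for a fixed small $\varepsilon$ once $d$ is large.

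\textbf{Where each of your routes breaks.}
\emph{Sub-Weibull route:} your estimate $\|W_k\|_{\psi_{1/T}}<\infty$ is correct. But for such summands you can only get a mixed tail of the form $2\exp\bigl(-c\min\{d\varepsilon^2/C^2,(d\varepsilon/C)^{1/T}\}\bigr)$, and the example shows the second term is unavoidable. This is sub-Gaussian only for $\varepsilon\lesssim C\,d^{-(T-1)/(2T-1)}$, a threshold that shrinks with $d$ when $T\ge2$.
\emph{Bernstein route:} it gives the sub-Gaussian form on a fixed range $\varepsilon\lesssim C$ only when $T=1$.
\emph{Truncation route:} truncating at level $B$ does not cost ``only $o(\varepsilon)$''. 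You must also pay the probability that some $|g_k^{(\ell)}(\xi(e))|$ exceeds $B$. That probability is below $e^{-cd\varepsilon^2/C^2}$ only if $B\gtrsim\sigma\sqrt d\,\varepsilon$. The almost-sure bound on $|W_k|$ then grows like $B^{2T}$, so the Hoeffding constant depends on $d$ and $\varepsilon$.

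\textbf{What is provable, and what the paper actually uses, is the bounded case.} Assume $|g_k^{(\ell)}(x)|\le B$; the Rademacher codes of Step~4 have $B=1$. Your fallback is then exactly right. We have $|W_k|\le M:=\|\beta\|_\infty^2\bigl(\sum_L|a_L|B^LN_L(s\to*)\bigr)\bigl(\sum_L|a_L|B^LN_L(s'\to*)\bigr)$, and the $W_k$ are independent across $k$. Hoeffding gives $2\exp(-d\varepsilon^2/(2M^2))$ for every $\varepsilon>0$, i.e.\ $c=1/2$ and $C=M$.

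Note that $C$ is governed by $\sum_L|a_L|B^LN_L$, not by the lemma's condition $\sum_La_L^2\nu^LN_L<\infty$. That condition is vacuous for finite $T$ and a finite graph. As your Step~1 already noticed, even $\mathbb E z_k(s)^2$ can scale with $N_L^2$.

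The rate claim needs no exponential tail at all. Under the original sub-Gaussian hypothesis, every moment of the finite-degree chaos $W_k$ is finite, so $\mathrm{Var}(K_d)\le\max_k\mathrm{Var}(W_k)/d$. Chebyshev together with $\mathbb E K_d=K$ (Theorem~\ref{thm:node-kernel}) then gives $K_d-K=O_{\mathbb P}(d^{-1/2})$.

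So the fix is to change the statement, either by adding boundedness or by replacing the conclusion with the mixed tail. A sharper Step~2 will not help.
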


Lemma~\ref{lem:concentration} ensures that training only a linear classifier atop $z(\cdot)$ realizes a standard random-feature approximation to the RKHS induced by $K$. We now specialize the kernel choice to make the hashing connection explicit.

\begin{remark}
Training \emph{only} a linear classifier on $z(\cdot)$ implements random-feature learning for the kernel $K$; as $d\to\infty$, the solution converges to the kernel method in $\mathcal H_K$.
\end{remark}

\medskip

\noindent\textbf{Step 4: Discrete edge tokens and the Dirac kernel.}
Let $\tau(e)\in\Sigma$ be a discrete edge token (e.g., $\psi(e)$ or $(\phi(\mathrm{tail}(e)),\psi(e),\phi(\mathrm{head}(e)))$) and consider the Dirac (identity) kernel
\[
\kappa_\delta(x,x')=\mathbf 1\{\tau(x)=\tau(x')\}.
\]
Realize $\kappa_\delta$ with \emph{Rademacher codes} by drawing, for each layer $\ell$, random maps $r^{(\ell)}:\Sigma\to\{\pm 1\}^d$ independently across $\ell$ and setting $g_k^{(\ell)}(x)=r_k^{(\ell)}(\tau(x))\,$\footnote{Normalization: we deliberately \emph{omit} a $1/\sqrt d$ factor inside $g_k^{(\ell)}$; the outer $1/d$ in $K_d$ provides the correct scaling. Inserting $1/\sqrt d$ inside each layer would undesirably shrink longer paths by $d^{-L/2}$.}. Then Theorem~\ref{thm:node-kernel} yields
\begin{equation}
\mathbb E\!\big[K_d(s,s')\big]
=\sum_{L=1}^{T} a_L^2
\!\!\!\sum_{\substack{p\in\mathcal P_L(s\!\to\!*)\\ q\in\mathcal P_L(s'\!\to\!*)}}
\mathbf 1\!\big\{\tau(e_1)=\tau(f_1),\dots,\tau(e_L)=\tau(f_L)\big\}\,
\beta(\mathrm{head}(p))\,\beta(\mathrm{head}(q)).
\label{eq:dirac}
\end{equation}

\paragraph{Bag-of-typed-paths view.}
Let $\Psi_s\in\mathbb R^{\Sigma^{\le T}}$ be the \emph{bag of typed path sequences} out of $s$, where the coordinate for $\sigma=(\sigma_1,\dots,\sigma_L)$ is
\[
\Psi_s[\sigma]\;=\;a_L \times
\big(\text{count of length-$L$ paths $p$ with }\tau(p)=\sigma\text{, weighted by }\beta(\mathrm{head}(p))\big).
\]
Then \eqref{eq:dirac} is exactly the inner product $\mathbb E[K_d(s,s')]=\langle \Psi_s,\Psi_{s'}\rangle$.
Moreover, each random coordinate implements a multiplicative sign code over sequences:
\begin{equation}
z_k(s)=\sum_{\sigma\in\Sigma^{\le T}} \Psi_s[\sigma]\;\underbrace{\prod_{i=1}^{|\sigma|} r_k^{(i)}(\sigma_i)}_{=:~r_k(\sigma)}.
\label{eq:cs-proj}
\end{equation}

\paragraph{Sparse CountSketch/TensorSketch realization.}
We then bridge the bag-of-typed view to the countsketch algorithm. First, introduce pairwise independent hash functions
\[
h^{(i)}:\Sigma\to[d],
\qquad
s^{(i)}:\Sigma\to\{\pm1\},
\qquad i=1,\dots,T.
\]
For a typed sequence $\sigma=(\sigma_1,\dots,\sigma_L)$ define combined bucket and sign
\[
H(\sigma)\;=\;\big(h^{(1)}(\sigma_1)+\cdots+h^{(L)}(\sigma_L)\big)\bmod d,
\qquad
S(\sigma)\;=\;\prod_{i=1}^L s^{(i)}(\sigma_i),
\]
and the sketch $y(s)\in\mathbb R^d$ by
\begin{equation}
y_j(s)\;=\;\sum_{\sigma\in\Sigma^{\le T}} \Psi_s[\sigma]\;S(\sigma)\;\mathbf 1\{H(\sigma)=j\},\qquad j\in[d].
\label{eq:tensorsketch}
\end{equation}
This is the standard \emph{TensorSketch} construction specialized to sequences (metapaths). Then:

\begin{proposition}[Unbiased CountSketch of typed-path bags]\label{prop:cs}
With the construction in \eqref{eq:tensorsketch} using independent $h^{(i)}$ and $s^{(i)}$ with pairwise independence, we have
\[
\mathbb E\!\big[\langle y(s),y(s')\rangle\big]\;=\;\langle \Psi_s,\Psi_{s'}\rangle,
\]
and
\[
\mathrm{Var}\!\big(\langle y(s),y(s')\rangle\big)\;\lesssim\;\frac{\|\Psi_s\|_2^2\,\|\Psi_{s'}\|_2^2}{d}.
\]
\end{proposition}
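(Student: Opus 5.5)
The plan is to reduce the statement to the standard CountSketch inner-product analysis by treating each typed sequence $\sigma\in\Sigma^{\le T}$ as a single "item" with a combined hash $H(\sigma)$ and combined sign $S(\sigma)$. I will expand
\[
\langle y(s),y(s')\rangle=\sum_{j}\sum_{\sigma,\sigma'}\Psi_s[\sigma]\Psi_{s'}[\sigma']S(\sigma)S(\sigma')\mathbf 1\{H(\sigma)=j\}\mathbf 1\{H(\sigma')=j\}
=\sum_{\sigma,\sigma'}\Psi_s[\sigma]\Psi_{s'}[\sigma']S(\sigma)S(\sigma')\mathbf 1\{H(\sigma)=H(\sigma')\},
\]
and then split the sum into diagonal terms $\sigma=\sigma'$ and off-diagonal terms $\sigma\neq\sigma'$.

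For the diagonal terms, $S(\sigma)^2=1$ and the collision indicator equals $1$, so they contribute exactly $\langle\Psi_s,\Psi_{s'}\rangle$.

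For the off-diagonal terms, I need $\mathbb E[S(\sigma)S(\sigma')\mathbf 1\{H(\sigma)=H(\sigma')\}]=0$. There are two cases.
\begin{itemize}
\item If $|\sigma|\neq|\sigma'|$, some position $i$ appears in only one of the two sequences. Then $s^{(i)}(\sigma_i)$ occurs once, is independent of everything else, and has mean zero.
\item If the lengths are equal, some position $i$ has $\sigma_i\neq\sigma'_i$. Then $s^{(i)}(\sigma_i)s^{(i)}(\sigma'_i)$ has mean zero by pairwise independence of $s^{(i)}$, and it is independent of the $h$'s and of the other layers' signs.
\end{itemize}
To make the second case fully rigorous I will assume the signs $s^{(i)}$ are independent of the buckets $h^{(i)}$ and independent across $i$, as the construction states. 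The expectation then factors into the mean-zero sign factor times the collision probability. This gives unbiasedness.

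For the variance, I will write $X=\sum_{\sigma\ne\sigma'}c_{\sigma\sigma'}S(\sigma)S(\sigma')\mathbf 1\{H(\sigma)=H(\sigma')\}$ with $c_{\sigma\sigma'}=\Psi_s[\sigma]\Psi_{s'}[\sigma']$, so that $\mathrm{Var}=\mathbb E[X^2]$. Expanding $\mathbb E[X^2]$ produces products of four signs. A surviving term needs its signs to pair up, which occurs essentially only when $\{\sigma,\sigma'\}=\{\tau,\tau'\}$. Each surviving term then carries a factor $\Pr(H(\sigma)=H(\sigma'))$.

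The key collision bound is $\Pr(H(\sigma)=H(\sigma'))\le 1/d$ for $\sigma\ne\sigma'$. At a differing position $i$, conditional on all other hash values, $h^{(i)}(\sigma_i)-h^{(i)}(\sigma'_i)$ must equal a fixed value mod $d$. Pairwise independence with uniform marginals makes this probability $1/d$. For sequences of different lengths, a position present in only one of them gives the same conclusion via the uniformity of a single hash value.

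With that bound, I get
\[
\mathbb E[X^2]\le\frac{2}{d}\sum_{\sigma,\sigma'}\Psi_s[\sigma]^2\Psi_{s'}[\sigma']^2+\frac{2}{d}\Big(\sum_\sigma\Psi_s[\sigma]\Psi_{s'}[\sigma]\Big)^2\lesssim\frac{\|\Psi_s\|_2^2\|\Psi_{s'}\|_2^2}{d},
\]
where the last step uses Cauchy--Schwarz.

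The main obstacle is the pairing argument for the fourth moments of the signs. Pairwise independence only controls second moments of each $s^{(i)}$, while the products involve four sign values per layer, possibly with repetitions across positions of different sequences. My first attempt will be to strengthen the assumption to 4-wise independent signs per layer, as in standard TensorSketch analyses (Pham--Pagh, Avron et al.). Under that assumption, every non-paired configuration has a layer with an odd-multiplicity sign and hence vanishes, and the remaining configurations give the two pairings above. If that is not allowed, I will note that the stated $\lesssim$ absorbs a constant that may depend on $T$, for example $3^T$, which is how TensorSketch bounds are usually stated.
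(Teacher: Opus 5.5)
Your unbiasedness argument is correct and follows the paper's argument: expand, keep the diagonal, and kill each off-diagonal term through a mean-zero sign factor that is independent of the buckets. You write it out more carefully than the paper does for mixed lengths.

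The gap is in the variance step. You claim that once each $s^{(i)}$ is 4-wise independent, a fourth-moment term survives only when $\{\sigma,\sigma'\}=\{\tau,\tau'\}$. This is false for the product sign $S(\sigma)=\prod_i s^{(i)}(\sigma_i)$, even if every $s^{(i)}$ is fully independent. The expectation $\mathbb E[S(\sigma)S(\sigma')S(\tau)S(\tau')]$ factors over layers and equals $1$ whenever the four values pair up \emph{at each layer separately}, and the pairing may change from layer to layer. For example, take $a\neq c$, $b\neq e$ and $\sigma=(a,b)$, $\sigma'=(c,b)$, $\tau=(a,e)$, $\tau'=(c,e)$. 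The sign product is identically $1$, and both collision events reduce to the single event $h^{(1)}(a)=h^{(1)}(c)$. This quadruple therefore contributes $\Psi_s[(a,b)]\,\Psi_{s'}[(c,b)]\,\Psi_s[(a,e)]\,\Psi_{s'}[(c,e)]/d$ to $\mathbb E[X^2]$. That is the same order as the terms you kept, but it is absent from your two-pairing bound. So $S$ is not 4-wise independent over sequences, and for $T\ge 2$ your absolute-constant bound is actually false. For the all-ones bag on length-$T$ sequences over $n$ symbols, these extra pairings alone give variance at least about $2(2^T-1)\|\Psi_s\|_2^2\|\Psi_{s'}\|_2^2/d$ for large $n$.

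To repair the argument, classify each layer at which all four sequences are present into one of three types: ($\sigma_i=\sigma'_i,\ \tau_i=\tau'_i$), ($\sigma_i=\tau_i,\ \sigma'_i=\tau'_i$), or ($\sigma_i=\tau'_i,\ \sigma'_i=\tau_i$). A layer with an odd number of sequences present kills the term, and a layer with two present forces those two values to agree. For a fixed pattern on equal lengths, write $\sigma=(\alpha,\beta,\gamma)$, $\sigma'=(\alpha,\beta',\gamma')$, $\tau=(\alpha',\beta,\gamma')$, $\tau'=(\alpha',\beta',\gamma)$. Bound the joint collision probability by your $\Pr[H(\sigma)=H(\sigma')]\le 1/d$. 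Then bound the pattern's sum of $|c_{\sigma\sigma'}c_{\tau\tau'}|$ by $\|\Psi_s\|_2^2\|\Psi_{s'}\|_2^2$ using Cauchy--Schwarz, with inner sums over $\gamma$ and over $\gamma'$ and the outer sum over $(\alpha,\alpha',\beta,\beta')$. Summing over the $O(3^T)$ patterns, with analogous bookkeeping when the lengths differ (they must then pair up), gives $\mathrm{Var}\le C^T\|\Psi_s\|_2^2\|\Psi_{s'}\|_2^2/d$. This is the usual exponential-in-$T$ TensorSketch constant, so the statement's $\lesssim$ is right only if it hides such a factor.

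Separately, you are right that the stated pairwise independence is insufficient, but your fallback is not. Without 4-wise independent signs the loss is not a $T$-dependent constant at all. Already for $T=1$, take $\mathrm{GF}(2)$-linear signs $s(x)=(-1)^{\langle r,x\rangle}$, linear hashing, and $\Psi_s=\Psi_{s'}$ the indicator of the $n$ nonzero vectors of a subspace: the variance is of order $n^3/d$ against the claimed $n^2/d$. So 4-wise independence of each $s^{(i)}$ has to be added as a hypothesis rather than absorbed into $\lesssim$. The paper's own proof does not resolve either point; it only defers the variance to standard TensorSketch analyses.
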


\begin{proof}[Proof]
Expand $\langle y(s),y(s')\rangle=\sum_{j}\sum_{\sigma,\sigma'}\Psi_s[\sigma]\Psi_{s'}[\sigma']S(\sigma)S(\sigma')\mathbf 1\{H(\sigma)=H(\sigma')=j\}$ and take expectations. The sign hashes kill cross terms ($\sigma\neq\sigma'$) by zero mean, while bucket collisions contribute only when $H(\sigma)=H(\sigma')$; pairwise independence ensures these events occur with probability $1/d$ and cancel with the outer sum over $j$. Variance follows from standard CountSketch/TensorSketch analyses using limited independence.
\end{proof}

\paragraph{How frozen NBFNet implements the sketch.}
The dense realization (A) is exactly what \eqref{eq:node-dp} computes when $g_k^{(\ell)}(x)=r_k^{(\ell)}(\tau(x))$: each DP layer multiplies by layer-$\ell$ signs, and aggregation sums across paths—precisely the linear form in \eqref{eq:cs-proj}. 

\noindent\textbf{Takeaway.}
Steps 1--3 show that a randomly initialized NBFNet realizes random features for a typed-path kernel; Step 4 reveals that, under a Dirac edge kernel, those features are \emph{precisely} CountSketch-style projections of the bag of typed metapaths reachable from $s$. In short: \emph{frozen NBFNet = DP-powered CountSketch of typed-path counts}. This perspective clarifies both the inductive bias (which typed patterns are matched) and the approximation behavior (controlled by $d$, $T$, and the path growth rates).

\section{More Related Works}
\label{app: mrw}

\subsection{Relational deep learning models}
To effectively address the challenges in RDB benchmarks, \citet{robinson2024relbench,wang2024dbinfer} propose GNN-based pipelines with two main components: (1) transforming original tabular features into a unified latent space using type-specific encoders, and (2) aggregating latent features with a temporal-aware GNN conditioned on primary key-foreign key relationships. \citet{chen2025relgnn-64c} further extended the message passing function to capture higher-order information by introducing atomic routes. 
\citet{yuan2024contextgnn-b14} adapts the original GNN for recommendation tasks by implementing a path-based routing mechanism combining an ID-based GNN~\citep{zhu2021neural} with shallow learnable embedding-based retrieval. 
\citet{dwivedi2025relational-e49} explores the potential of transformer-based backbones for RDL tasks; however, these currently require significantly more computational resources than GNN-based methods for limited performance gain, so we do not include them in our design space. \\
\citet{wu2025large-809, wydmuch2024tackling-539} investigate the potential of large language models for predictive tasks on RDBs. Currently, they exhibit a much lower performance-to-resource ratio than GNN-based methods, and it is difficult to evaluate the influence of duplication between pre-training knowledge and downstream tasks. We thus leave their study to future work. At a broader scope, recent graph foundation model and graph-LLM works~\citep{mao2024gfm-position, chen2023llm-graph-survey, wang2025gml-llm-era} offer useful perspectives on transferability and pre-training, but their implications for temporally grounded SQL-defined RDB prediction remain under-explored. \\
Compared to these models trained from scratch, \citet{feykumorfm, wang2025griffin} propose foundation models for RDB tasks. \citet{wang2025griffin} relies on a cross-table attention module that mimics DFS aggregation, but its performance cannot consistently outperform state-of-the-art GNN-based methods. \citet{feykumorfm} utilizes a graph transformer-based backbone and delivers superior performance and in-context learning capabilities. However, it is not yet open-sourced, and training details are not revealed. With the help of tabular foundation models like TabPFN~\citep{hollmann2022tabpfn-8c2}, it is also possible to achieve in-context learning by utilizing online DFS to achieve promising performance. 

\subsection{AutoML for Graph Machine Learning}
AutoML~\citep{hutter2019automatedAutoML} seeks to automate expert tasks—data engineering, model engineering, and evaluation—into an end-to-end machine learning pipeline. Model engineering typically encompasses neural architecture search (NAS)~\citep{DBLP:conf/iclr/ZophL17NAS} and hyper-parameter optimization (HPO)~\citep{bischl2023hyperparameterHPOSurvey}. Many works have adapted AutoML ideas to GML: for example, \citet{gao2019graphnas} and \citet{zhou2022autognn} use reinforcement learning to search architectures, while \citet{yoon2020autonomousTRADEOFF} applies Bayesian optimization to improve search efficiency together with an algorithm budget constraint. One-shot NAS approaches first train a supernet and then prune it to obtain target architectures~\citep{li2020autograph, guan2022largeGAUSS, qin2021graphGASSO}, and \citet{zhang2023dynamicDHGAS} extends this paradigm to dynamic heterogeneous graphs. However, supernet-based methods are not well-suited to RDB settings due to the heterogeneity of model designs required there. 

Beyond model-centric search, data-centric AutoML leverages dataset properties to guide selection. MetaGL~\citep{park2022metagl, park2023glemos} uses structural embeddings and graph statistics as task embeddings for meta-learned GNN selection. GraphGym and AutoTransfer~\citep{you2020designGraphgym, cao2023autotransfer} follow a knowledge-transfer strategy; AutoTransfer in particular constructs loss-landscape–based task embeddings and employs pre-trained embeddings to steer HPO. In contrast, our work is driven by empirical observations and targets architecture selection at both macro and micro levels, especially across heterogeneous model classes (RDL and DFS) in relational-database predictive tasks.

\citet{atjnet, luo2021autosmart, autods} propose dedicated systems for AutoML over relational data. These efforts share DFS's motivation for automatic feature engineering but generally lack ready-to-use open-source implementations; accordingly, we adopt DFS as a representative, practical framework for automatic feature engineering.

\citet{ranjan2024post} studies post-hoc model selection and argues that picking what to deploy purely by the "best base validation score" is brittle.

\subsection{Real-world graph machine learning benchmarks}

Benchmarking is essential for evaluating methods in graph machine learning. Representative datasets include the Open Graph Benchmark~\citep{hu2020open-aaa} and TUDataset~\citep{Morris+2020}. However, recent work questions whether these benchmarks, like predicting the category of academic papers, reflect real-world tasks~\citep{bechler-speicher2025position-3b9}. To address this gap, benchmarking GNNs on relational-database (RDB) predictive tasks has become increasingly popular; notable examples are 4DBInfer~\citep{wang2024dbinfer}, H2GB~\citep{lin2024heterophily}, and RelBench~\citep{robinson2024relbench}. In particular, RelBench provides a SQL-based framework to standardize task generation, which has helped it become a widely used benchmark for RDB predictive tasks.

In addition to model-centric benchmarks, recent works propose benchmarks focused on graph construction~\citep{chen2025autog-fe5, choi2025rdb2g-bench-668}. Jointly studying automatic graph construction and automatic model selection is a promising direction for future research.

\section{Supplementary information for models}
\label{app: models}

In this section, we present more details on model designs, including model parameter design space and more training details.

\subsection{Discussion on the implementation difference between 4dbinfer and relbench}
\label{app: difference}

It is noteworthy to discuss the difference between the implementation details of 4dbinfer~\citep{wang2024dbinfer} and relbench~\citep{robinson2024relbench}, which are two main frameworks used to research RDB prediction tasks. We find that these implementation discrepancies are essential for a fair comparison between macro-level and micro-level architecture comparisons. 

\textbf{Feature encoding part}. First, Relbench and 4DBinfer both adopt a type-specific encoder to project categorical/numerical/text into a latent space with the same dimension. However, after this transformation, Relbench further adopts a tabular encoder to transform the latent embeddings, which is not present in 4DBInfer. Furthermore, due to the implementation differences between Pyg and DGL, 4DBInfer does not utilize relative positional encoding when encoding temporal features. Another difference is that 4DBInfer will normalize all features. To align this with Relbench metrics, we save the scaler and do the inverse transform at the test stage. 

\textbf{Neighborhood sampling part.} Both frameworks adopt a temporal sampling strategy to avoid temporal leakage. However, there is a difference in implementation details. For relbench, it adopts the temporal sampling of PyG, which generates disjoint subgraphs (when using the latest neighborhood sampling). This aligns with the time dynamics of the relbench task, such as user behavior over the past few months. For 4dbinfer, it is similar to standard subgraph-based sampling with a time mask. 

\textbf{Evaluation setting.} There is also a difference in how the two frameworks do the evaluation, especially for the recommendation task. 4dbinfer adopts a pre-selected negative sample set, whereas relbench uses the entire target set as candidates. Moreover, 4dbinfer focuses on bipartite graphs, while for relbench, there are some tasks where the source and target lie between several-hop metapaths. This makes the method design not compatible across two types of problems. For example, any methods requiring pairwise information are not scalable for relbench settings.

\subsection{Detailed task feature designs}
\label{app: task-features-design}

In this subsection, we introduce the detailed task feature designs. 

\paragraph{Simple heuristic performance}
It characterizes the target entity distribution across different data splits. Specifically, we compute \texttt{entity\_mean\_val}, \texttt{entity\_median\_val}, \texttt{entity\_mean\_train}, and \texttt{entity\_median\_train}, which capture the central tendency of target entities in both validation and training sets. 

For example, for entity mean, we first generate the following prediction, 

\begin{lstlisting}[language=Python]
fkey = list(train_table.fkey_col_to_pkey_table.keys())[0]
df = train_table.df.groupby(fkey).agg({task.target_col: "mean"})
df.rename(columns={task.target_col: "__target__"}, inplace=True)
df = pred_table.df.merge(df, how="left", on=fkey)
pred = df["__target__"].fillna(0).astype(float).values
\end{lstlisting}

Then we take the performance on the validation set as a feature. 

\paragraph{Homophily + stats + temporal}
It aggregates 13 model-free heuristics designed to capture intrinsic properties of the relational structure and task characteristics without requiring any model training. This group synthesizes four distinct categories of structural features:

\textbf{(1) Homophily Features}: We compute five adjacency-based correlation statistics---\texttt{h\_adjs\_corr\_mean}, \texttt{h\_adjs\_corr\_max}, \texttt{h\_adjs\_corr\_min}, \texttt{h\_adjs\_corr\_mode}, and \texttt{h\_adjs\_corr\_weighted\_mean}. The weighted mean variant accounts for edge importance based on degrees.

\textbf{(2) Temporal Autocorrelation Features}: We include two lag-based autocorrelation measures---\texttt{lag1\_autocorr\_corr} and \texttt{lag2\_autocorr\_corr}---that capture temporal dependencies in time-aware relational tasks. These features measure whether target values at time $t$ correlate with values at $t-1$ and $t-2$, respectively. 

\textbf{(3) DFS homophily Features}: This is the "homophily" metric calculated in the DFS manner. By doing a random walk from the task table, based on the retrieved context, we calculate how many target columns share the same label as the seed ones. \texttt{mean\_same\_class\_ratio\_ignore} computes the average proportion of same-class neighbors while ignoring unlabeled nodes; \texttt{adjusted\_mean\_same\_class\_ratio} provides a normalized version accounting for class imbalance; \texttt{sparsity\_ratio} quantifies the density of the relational graph; and \texttt{mean\_past\_task\_nodes} captures the average number of historical entities available for temporal tasks. 

\textbf{(4) Stats}: We include \texttt{log\_total\_rows}, computed as $\log(1 + \text{train\_rows} + \text{val\_rows})$, which captures the logarithmic scale of the dataset. The log transformation ensures that the feature values are comparable across tasks with vastly different sizes.

\paragraph{AutoTransfer Features} We use $24$ anchors in total. $12$ for RDL and $12$ for DFS.

\paragraph{Model-Based Features} It comprises eight performance indicators derived from lightweight model probes on the target task itself. 

\textbf{(1) TabPFN Features}: We compute \texttt{tabpfn\_1hop} and \texttt{tabpfn\_2hop} by evaluating TabPFN on 1-hop and 2-hop neighborhood aggregations of the relational features. 

\textbf{(2) Random Initialization Features}: We include six features derived from randomly initialized graph neural networks: \texttt{rfr\_randomsage\_1}, \texttt{rfr\_randomsage\_2}, \texttt{rfr\_randomsage\_3}, \texttt{rfr\_randomnbfnet\_1}, \texttt{rfr\_randomnbfnet\_2}, and \texttt{rfr\_randomnbfnet\_3}. These probes test whether the task's relational structure is inherently easy to exploit (even without learning), which can indicate task difficulty and the potential benefit of sophisticated architectures.

\subsection{Detailed design space}
\label{app: dds}

In this paper, we consider two classes of models: end-to-end learning (relational deep learning) models and non-parametric graph-based feature synthesis (DFS) models. Specifically, when the number of propagation hops for the latter is $1$, the corresponding model will be a relation-agnostic one. We then elaborate on the module design inside each class. 

\textbf{RDL.} Following wisdom in the design of the GNN architecture~\citep{you2020designGraphgym, luo2024classic}, we modularize the RDL module into the following parts: feature encoding, (optional) structural feature, message passing module, (optional) architecture design tricks, readout function, hyperparameters, and training objective. The design choices and rationales are detailed below.

\begin{table}[ht]
  \centering
  \caption{Design space of a unified architecture for end-to-end RDL models. “Grid” indicates that the module is selected from a predefined grid, whereas “random” indicates that the module is randomly sampled.}
  \label{tab:my_label}
  \begin{tabularx}{\linewidth}{@{}l X@{}}
    \toprule
    Module name & Possible choices \\
    \midrule
    Feature encoding & ResNet~\citep{robinson2024relbench} \\
    Structural features (grid) & Learnable embeddings (for the destination table, or for both the source and destination tables)~\citep{yuan2024contextgnn-b14,ma2024reconsidering}; partial-labeling tricks from NBFNet~\citep{zhu2021neural} \\
    Message passing (grid) & Sparse message passing: GraphSAGE~\citep{robinson2024relbench, hamilton2017inductive}, HGT~\citep{wang2024dbinfer, hu2020heterogeneous}, PNA~\citep{corso2020principal, wang2024dbinfer}; sparse message passing with higher-order information: RelGNN~\citep{chen2025relgnn-64c} \\
    Architecture design trick (random) & Residual connections~\citep{he2016deep} \\
    Readout (grid) & MLP, ContextGNN, Shallow-Item~\citep{yuan2024contextgnn-b14} \\
    Hyperparameters (random) & Learning rate, weight decay, batch size, dropout rate, number of layers, hidden dimension, temporal sampling strategy, number of sampled neighbors \\
    Training objective & Classification tasks: cross-entropy; regression tasks: MSE or MAE; link-level tasks: cross-entropy, BPR, or margin-based losses \\
    \bottomrule
  \end{tabularx}
\end{table}

\begin{enumerate}
    \item For feature encoding, we stick to the ResNet-based encoder used in \citet{robinson2024relbench}. 
    \item The structure feature is particularly useful for link-level prediction, which aims to break the original symmetry of the GNN designed for node-level tasks. We consider learnable embedding and partial labeling tricks because of their effectiveness demonstrated in existing benchmarks~\citep{robinson2024relbench, yuan2024contextgnn-b14}. Other features, such as random embeddings, are neglected because of their limited effectiveness. 
    \item For message passing, we consider all alternatives used in the existing literature to study the correlation between the message passing function and task performance.
    \item For architecture design tricks, we consider residual connection~\citep{he2016deep}  because of their effectiveness shown in \citet{luo2024classic}. However, under the RDB setting, we find that these tricks do not always improve performance and result in much more computation overhead. 
    \item Readout is another important module in architectural design. For entity-level tasks, we only consider MLP as the readout function. For link-level tasks, we consider ContextGNN and Shallow-Item~\citep{yuan2024contextgnn-b14}, which integrates graph-free learnable embeddings to mitigate the pitfalls of GNNs on link-level tasks. It should be mentioned that all pairwise methods like NCN~\citep{wang2024neural}, SEAL~\citep{zhang2018link} are not applicable because of the complexity. 
    \item Training objective is designed based on common loss functions for different task formats. 
\end{enumerate}

The detailed hyper-parameter search space is presented as follows:
\begin{lstlisting}[language=Python]
RDL_SEARCH_SPACE = {
    ## these will go through a grid search
    "full_entities": {
        'pre_sf': ['src_dst', 'zero_learn', 'none'],
        'mpnn_type': ['relgnn', 'sage', 'hgt', 'pna'],
        'post_sf': {'link': ['none', 'shallow', 'contextgnn'], 'node': ['none']}
    },
    "model_config": {
        "encoder_num_layers": [4], 
        "torch_frame_model_cls": ['resnet'], 
        "batch_size": [128, 256],
        "gnn_config": {
            # src: learnable embedding for src, src_dst: learnable embedding for src and dst, 
            "loss_fn": {
                "binary_classification": ["bce"], 
                "regression": ["mse", "mae"], 
                "recommendation": ['bpr'],
                "multiclass_classification": ["ce"]
            },
            "hidden_channels": [64, 128, 256], 
            "num_heads": [1, 4],
            "dropout": [0.0, 0.5], 
            "norm": ["layernorm", "batchnorm", 'none'], 
            "aggregation": ["mean", "sum"],
            # jk is turned off because it almost always leads no performance gain with huge computation overhead 
            "jk": [False],
            "skip_connection": [True, False]
        }, 
        "sampler_config": {
            "temporal": ["uniform", "last"], 
            "num_neighbors": [32, 64, 128],
            "num_layers": [1, 2, 3, 4],
            "loader_type": ["node", "edge"]
        }, 
        "optimizer_config": {
            "lr": [1e-3, 1e-4, 1e-2], 
            "weight_decay": [0.0, 1e-5],  
            "scheduler": ["exponential"], 
            "gamma": [.8, .9, 1.]
        }
    }
}

\end{lstlisting}

\subsection{Graph-induced non-parametric feature synthesis model}

\textbf{DFS.} The graph-induced non-parametric feature synthesis approach follows a different paradigm compared to end-to-end RDL models. Instead of learning parameters through gradient descent, DFS models leverage graph topology and statistical aggregation to synthesize features. We modularize the DFS framework into the following components: feature aggregation strategy, propagation depth, model backbone, and hyperparameters. The design choices and rationales are detailed below.

\begin{table}[ht]
  \centering
  \caption{Design space of graph-induced non-parametric feature synthesis (DFS) models. Grid means the module is selected from a predefined grid, while random means the module is randomly sampled.}
  \label{tab:dfs_design_space}
  \begin{tabularx}{\linewidth}{@{}l X@{}}
    \toprule
    Module name         & Possible choices                                                                                                              \\
    \midrule    
    Feature aggregation strategy (fixed) & Mean, sum, max, min, count, weighted mean, target encoding \\
    
    Propagation depth (grid) & Number of hops (1 = relation-agnostic): 1, 2, 3 \\
        
    Downstream predictor (grid) & TabPFN~\citep{hollmann2022tabpfn-8c2}, FT-Transformer, LightGBM \\

    Hyper-parameters (random) & Learning rate, weight decay, scheduler gamma, hidden dimension, number of attention heads, normalization\\
    
    Training objective & Classification tasks: cross-entropy loss; Regression tasks: MSE or MAE \\
    \bottomrule
  \end{tabularx}
\end{table}

\begin{enumerate}
    \item Feature aggregation strategies determine how information flows through the graph structure. We follow \citet{wang2024dbinfer} to adopt a fixed set of aggregation strategies. 
    \item Propagation depth controls the scope of information aggregation. When the number of hops is 1, the model becomes relation-agnostic and only uses entity-level features. 
    \item We consider three typical types of downstream task predictors: tabular foundation model, gradient boosting tree, and neural networks.
    \item Training objectives are task-dependent and align with the evaluation metrics used in the benchmark datasets.
\end{enumerate}

The detailed hyper-parameter search space is presented as follows: LightGBM and TabPFN, in general, do not require many hyper-parameters, so model-related hyper-parameters are only applied to FT-Transformer.

\begin{lstlisting}[language=Python]
DFS_SEARCH_SPACE = {
    ## dfs_layer will go through a grid search
    "dfs_layer": [1,2,3],
    "model_type": ["tabpfn", "ft_transformer", "lgbm"],
    "batch_size": [128, 256],
    "model_config": {
        ## only for ft_transformer
        "hidden_size": [128, 256, 512],
        "dropout": [0.0, 0.5],
        "num_layers": [1, 2, 3, 4],
        "attn_dropout": [0.0, 0.5],
        "num_heads": [1, 4],
        "normalization": ["layernorm", "batchnorm", 'none'],
        "loss_fn": {
            "binary_classification": ["bce"],
            "regression": ["mse", "mae"],
            "ranking": ['bce', 'bpr'],
            "multiclass_classification": ["ce"]
        }
    }, 
    "optimizer_config": {
        "lr": [1e-3, 1e-4], 
        "weight_decay": [0.0, 1e-5],  
        "scheduler": ["exponential"], 
        "gamma": [.8, .9, 1.]
    }
} 
\end{lstlisting}

\subsection{Supplementary experimental details}
\label{app: sup-exp}

\subsubsection{Experimental results for recommendation tasks}

Here, we discuss the recommendation tasks skipped in the main text. First, we want to emphasize that the design space of recommendation tasks is much smaller since DFS-based methods cannot work well on recommendation tasks. The main reason is that the recommendation is more about capturing the collaborative signal across pairs of entities, which goes beyond the feature synthesis patterns of DFS. To make DFS work on recommendation tasks, we need to design common neighborhood-based or path-based features, which makes it no longer ``automatic'' but requires substantial feature engineering efforts.

In terms of RDL, we also need to emphasize that only a small portion of graph-related models can work under the RDB settings. Revisiting the traditional link prediction tasks on OGB~\citep{hu2020open-aaa}, the positive and negative sample pairs are usually pre-defined, with negative samples coming only from a small portion of the whole set. This makes it possible to use pairwise models like NCN~\citep{wang2024neural} and SEAL~\citep{zhang2018link}. However, in RDB settings, the candidate set is usually the entire target table, which makes it impossible to use these pairwise models. That's why NBFNet~\citep{zhu2021neural}, a source-only model, is first considered in \citet{yuan2024contextgnn-b14}. Such a scalability problem also affects the implementation of vanilla GNN. Unlike using a link-level sampler in \citet{wang2024dbinfer}, we have to use two node-level loaders, one for the source type and one for the target type, for representation computation. These properties make it only possible to use vanilla GNN, shallow embedding, NBFNet, or a combination of them in the current stage. 

We then present the HPO experiments based on these methods. As shown in Figure~\ref{fig:rec-hpo}, we observe the following phenomena:

\begin{enumerate}[noitemsep,topsep=0pt,parsep=0pt,partopsep=0pt,leftmargin=2em]
  \item Overall, node-based loader dominates the link-based loader (or more accurately, the loader based on source and target types). One potential reason is that the node-based loader is closer to the idea of path-based retrieval, which is more effective in RDB recommendation tasks with rich relational paths. A potential exception is the \textsc{rel-amazon} datasets, which we do not use here. For these kinds of datasets whose path patterns are super sparse, the path-based collaborative signal may live in distant neighbors. As a result, on these tasks, we typically need a neighbor loader with more than $6$ hops with dense neighbors to get good performance. 
  \item For the number of layers, unlike entity-level tasks, it presents that the deeper the better within a certain range. This is because, for path-based retrieval, deeper layers can capture more distant signals. 
  \item For message passing designs, it is also somewhat different from the phenomenon in entity-level tasks. Here, Relgnn presents clearly better performance. The reason is that there are some tables with multiple foreign keys. Semantically, these tables are closer to an edge, while in the PK-FK graphs, they are treated as nodes. RelGNN can simulate transforming these tables into hyperedges, and thus makes the model capture more distant signals. HGT and PNA are better at capturing feature interactions, which are more important for entity-level tasks.
  \item For structural features, partial labeling tricks of NBFNet are more effective. One noteworthy phenomenon is that on many tasks, a node-level loader without any structural features can also deliver good performance. The reason is that the original features in the graph can act as implicit type embeddings, which makes a multi-source path-based retrieval. 
  \item For readout functions, contextgnn does not always bring a performance boost. However, it will not degrade performance either. This is also based on the sparsity of path patterns. 
\end{enumerate}

\begin{wraptable}[13]{r}{0.42\linewidth}
\vspace{-17pt} 
\caption{\small{Validation-selected and test-selected performance gap. For RDL, we show the top 2 architectures with the largest gap.}}
\label{tab:gap-flat}
\resizebox{\linewidth}{!}{
\begin{tabular}{@{}lrrr@{}}
\toprule
\textbf{Method name} & \textbf{Mean test clf perf} & \textbf{Mean val clf perf} & \textbf{Mean gap} \\
\midrule
RDL Overall     & 78.21 & 76.73 & 1.49 \\
Sage (top1)            & 76.90 & 72.77 & 4.13 \\
HGT (top2)            & 77.19 & 74.95 & 2.24 \\
DFS overall     & 76.90 & 75.22 & 1.68 \\
TabPFN          & 75.91 & 75.70 & \textbf{0.21} \\
FT-Transformer  & 75.87 & 74.45 & 1.42 \\
\midrule
\textbf{Method name} & \textbf{Mean test reg perf} & \textbf{Mean val reg perf} & \textbf{Mean gap} \\
\midrule
RDL Overall     & 6.9292 & 7.0578 & 0.1287 \\
HGT (top1)            & 6.9407 & 7.1036 & 0.1630 \\
PNA (top2)            & 7.1376 & 7.1795 & 0.0419 \\
DFS overall     & 3.6881 & 3.7135 & 0.0254 \\
TabPFN          & 3.7544 & 3.7692 & \textbf{0.0148} \\
FT-Transformer  & 4.0630 & 4.0883 & 0.0254 \\
\bottomrule
\end{tabular}
}
\vspace{-0.5\baselineskip}
\end{wraptable}

\paragraph{Takeaways for recommendation.}
Across RelBench-style recommendation tasks, ContextGNN is a robust default that delivers competitive performance with modest tuning. This largely reflects task properties: path-aware retrieval with shallow ID embeddings plus GNN context works well when collaborative signals are captured via multi-hop relational paths. On some other recommendation tasks on bipartite graphs, two-tower (dual-encoder) architectures may scale training and inference more effectively and simplify candidate generation, though they typically require task-specific components (e.g., hard-negative mining, retrieval infrastructure, and reranking) to reach top accuracy. Overall, these observations suggest that for recommendation, fully automatic architecture design may be less effective than crafting a task-tailored framework—consistent with prevailing industrial practice.

\begin{figure*}[htbp]
  \centering
  \includegraphics[width=\textwidth]{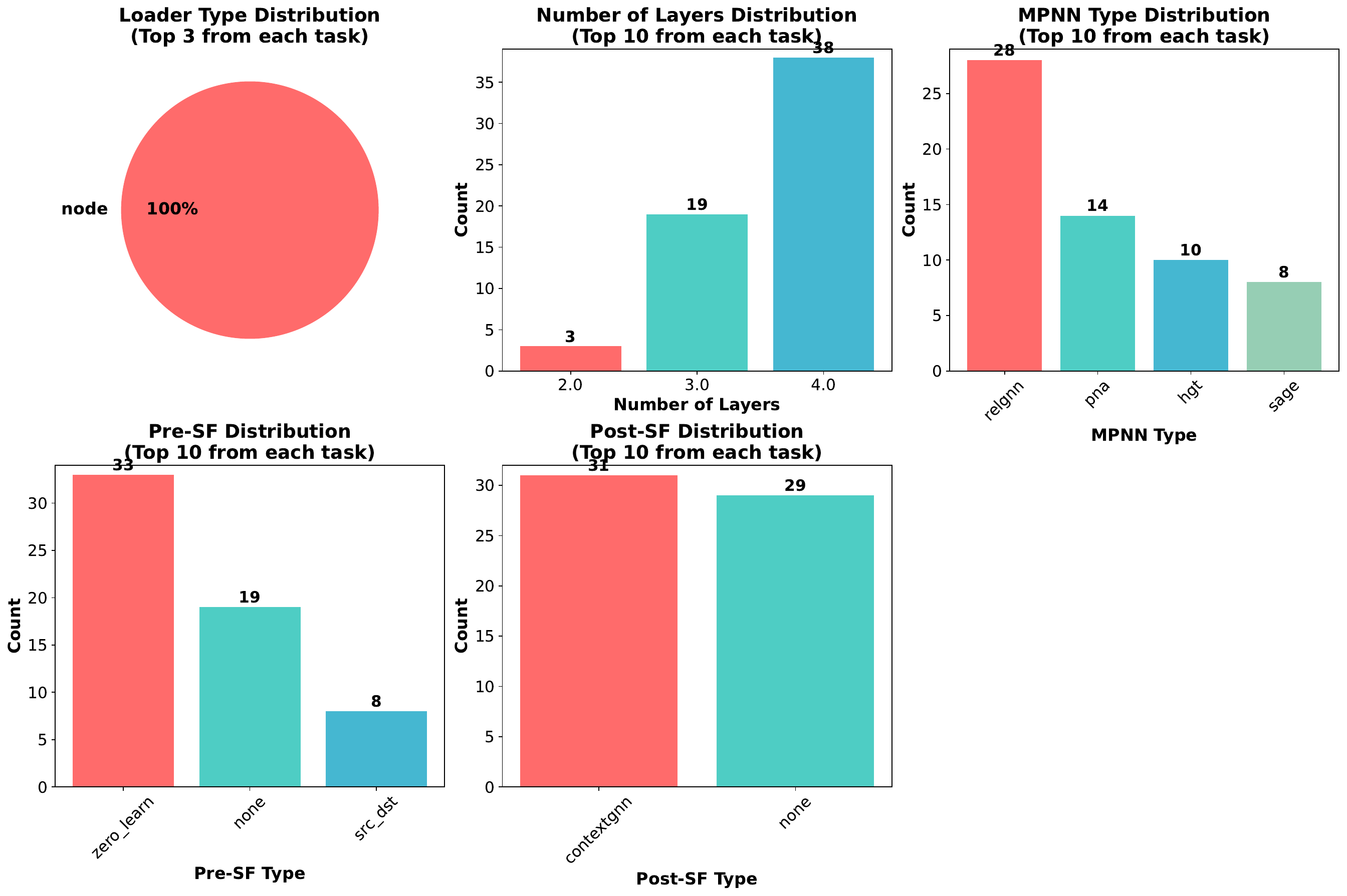}
  \caption{HPO results for recommendation tasks.}
  \label{fig:rec-hpo}
\end{figure*}

\subsubsection{Supplementary experimental results for the main text}

\textbf{Full experimental results for Figure~\ref{fig:figure31}.} The full result for Figure~\ref{fig:figure31} is presented in Table~\ref{tab: full-figure31}.

\begin{table}[htbp]
\centering
\caption{Full experimental results for Figure~\ref{fig:figure31}.}
\label{tab: full-figure31}
\resizebox{\textwidth}{!}{%
\begin{tabular}{@{}ll*{12}{r}@{}}
\toprule
Task & Type & RelGNN & RelGT & GraphSAGE & Rel-LLM & KumoRFM (icl) & KumoRFM (fine-tuned) & RDL (val-selected) & RDL (ours) & DFS (val-selected) & DFS (ours) & Best (ours) & Griffin \\
\midrule
driver-top3       & classification & 85.69 & 83.52 & 75.54 & 82.22 & 91.07 & 99.62 & 82.41 & 85.94 & 84.70 & 85.71 & 85.94 & 77.95 \\
driver-dnf        & classification & 75.29 & 75.87 & 72.62 & 77.15 & 82.41 & 82.63 & 74.35 & 77.20 & 76.89 & 79.42 & 79.42 & 70.91 \\
driver-position   & regression     & 3.798 & 3.920 & 4.022 & 3.967 & 2.747 & 2.731 & 4.0491 & 3.8029 & 3.3660 & 3.2730 & 3.2730 & 4.2000 \\
user-churn        & classification & 70.93 & 69.27 & 69.88 & 70.55 & 67.71 & 71.23 & 70.98 & 70.98 & 65.29 & 68.23 & 70.98 & 68.04 \\
item-sales        & regression     & 0.0540 & 0.0536 & 0.0560 & 0.0520 & 0.0400 & 0.0340 & 0.0511 & 0.0509 & 0.0780 & 0.0750 & 0.0509 & 0.0810 \\
post-votes        & regression     & 0.0650 & 0.0654 & 0.0650 & 0.0620 & 0.0650 & 0.0650 & 0.0665 & 0.0651 & 0.0680 & 0.0660 & 0.0651 & 0.0622 \\
user-engagement   & classification & 90.75 & 90.53 & 90.59 & 91.21 & 87.09 & 90.70 & 88.95 & 90.56 & 78.47 & 87.28 & 90.56 & 87.56 \\
user-badge        & classification & 88.98 & 86.32 & 88.86 & 89.64 & 80.00 & 89.86 & 88.41 & 88.51 & 85.17 & 86.47 & 88.51 & 85.99 \\
user-repeat       & classification & 79.61 & 76.09 & 76.89 & 79.26 & 76.08 & 80.64 & 81.25 & 82.89 & 77.20 & 79.26 & 82.89 & 77.93 \\
user-ignore       & classification & 86.18 & 81.57 & 81.62 & 83.74 & 89.20 & 89.43 & 83.66 & 86.77 & 77.20 & 84.43 & 86.77 & 82.35 \\
user-attendance   & regression     & 0.2380 & 0.2500 & 0.2580 & 0.2510 & 0.2640 & 0.2380 & 0.2397 & 0.2397 & 0.2630 & 0.2380 & 0.2380 & 0.3336 \\
user-visits       & classification & 66.18 & 66.78 & 66.20 & 67.01 & 64.85 & 78.30 & 66.77 & 66.87 & 65.29 & 66.74 & 66.87 & 64.68 \\
user-clicks       & classification & 68.23 & 68.30 & 65.90 & 66.74 & 64.11 & 66.83 & 67.16 & 68.77 & 62.34 & 69.19 & 69.19 & 63.30 \\
ad-ctr            & regression     & 0.0370 & 0.0345 & 0.0410 & 0.0370 & 0.0350 & 0.0340 & 0.0346 & 0.0340 & 0.0380 & 0.0370 & 0.0340 & 0.0639 \\
study-outcome     & classification & 71.24 & 68.61 & 68.60 & 71.04 & 70.79 & 71.76 & 71.41 & 74.13 & 70.59 & 71.82 & 74.13 & 69.08 \\
study-adverse     & regression     & 44.681 & 43.990 & 44.473 & 43.682 & 58.231 & 44.225 & 44.5706 & 43.9880 & 49.9500 & 44.1100 & 43.9880 & 45.2100 \\
site-success      & regression     & 0.3010 & 0.3260 & 0.4000 & 0.3970 & 0.4170 & 0.3010 & 0.3932 & 0.3236 & 0.3910 & 0.3490 & 0.3236 & 0.3765 \\
\bottomrule
\end{tabular}
}
\end{table}

\textbf{Influence of micro-level design choices.} Looking further into the influence of micro-level architecture choices shown in Figure~\ref{fig:figure32},  we can observe that: (1) different design choices present in the top performing configurations, underscoring the importance of architecture design; (2) compared to older models like HGT and PNA, RelGNN does not present advantages in terms of prediction performance, which inspires us to revisits the wisdom of past research. (3) For RDL-based methods, learnable embeddings are mainly required to achieve top performance. (4) For DFS-based methods, the FT-Transformer works better for large-scale tasks, while TabPFN can well fit small-scale ones. Tree-based methods, such as LightGBM, are not optimal in most cases; therefore, we do not consider them in the following text.

\begin{figure*}[htbp]
    \centering
    \includegraphics[width=\textwidth]{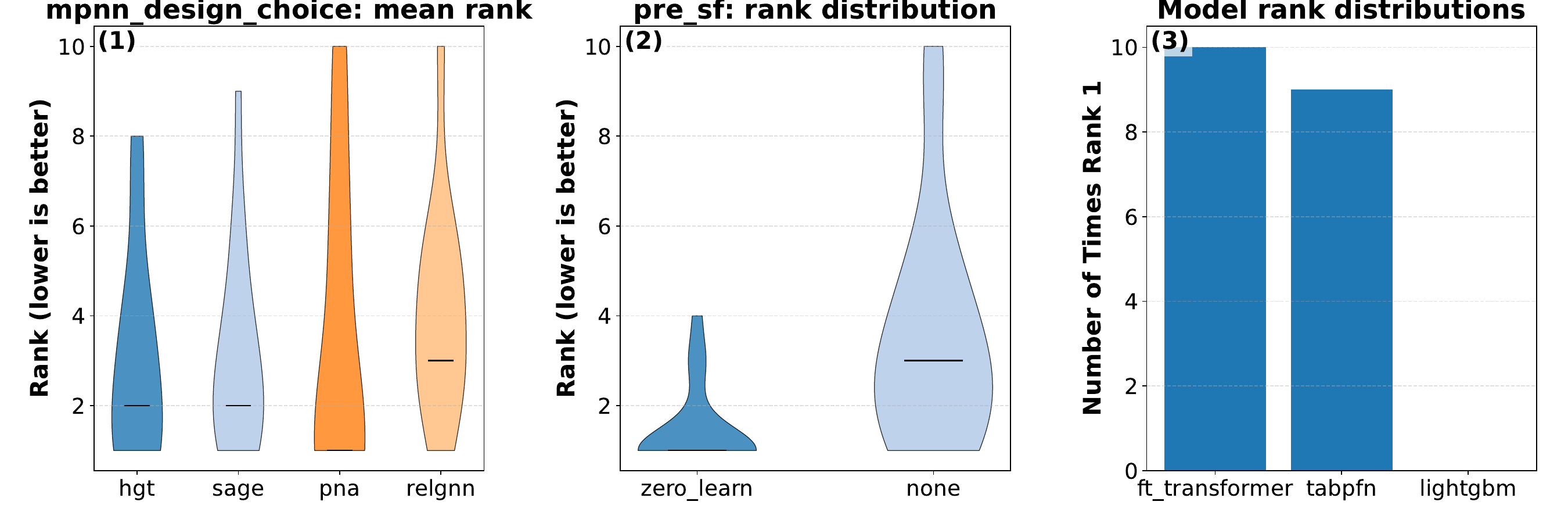}
    \caption{Relationship between test performance ranking and micro-level architecture choices. For RDL-based methods, we filter the original search result with the top $10$ performing configurations on each task. For DFS-based methods, we filter the original search result with the top $1$ performing configurations on each task. The best configurations are selected based on test performance directly. ``zero\_learn'' is the labeling trick adopted by NBFNet. (a) A violin plot of MPNN types shows that PNA achieves the best mean ranking across different tasks. (b) When comparing labeling tricks, although equivariant models appear more frequently in top rankings, a labeling trick is still needed to achieve the top spot. (c) For DFS, ft\_transformer and tabpfn present unique strengths, where the former can leverage more training samples, and the latter usually works better in small-scale settings and can conduct in-context learning.}
    \label{fig:figure32}
\end{figure*}

\textbf{Performance gap between validation-selected and test-selected configurations.} As shown in Table~\ref{tab:gap-flat}, we can see that both RDL and DFS suffer from this performance gap. Specifically, RDL presents a much larger gap for regression tasks. TabPFN, the tabular foundation model utilizing in-context learning for inference, shows an advantage in mitigating such performance drift.  

\textbf{Analysis of affinity-based features.} Here, we show some empirical results on the analysis of affinity-based features and empirical performance. For each task from the model performance bank, we have three task-level \emph{anchor} scores (TabPFN, RandomSAGE, RandomNBFNet) and mean test performance for two model families (RDL, DFS). Anchors are constant within a task; performances are task-wise means over validated runs. For RDL we also compare two preprocessing options, \texttt{pre\_sf}\,$\in\,$\{\texttt{zero\_learn}, \texttt{none}\}; define the per-task difference $\Delta=\mathrm{RDL}_{\texttt{zero\_learn}}-\mathrm{RDL}_{\texttt{none}}$.

\emph{(i) RDL vs DFS from TabPFN vs graphs.} Using all tasks, a log–log fit shows
\[
\log\!\bigl(\mathrm{RDL}/\mathrm{DFS}\bigr)\;\approx\;0.091\;-\;0.262\,\log\!\bigl(\mathrm{TabPFN}/\mathrm{NBFNet}\bigr)\quad(R^{2}\!\approx\!0.58,\ n\!=\!19),
\]
so when TabPFN exceeds the graph anchors, DFS tends to outperform RDL; simple thresholds $\mathrm{TabPFN}/\mathrm{NBFNet}\ge 1.10$ or $\mathrm{TabPFN}/\mathrm{SAGE}\ge 1.18$ classified the winner at about $79\%$ accuracy. \emph{(ii) RDL \texttt{pre\_sf} from graph–graph ratio.} With $R=\max(\mathrm{NBFNet})/\max(\mathrm{SAGE})$, the linear association with $\Delta$ is small (Pearson $\approx -0.114$, $n=19$), but as a one-bit chooser it is useful: AUC$(\texttt{zero\_learn better})\approx 0.718$, and the rule $R\ge 0.977\Rightarrow$ choose \texttt{zero\_learn} (else \texttt{none}) attains $\sim 0.789$ accuracy (base rate $\sim 0.684$ favoring \texttt{zero\_learn}).

\textbf{Visualization of loss landscape.} Our first-step analysis is to plot the loss landscape of a series of models presenting different val-selected and test-selected performance gaps. An example is shown in Figure~\ref{fig:two-subfigs}. We can see that the DFS, which generalizes better, presents a much flatter loss landscape. 

\begin{figure}[htbp]
  \centering
  \begin{subfigure}[t]{0.45\textwidth}
    \centering
    \includegraphics[width=\linewidth]{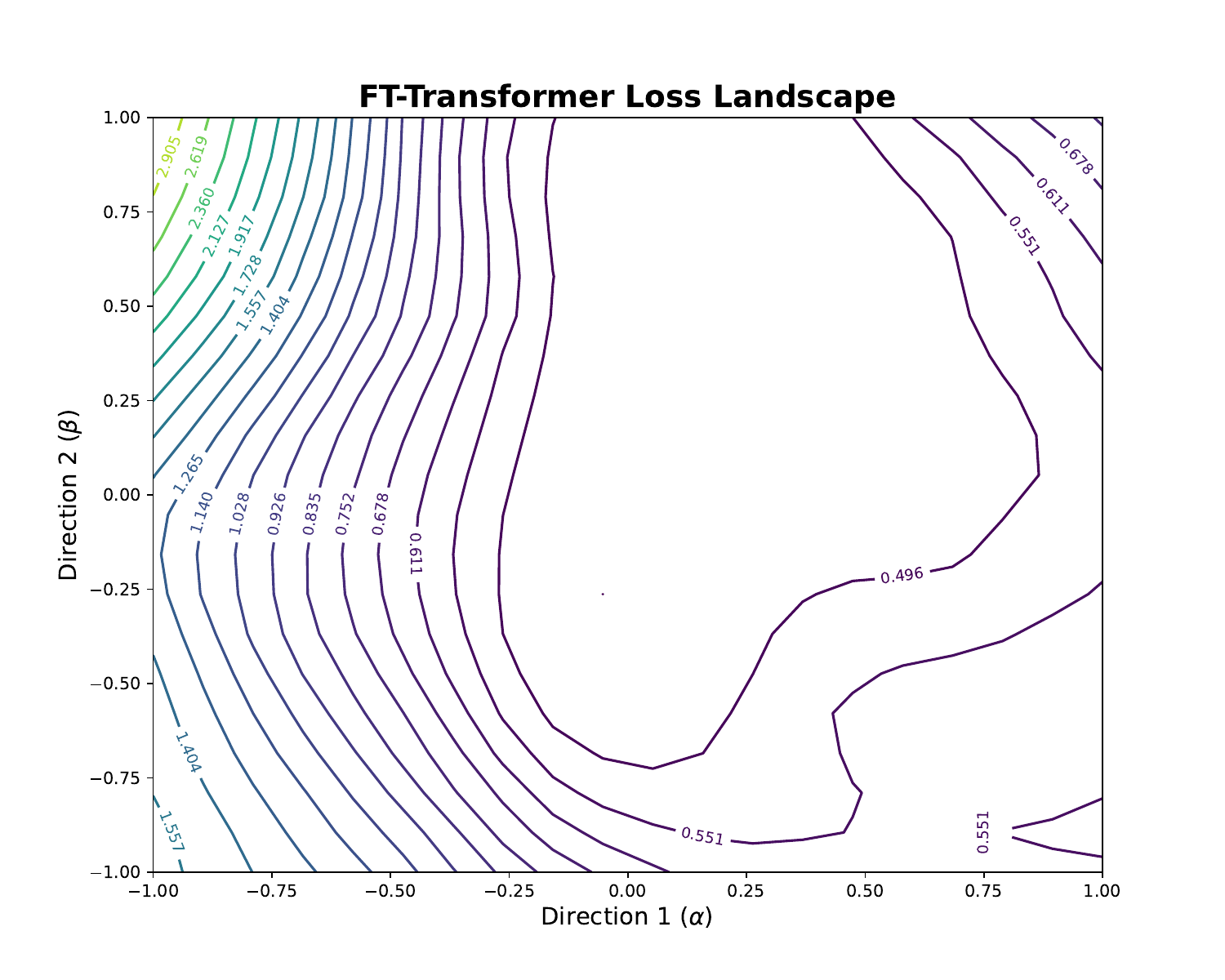}
    \caption{Loss landscape of DFS + FT-Transformer on \textsc{driver-top3}}
    \label{fig:sub-left}
  \end{subfigure}
  \hfill
  \begin{subfigure}[t]{0.45\textwidth}
    \centering
    \includegraphics[width=\linewidth]{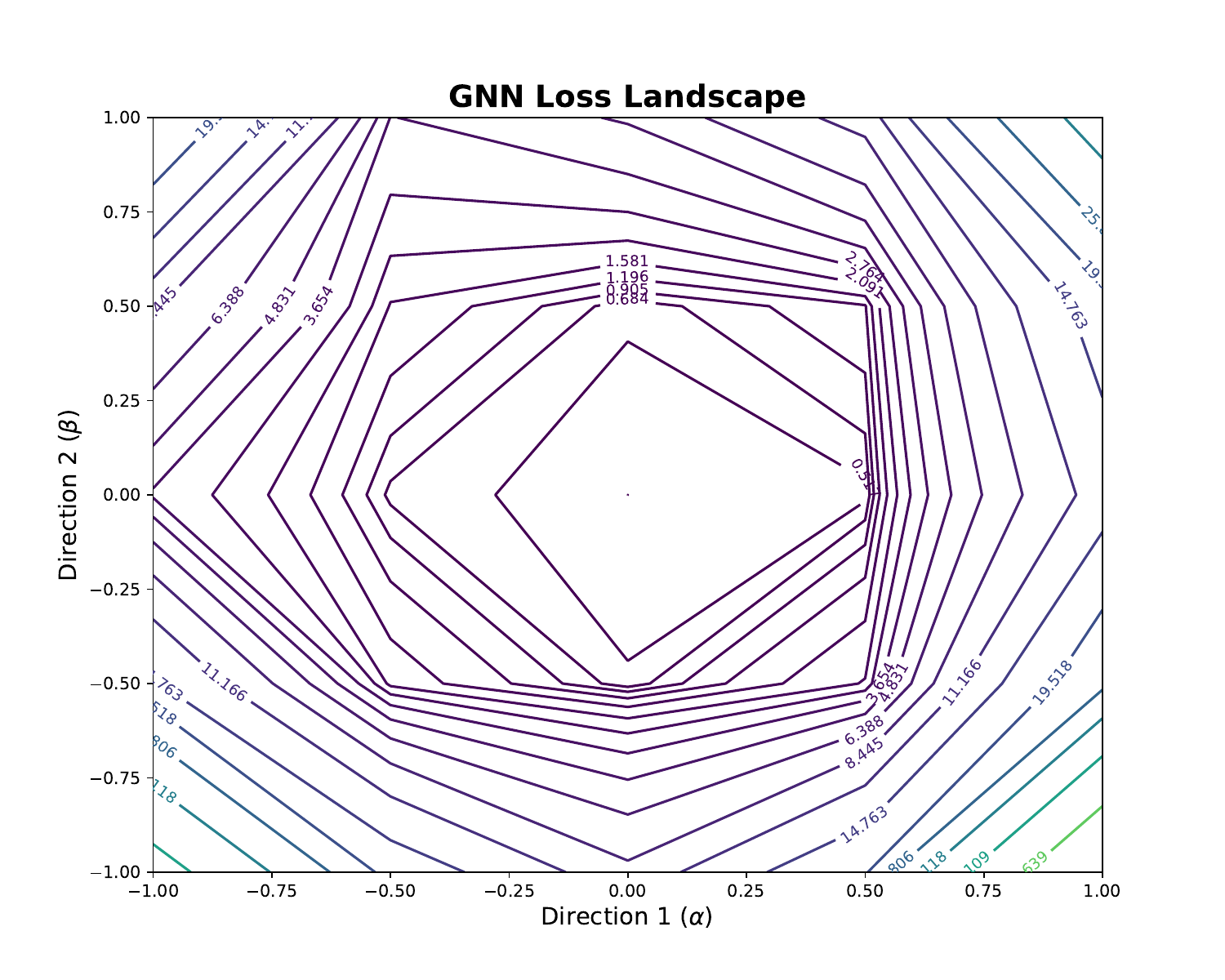}
    \caption{Loss landscape of RDL on \textsc{driver-top3}}
    \label{fig:sub-right}
  \end{subfigure}
  \caption{Loss landscapes}
  \label{fig:two-subfigs}
\end{figure}

\subsection{Efficiency}
\label{app: efficiency}

In the main text, we skip the discussion of efficiency-related concerns, such as running time and memory consumption. One reason is that efficiency depends on the backbone implementation. For example, we implement SQL using in-memory databases in this work. One cannot say DFS or RDL is more efficient merely based on this implementation. In reality, DFS can potentially be accelerated via tools like Spark. Nonetheless, we still present efficiency-related results here, with the following main contents: (1) Average running time of RDL and DFS pipelines, which includes the time for model training and dataset pre-processing. (2) The way to extend our current methods to incorporate efficiency-related concerns. 

As shown in Table~\ref{tab:eff-times}, we consider the running time of two representative tasks: driver-dnf and study-outcome. The former database is a small-scale one, while the latter contains lots of columns. We consider the simple RDL models SAGE and the complicated ones HGT. For DFS, we consider three propagation depths: 1, 2, and 3. We can see that DFS is relatively more efficient when the propagation depth is small. The main backbone is just the feature encoder part of the RDL, so it will be much faster during the inference stage. Moreover, without our proposed PCA compression strategy, DFS is usually unusable for large-scale tasks. Generally, both RDL and DFS do not meet significant efficiency concerns. 
\begin{table}[htbp]
\centering
\caption{Preprocessing and training times by method for driver-dnf and study-outcome.}
\label{tab:eff-times}
\resizebox{\textwidth}{!}{%
\begin{tabular}{@{}l*{10}{c}@{}}
\toprule
& \multicolumn{5}{c}{driver-dnf} & \multicolumn{5}{c}{study-outcome} \\
\cmidrule(lr){2-6}\cmidrule(l){7-11}
& RDL (SAGE) & RDL (HGT) & DFS-1 (no p/p) & DFS-2 (no p/p) & DFS-3 (no p/p)
& RDL (SAGE) & RDL (HGT) & DFS-1 (no p/p) & DFS-2 (no p/p) & DFS-3 (no p/p) \\
\midrule
Preprocessing time
& 60 s & 60 s & 7 s/12 s & 18 s/17 s & 310 s/41 s
& 240 s & 240 s & 240 s/36 s & 353 s/48 s & 965 s/95 s \\
Training time (per epoch)
& 6 s & 8 s & $<1$ s/$<1$ s & $<1$ s/$<1$ s & $<1$ s/$<1$ s
& 10 s & 11 s & $<1$ s/$<1$ s & $<1$ s/$<1$ s & $<1$ s/$<1$ s \\
\bottomrule
\end{tabular}
}
\end{table}

To extend our current methods to incorporate efficiency-related concerns, we consider the following two strategies: (1) Rule-of-thumb. Since we know the number of training samples, when the scale is limited, then directly utilizing TabPFN and DFS is usually the most efficient approach. Moreover, for RDL, HGT is obviously the most expensive model considering its complicated attention mechanism. (2) Joint optimization of efficiency and effectiveness. We can consider a multi-objective optimization framework, where we can consider the validation performance and training time as two objectives. First, we can train a meta-model to predict the training time based on architecture designs. For example, we can list the following efficiency-related hyperparameters: number of layers, hidden dimension, number of attention heads, and batch size. Then, we can train a model whose input feature is the hyper-parameter configuration, and the output is the training time. To estimate the pre-processing time of DFS, it is approximately proportional to the number of SQL operations multiplied by the size of training tables. We then demonstrate one formula to do joint optimization of efficiency and effectiveness.

\[
\begin{aligned}
(\pi^*,\theta^*)
&= \argmax_{\substack{\pi\in\{\mathrm{RDL},\,\mathrm{DFS}\}\\ \theta\in\mathcal{S}_\pi}}
\ \widehat{\mathrm{Perf}}(\theta,\pi) \\
&\quad - \lambda\cdot\frac{{1}[\pi=\mathrm{DFS}]\,c_{\text{sql}}\!\cdot\!\mathrm{ops}(\theta)\!\cdot\!\mathrm{rows}}{T_{\text{ref}}} \\
&\quad - \lambda\cdot\frac{\mu_{\text{time}}(\theta,\pi)+\beta\,\sigma_{\text{time}}(\theta,\pi)}{T_{\text{ref}}}
\end{aligned}
\]

$\pi$ chooses the pipeline (RDL vs.\ DFS) and restricts the search space to $\mathcal{S}_\pi$; $\widehat{\mathrm{Perf}}(\theta,\pi)$ is predicted validation effectiveness; $\lambda>0$ trades time for performance; $T_{\text{ref}}$ normalizes time; the DFS pre–processing cost is activated by ${1}[\pi=\mathrm{DFS}]$ and modeled as $c_{\text{sql}}\cdot\mathrm{ops}(\theta)\cdot\mathrm{rows}$; $\mu_{\text{time}}(\theta,\pi)$ and $\sigma_{\text{time}}(\theta,\pi)$ are the meta–model’s mean and uncertainty for training time; $\beta\ge0$ adds risk aversion to slow/uncertain runs.
$\pi$ can be trained based on a model performance bank similar to the meta-predictor.

\end{document}